\Crefname{figure}{Figure}{Figures}
\crefname{figure}{Figure}{Figures}
\crefname{example}{Example}{Example}
\crefname{theorem}{Theorem}{Theorem}
\crefname{corollary}{Corollary}{Corollary}
\crefname{lemma}{Lemma}{Lemma}
\crefname{proposition}{Proposition}{Proposition}
\crefname{assumption}{Assumption}{Assumption}
\crefname{section}{Section}{Section}
\crefname{algorithm}{Algorithm}{Algorithm}
\newlist{propenum}{enumerate}{1} % also creates a counter called 'propenumi'
\setlist[propenum]{label=\alph*{\rm)}, ref=\theproposition(\alph*)}
\newlist{corenum}{enumerate}{1} % also creates a counter called 'propenumi'
\setlist[corenum]{label=\alph*{\rm)}, ref=\thecorollary(\alph*)}
\newlist{lemenum}{enumerate}{1} % also creates a counter called 'propenumi'
\setlist[lemenum]{label=\alph*{\rm)}, ref=\thelemma(\alph*)}
\declaretheorem[name=Theorem,numberwithin=section]{theorem}
\declaretheorem[name=Definition,style=definition,numberlike=theorem]{definition}
\declaretheorem[name=Assumption,numberlike=theorem]{assumption}
\declaretheorem[name=Lemma,numberlike=theorem]{lemma}
\newcommand{\bL}{\boldsymbol{L}}
\newcommand{\wt}[1]{\widetilde{#1}}
\newcommand{\init}{\mathrm{init}}
\definecolor{blu}{rgb}{0,0,1}
\definecolor{gre}{rgb}{0,.5,0}
\definecolor{red}{rgb}{1,0,0}
\title{
  A dual approach for federated learning
}
\author[1]{Zhenan Fan\footnote{Contributed equally. Authors are listed in alphabetical order.}}
\newcommand\CoAuthorMark{\footnotemark[\arabic{footnote}]}
\author[2]{Huang Fang\protect\CoAuthorMark}
\author[1]{Michael P. Friedlander}
\affil[1]{University of British Columbia
        \texttt{ \{zhenanf, mpf\}@cs.ubc.ca} }
\affil[2]{Baidu Research \texttt{ fanghuang@baidu.com } }
\date{\today}
\begin{document}

  \maketitle

\begin{abstract}
    % Federated optimization stays at the core of federated learning. By looking at a dual formulation of the federated optimization problem,
    We study the federated optimization problem from a dual perspective and propose a new algorithm termed federated dual coordinate descent (FedDCD), which is based on a type of  coordinate descent method developed by Necora et al.~\emph{[Journal of Optimization Theory and Applications, 2017]}. Additionally, we enhance the FedDCD method with inexact gradient oracles and Nesterov’s acceleration. We demonstrate theoretically that our proposed approach achieves better convergence rates than the state-of-the-art primal federated optimization algorithms under certain situations. Numerical experiments on real-world datasets support our analysis.
\end{abstract}

\section{Introduction} \label{sec:intro}

With the development of artificial intelligence, people recognize that many powerful machine learning models are driven by large distributed datasets, e.g., AlphaGo~\citep{silver2016mastering} and AlexNet~\citep{krizhevsky2012imagenet}. In many industrial scenarios, training data are maintained by different organizations, and transporting or sharing the data across these organizations is not feasible because of regulatory and privacy considerations~\citep{li2020review}. Therefore there is increasing interest in training machine learning models that operate without gathering all data in a single place.
Federated learning (FL), initially proposed by~\citet{mcmahan2017communication} to train models on decentralized data from mobile devices, and later extended by~\citet{yang2019federated} and \citet{kairouz2019advances}, is a training framework that allows multiple clients to collaboratively train a model without sharing data. 

The learning process in FL can be formulated as a distributed optimization problem, which is also known as federated optimization (FO). Assume there are $N$ clients and each client $i$ maintains a local dataset $\Dscr_i$. FO aims to solve the  empirical-risk minimization problem
\begin{equation} \label{eq:primal}
  \minimize{w\in\Real^d}\enspace F(w) \coloneqq \sum_{i = 1}^N f_i(w) \tag{P}
\end{equation}
in a distributed manner, where $w$ is the global model parameter and each local objective $f_i:\Real^d \to\Real$ is defined by 
\begin{equation*} %\label{eq:local_objective}
    f_i(w) = \ell(w; \Dscr_i),
\end{equation*}
where $\ell(\cdot, \Dscr_i)$ is a convex and differentiable loss function for each $\Dscr_i$. 

As characterized and formalized by~\citet{wang2021field},~\citet{li2020federated} and~\citet{li2019convergence}, there are several important characteristics that distinguish FO from standard machine learning and distributed optimization. 
\begin{assumption}[Governing assumptions] \label{assum:govern}
The following assumptions hold for FO. 
  \begin{itemize}
    \item \textbf{Slow Communication.}  Communication between clients and a central server is assumed to be the main bottleneck and dominates any computational work done at each of the clients. 
    \item \textbf{Data Privacy.} Clients want to keep their local data private, i.e., their data can not be accessed by any other client nor by the central server.
    \item \textbf{Data heterogeneity.} The training data are not independent and identically distributed (i.i.d.). In other words, a client’s local data cannot be regarded as samples drawn from single overall distribution.
    \item \textbf{Partial Participation.} Unlike traditional distributed learning systems, an FL system does not have control over individual client devices, and clients may have limited availability for connection. 
\end{itemize}
\end{assumption}

Most of the previous work in FO focus on directly solving the primal empirical-risk minimization problem \eqref{eq:primal}
\citep{mcmahan2017communication,li2018federated,yuan2021federated,karimireddy2020mime}. The broad approach taken by these FO proposals is based on requiring the clients to independently update local models that are then shared with a central server tasked with aggregating these models. 

Dual approaches for empirical-risk minimization problem \eqref{eq:primal} are well developed under the framework of distributed optimization, and can be traced to dual-decomposition~\citep{zeng2008fast,joachims1999making}, augmented Lagrangian~ \citep{jakovetic2014linear} and alternating direction method of multipliers (ADMM) \citep{boyd2011distributed,wei2012distributed}. More recent approaches include ingel-step and multi-step dual accelerated (SSDA and MSDA) methods \citep{Scaman2017OptimalAF}. These methods, however, can not be directly applied under the FO setting, because they violate some of \Cref{assum:govern}. For example, the MSDA algorithm~\citep{Scaman2017OptimalAF} enjoy optimal convergence rates, but requires full clients participation in every round, which is unrealistic under \Cref{assum:govern}. 

Our approach, in contrast, is based on the dual problem, which is a separable optimization problem with a structured linear constraint~\eqref{eq:dual}. We show that the random block coordinate descent method for problems with linear constraint proposed by \citet{necoara2017random}, is especially suitable for the FL setting. Because it is important to control the amount of local computation carried out by each client, we show how to modify this method to accommodate inexact gradient oracles. We also show how Nesterov's acceleration can be used to decrease overall complexity. As a result, we obtain convergence rates that are better than other state-of-the-art FO algorithms in certain scenarios. 

Our contributions can be summarized as follows. 
\begin{enumerate}
    \item We tackle the FO problem from a dual perspective and develop a federated dual coordinate descent (FedDCD) algorithm for FO based on the random block coordinate descent (RBCD) method proposed by \citet{necoara2017random}. We show that FedDCD fits very well to the settings of FL. 
    \item We extend the FedDCD with inexact gradient oracles and Nesterov's acceleration. The resulting convergence rates are better than the current state-of-the-art FO algorithms in certain situations.
    \item We develop a complexity lower bound for FO, the lower bound suggests that there is still a gap of $\sqrt{N}$ between the rate of accelerated FedDCD and the lower bound.
\end{enumerate}

\section{Related work}
\label{sec:relatedWork}

Distributed and parallel optimization has been extensively studied starting with the pioneering work from~\citet{Bertsekas89}. In addition to the previous mentioned ADMM, SSD and MSDA methods, popular distributed optimization algorithms include randomized gossip algorithms~\citep{BoydGPS06} and various distributed first-order methods such as the distributed gradient descent~\citep{NedicO09}, distributed dual averaging~\citep{DuchiAW12}, distributed coordinate descent~\citep{RichtarikT16}, and EXTRA~\citep{ShiLWY15}.

Federated optimization~\citep{wang2021field} is a newly emerged research subject that is closely related to centralized distributed optimization. However, most existing distributed optimization algorithms cannot be directly applied to FO because of \Cref{assum:govern}. Because FL problems usually involve a large number of total data points, most existing FO algorithms for solving \eqref{eq:primal} such as mini-batch SGD (MB-SGD)~\citep{WoodworthPS20}, FedAvg (aka. local SGD)~\citep{McMahan17}, FedProx~\citep{li2018federated}, FedDualAvg~\citep{yuan2021federated},  SCAFFOLD~\citep{pmlr-v119-karimireddy20a}, MIME~\citep{karimireddy2020mime} are variants of the SGD algorithm. Methods outside of the SGD framework are not as well developed. 

The method we develop is based on a variation of coordinate descent adapted to problems with structured linear constraints. Such algorithms have been well-studied in the context of kernel support vector machine (SVM)~\citep{lut93,platt1998sequential,libsvm}. We build on the method proposed by \citet{necoara2017random}.

\section{Problem formulation}
\label{sec:problem}

The dual problem of problem \eqref{eq:primal} is given by 
\begin{equation} \label{eq:dual}
  \minimize{y_1, \dots, y_N \in \Real^d}\enspace G(y) \coloneqq \sum_{i=1}^N f_i^*(y_i) \enspace\st\enspace \sum_{i=1}^N y_i = 0, \tag{D}
\end{equation}
where $f_i^*(y) \coloneqq \sup_{w} \langle y, w \rangle - f(w)$ is the convex conjugate function of $f_i$. Throughout this paper, we denote $y^*$ as a solution of problem \eqref{eq:dual}. To obtain this dual problem let $w^*$ denote the optimal solution to the primal problem \cref{eq:primal}. By the first-order optimality condition, we know that 
  \[0 \in \partial\left(\sum_{i = 1}^N f_i\right)(w^*),\]
  which is equivalent to 
  \[w^* \overset{\rm{(i)}}{\in} \partial\left(\sum_{i = 1}^N f_i\right)^*(0) \overset{\rm{(ii)}}{=} \partial\left(\infc_{i=1}^N f_i^*\right)(0) \overset{\rm{(iii)}}{=} \bigcap_{i=1}^N \partial f_i^*(y_i^*),\]
  where $\{y_i^*\}_{i=1}^N$ are the optimal solutions to the problem \eqref{eq:dual}, and (i), (ii) and (iii), respectively, follow from  \citet[Proposition~E.1.4.3, Proposition~E.2.3.2 and Corollary~D.4.5.5]{hiriart-urruty01}.

\subsection{Assumptions and notations}

We make the following standard assumptions on each of the primal objectives $f_i$.

\begin{assumption}[Strong convexity] \label{assum:stronglyCvx}
  There exist $\alpha > 0$ such that
  \[
    f_i(x) \geq f_i(y) + \ip{\nabla f_i(y)}{x-y} + \frac{\alpha}{2} \| x - y \|^2
  \]
  for any $x, y \in \Real^d$ and $\forall i \in [N]$. This also implies that $G$ is $(1/\alpha)$ block-wise smooth.
\end{assumption}

\begin{assumption}[Smoothness] \label{assum:smooth}
  There exist $\beta > 0$ such that
  \[
    f_i(x) \leq f_i(y)+ \ip{\nabla f_i(y)}{x-y} + \frac{\beta}{2} \| x - y \|^2
  \]
  for any $x, y \in \Real^d$ and $\forall i \in [N]$. This also implies that $G$ is $(1/\beta)$ block-wise strongly convex. 
\end{assumption}

These two assumptions are critical because they yield a one-to-one correspondence between the primal and dual spaces, and allow us to interpret each $y_i$ as a local dual representation of the global model $w$. 

Our analysis also applies to the case where the parameters $\alpha$ and $\beta$ vary for each function $f_i$, but here we assume for simplicity, these are fixed for each $f_i$. 

The data-heterogeneity between clients can be measured as the diversity of the function $f_i$, as measured by the gradient. In the convex case, it is sufficient to measure the diversity of functions only at the optimal point $w^*$; see~\citet[Assumption~3a]{koloskova2020unified}. 
\begin{assumption}[Data heterogeneity] \label{assum:dissimilar}
  Let $w^* = \arg\min~ F(w)$. There exist $\zeta>0$ such that for any $i \in [N]$,
  \[
    \left\|\nabla f_i(w^*) \right\| \leq \zeta.
  \] The relationship between $y_i^*$ and $\nabla f_i(w^*)$ implies that $\|y_i^*\| \leq \zeta ~\forall i \in [N]$, where $y^*$ is the solution of problem~\eqref{eq:dual}. 
\end{assumption}

A basic assumption in FL is that the central server does not have control over clients' devices and can not guarantee their per-round participation. Partial participation, therefore, is a respected feature for efficient FL. Here we follow the standard random participation model~\citep{wang2021field,li2020federated,li2019convergence} and assume that there is a fixed number of randomly generated clients participating in each round of the training.

\begin{assumption}[Random partial participation] \label{assum:partial}
    There exist a positive integer $\tau \in \{2,\dots,N\}$, such that in each round, only $\tau$ clients uniformly randomly distributed among the set of all clients, who can communicate with the central server.
\end{assumption}

Now we introduce some notations. For any integer $N$, we denote $[N]$ as the set $\{1,2,\ldots, N\}$. Given $I \subseteq [N]$ and $\{g_i \in \Real^d \mid i \in [N]\}$, we define the concatenation $g_I \in \Real^{Nd}$ as 
% $ g_{I}[(i-1)d + 1 : i\cdot d] = g_i $ if $i \in I$ and $ g_{I}[(i-1)d + 1 : i\cdot d] = 0 $ otherwise.
\[
    g_{I}[(i-1)d + 1 : i\cdot d] = 
    \begin{cases} 
      g_i & i \in I ;\\
      0 & \mbox{otherwise} .
    \end{cases}
\]
Given $I \subseteq [N]$, we define the linear manifold 
\[
    \Cscr_I= \left\{y \in\Real^{Nd} ~\Bigg|~ y_i\in\Real^d, ~\sum_{i\in I}y_i = 0\right\}.
\]
It follows that $\Cscr_{[N]}$ corresponds to the constraint set in \cref{eq:dual}. Let $e_d \in \Real^d$ denote the vector of all ones, and $e_I$ denote vector where $e_i = 1$ if $i \in I$ and $e_i = 0$ elsewhere.  For any positive definite matrix $W\in\Real^{Nd\times Nd}$, we define the weighted norm as $\|x\|_W := \sqrt{x^TWx}$. The projection operator onto the set $\Cscr_I$ with respect to the weighted norm $\|\cdot\|_W$ is defined as
\[
    \proj_{\Cscr_I}^{W}(x) = \argmin_{y} \|y-x\|_W^2 \enspace\st\enspace y \in \Cscr_I.
\]
% We introduce some frequently-used notation. 
% \begin{itemize}
%     \item Given $I \subseteq [N]$ and $\{g_i \in \Real^d \mid i \in [N]\}$, define the concatenation $g_I \in \Real^{Nd}$ as 
%     \[
%     g_{I}[(i-1)n + 1 : in] = 
%         \begin{cases} 
%           g_i & i \in I ;\\
%           0 & \mbox{otherwise} .
%         \end{cases}
%     \]
%     \item Given $I \subseteq [N]$, define the linear manifold 
%     \[\Cscr_I= \left\{y \in\Real^{Nd} ~\Bigg|~ y_i\in\Real^d, ~\sum_{i\in I}y_i = 0\right\}.\]
%     It follows that $\Cscr_{[N]}$ corresponds to the constraints in \eqref{eq:dual}.
%     \item Let $e_d \in \Real^d$ denote the d-vector of all ones. 
%     \item Given $w\in\Real_+^N$, we define $D_w = \diag(\pmb{w})$, where $\pmb{w}\in\Real^{Nd}$ with 
%     \[
%     \pmb{w}[(i-1)d + 1 : id] = w_i e_d
%     \]
%     for all $i \in [N]$.
%     \item The projection operator onto the set $\Cscr_I$ with respect to the weighted norm $\|\cdot\|_W$ is defined as
%     \[
%         \proj_{\Cscr_I}^{W}(x) = \argmin_{y} \|y-x\|_W^2 \enspace\st\enspace y \in \Cscr_I,
%     \]
%     where the weighted norm is defined as $\|x\|_W := \sqrt{x^TWx}$ and $W\in\Real^{Nd\times Nd}$ is any positive definite matrix. 
% \end{itemize}
  
\section{Federated dual coordinate descent}
\label{sec:FedDCD}

\citet{necoara2017random} proposed a random block coordinate descent (RBCD) method for solving linearly constrained separable convex problems such as the dual problem \eqref{eq:dual}. Below we describe how to apply this method in the FL setting, and we refer to the specialization of this algorithm as federated dual coordinate descent (FedDCD).

A training round proceeds as follows. In round $t$, suppose that the local dual representations $y_i^{(t)}$ are dual feasible, i.e., 
\[\sum_{i=1}^N y_i^{(t)} = 0.\]
First, the central server receives the IDs of participating clients $I = \{i_1, \dots, i_\tau\} \subseteq \{1,\dots,N\}$. Next, each participating client computes a local primal model $w_i^{(t)}$, which can also be interpreted as a descent direction for the dual representation, in parallel, i.e., 
\begin{equation} \label{eq:exact_gradient}
    w_i^{(t)} = \nabla f_i^*(y_i^{(t)}) =  \argmin_{w \in \Real^d}\enspace \left\{ f_i(w) - \ip{w}{y_i^{(t)}} \right\}  \text{for all} i \in I.
\end{equation}
In principle, each participating client must exactly minimizes $f_i - \ip{\cdot}{y_i^{(t)}}$, using a potentially expensive procedure, to obtain $w_i^{(t)}$. We show in the next section how the clients may instead produce an approximate primal model $w_i^{(t)}$ using a cheaper procedure.
Then each participating client sends the computed local primal model $w_i^{(t)}$ to the central server. Subsequently, the central server then adjusts the uploaded primal models to make sure that the local dual representations will still be dual feasible after getting updated. Specifically, it will compute new local primal models $\{\hat w_i^{(t)} \mid i \in I\}$ as 
\begin{equation} \label{eq:direction_adjust2}
    \hat w_I^{(t)} = \proj_{\Cscr_I}^{\Lambda}(\Lambda^{-1}w_I^{(t)}), 
\end{equation}
where $\Lambda\in\Real^{Nd \times Nd} \coloneqq \diag( \lambda_1, \ldots, \lambda_N ) \otimes \mathbb{I}_{d\times d}$ is a pre-defined diagonal matrix that usually depends on the clients' local strong convexity parameter. It can be shown that the updated directions have the closed form expressions:
\begin{equation*} %\label{eq:direction_adjust2}
    \hat w_i^{(t)} = \lambda_i^{-1}  w_i^{(t)} - \frac{\lambda_i^{-1}}{\sum_{j \in I} \lambda_j^{-1}}\sum_{j \in I} \lambda_j^{-1} w_j^{(t)}  \text{for all} i \in I. 
\end{equation*}

Finally, the central server will send back each participating client the updated primal models, who will update their local dual representations accordingly, i.e.,
\[y_i^{(t+1)} = y_i^{(t)} - \eta^{(t)}\hat w_i^{(t)},\]
where $\eta^{(t)}$ is the learning rate. \Cref{alg:rbcd} summarizes all of these steps.

\begin{algorithm}[t]
 \DontPrintSemicolon
 \SetKwComment{tcp}{\tiny [}{]}
 \caption{Federated Dual Coordinate Descent (FedDCD)\label{alg:rbcd}}
 \smallskip
 \KwIn{number of participating clients in each round $1<\tau\leq n$; diagonal scaling matrix $\Lambda\in\Real^{Nd \times Nd}$}
 $y_i^{(0)} \gets 0$ for all $i \in [N]$\tcp*{\tiny initialize feasible local dual variables}
 \For{\nllabel{alg-level-set-loop}$t\gets0,1,2,\ldots,T$}{
   $I_t \leftarrow \mbox{ set of } \tau \mbox{ random participating clients }$\tcp*{\tiny random participating clients}
   \For{each client $i\in I_t$ \textbf{in parallel}}{
        Compute local gradient $w_i^{(t)}$ as in \cref{eq:exact_gradient}\tcp*{\tiny local computation}
        Upload local gradient $w_i^{(t)}$ to the central server\tcp*{\tiny upload step} 
   }
   Compute updates $\hat w_i^{(t)}$ for all $i\in I_t$ as in \cref{eq:direction_adjust2}\tcp*{\tiny adjust directions}
   Send the updated directions to the participating clients $I_t$\tcp*{\tiny download step} 
   \For{each client $i\in [N]$ \textbf{in parallel}}{
        \uIf{$i \in I_t$}{
            $y_i^{(t+1)} \leftarrow y_i^{(t)} - \eta^{(t)}\hat w_i^{(t)}$\tcp*{\tiny update} 
        }\Else{
            $y_i^{(t+1)} \leftarrow y_i^{(t)}$\tcp*{\tiny standby}
        } 
   }
 }
 \Return{$w^{(T)} = w_i^{(T)}$}, where $i$ uniform randomly sampled from $[N]$.\tcp*{\tiny primal global model} 
\end{algorithm}

We can obtain a convergence rate for this method by directly applying results derived by \citet{necoara2017random}.

\begin{theorem}[Convergence rate of FedDCD] \label{thm:necoaraConvergenceRate}
    Let $w^{(T)}$ and $y^{(T)}$ be the iterates generated from \Cref{alg:rbcd} after $T$ iterations with the diagonal scaling matrix $\Lambda = \alpha^{-1} \mathbb{I}_{Nd \times Nd}$. If \Cref{assum:stronglyCvx} and \Cref{assum:smooth} hold, then
    \begin{equation}
        \mE \left[ G( y^{(T)} ) - G( y^* ) \right] \leq \left( 1 - \frac{\tau - 1}{N-1} \frac{\alpha}{\beta} \right)^{T} ( G(y^{(0)}) - G(y^*) ), \label{eq:necoaraDualConvergenceStrongCVX}
    \end{equation}
    and
    \begin{equation}
        \mE \left[ \| w^{(T)} - w^* \|^2 \right] 
        \leq \frac{1}{N \alpha^2} \left( 1 - \frac{\tau - 1}{N-1} \frac{\alpha}{\beta} \right)^{T} \|y^*\|^2  \label{eq:necoaraPrimalConvergenceStrongCVX}.
    \end{equation}
    If in addition that \cref{assum:dissimilar} holds, then
    \begin{equation}
        \mE \left[ \| w^{(T)} - w^* \|^2 \right] 
        \leq \frac{1}{\alpha^2} \left( 1 - \frac{\tau - 1}{N-1} \frac{\alpha}{\beta} \right)^{T} \zeta^2. \label{eq:necoaraPrimalConvergenceDissimilar}
    \end{equation}
\end{theorem}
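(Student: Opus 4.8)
The plan is to obtain the dual bound \eqref{eq:necoaraDualConvergenceStrongCVX} by recognizing \Cref{alg:rbcd} as a concrete instance of the random block coordinate descent scheme of \citet{necoara2017random} applied to the dual problem \eqref{eq:dual}, and then to transfer the resulting dual convergence to the primal iterate $w^{(T)}$ through the smoothness of the conjugate functions.

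First I would verify that \eqref{eq:dual} satisfies the structural hypotheses of \citet{necoara2017random}: it is a separable convex program $\min_y \sum_i f_i^*(y_i)$ subject to the single linear coupling constraint $\sum_i y_i = 0$, i.e.\ $y \in \Cscr_{[N]}$, with the natural partition into the $N$ coordinate blocks $y_i \in \Real^d$. By \Cref{assum:stronglyCvx} each $f_i$ is $\alpha$-strongly convex, so each $f_i^*$ is $(1/\alpha)$-smooth and $G$ is $(1/\alpha)$ block-wise smooth; by \Cref{assum:smooth} each $f_i$ is $\beta$-smooth, so each $f_i^*$ is $(1/\beta)$-strongly convex and $G$ is $(1/\beta)$ block-wise strongly convex. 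I would then check that the update \eqref{eq:exact_gradient}--\eqref{eq:direction_adjust2} is exactly their projected block-gradient step: the local models $w_i^{(t)} = \nabla f_i^*(y_i^{(t)})$ are the block gradients of $G$, and the adjustment $\hat w_I^{(t)} = \proj_{\Cscr_I}^{\Lambda}(\Lambda^{-1} w_I^{(t)})$ projects the scaled gradient onto the feasible manifold so that $y^{(t+1)}$ stays in $\Cscr_{[N]}$. The choice $\Lambda = \alpha^{-1}\mathbb{I}$, together with the uniform $\tau$-of-$N$ participation of \Cref{assum:partial}, feeds into their rate expression, whose contraction factor specializes to $1 - \tfrac{\tau - 1}{N-1}\tfrac{\alpha}{\beta}$: here $\tfrac{\tau-1}{N-1}$ is the probability weight attached to a coupled block under uniform $\tau$-subset sampling, and $\tfrac{\alpha}{\beta}$ is the inverse block-condition number (the ratio of block strong convexity $1/\beta$ to block smoothness $1/\alpha$). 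This gives \eqref{eq:necoaraDualConvergenceStrongCVX}.

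Next, for the primal bound \eqref{eq:necoaraPrimalConvergenceStrongCVX}, I would use the companion distance estimate of \citet{necoara2017random}, which for the $(1/\beta)$ block-strongly-convex objective $G$ contracts the iterate distance at the same rate, $\mE[\|y^{(T)} - y^*\|^2] \le (1 - \tfrac{\tau-1}{N-1}\tfrac{\alpha}{\beta})^{T}\,\|y^{(0)} - y^*\|^2$. Since \Cref{alg:rbcd} initializes $y^{(0)} = 0$, the initial distance equals $\|y^*\|^2$. To pass to the primal side I would invoke the optimality relations from \Cref{sec:problem}, namely $w^* = \nabla f_i^*(y_i^*)$ for every $i$ and $w_i^{(T)} = \nabla f_i^*(y_i^{(T)})$, and use that each $\nabla f_i^*$ is $(1/\alpha)$-Lipschitz to obtain $\|w_i^{(T)} - w^*\|^2 \le \alpha^{-2}\|y_i^{(T)} - y_i^*\|^2$ block-wise. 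Averaging over the uniformly sampled output index $i \in [N]$ produces the factor $\tfrac1N$ and turns the right-hand side into $\tfrac{1}{N\alpha^2}\|y^{(T)} - y^*\|^2$; taking total expectation and inserting the distance contraction yields \eqref{eq:necoaraPrimalConvergenceStrongCVX}. Finally, \eqref{eq:necoaraPrimalConvergenceDissimilar} follows by substituting $\|y^*\|^2 = \sum_i \|y_i^*\|^2 \le N\zeta^2$ from \Cref{assum:dissimilar}, which cancels the $\tfrac1N$ factor.

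The main obstacle I anticipate is not the primal-dual conversion, which is routine once the Lipschitz relation is in hand, but rather the faithful instantiation of \citet{necoara2017random}: confirming that the uniform $\tau$-subset participation model of \Cref{assum:partial} matches the block-sampling distribution in their analysis, that the weighted projection \eqref{eq:direction_adjust2} with $\Lambda = \alpha^{-1}\mathbb{I}$ is precisely their prescribed step, and that their constants collapse to exactly $\tfrac{\tau-1}{N-1}\tfrac{\alpha}{\beta}$ rather than a looser sampling-dependent surrogate. A secondary point is to confirm that the distance contraction holds with the same rate and in the unweighted Euclidean norm; since $\Lambda$ is a scalar multiple of the identity, the weighted and Euclidean norms differ only by the harmless constant $\alpha^{-1}$, so the initial-distance term is exactly $\|y^{(0)} - y^*\|^2 = \|y^*\|^2$ and the $\tfrac{1}{N\alpha^2}$ constant is preserved.
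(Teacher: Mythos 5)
Your handling of the dual bound \eqref{eq:necoaraDualConvergenceStrongCVX} coincides with the paper's: both instantiate the RBCD analysis of \citet{necoara2017random} on \eqref{eq:dual} (the paper re-derives the rate through its descent lemma under the identification $h=G$, $\mu_{\min}=1/\beta$, $L_{\max}=1/\alpha$), and the structural checks you list are exactly the ones required. For the primal bounds, however, you take a genuinely different route. The paper never passes through $\|y^{(T)}-y^*\|^2$: it bounds $\mE\big[\|w^{(T)}-w^*\|^2\big]=\tfrac1N\|\nabla G(y^{(T)})-\nabla G(y^*)\|^2\le\tfrac{2}{N\alpha}\big(G(y^{(T)})-G(y^*)\big)$ via co-coercivity of the $(1/\alpha)$-smooth $G$ together with $\ip{\nabla G(y^*)}{y^{(T)}-y^*}\ge 0$, then applies the function-value rate, and finally converts $G(y^{(0)})-G(y^*)\le\tfrac{1}{2\alpha}\|y^*\|^2$ using smoothness and the vanishing of the projected gradient at $y^*$ (\cref{lemma:relation_primal_dual}, \cref{lemma:bound_dual_obj}). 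You instead go through an iterate-distance contraction plus the $(1/\alpha)$-Lipschitz continuity of each $\nabla f_i^*$. Both routes produce the same constant $\tfrac{1}{N\alpha^2}$, and the final substitution $\|y^*\|^2\le N\zeta^2$ is identical.

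The one step you owe more on is the ``companion distance estimate'' $\mE\big[\|y^{(T)}-y^*\|^2\big]\le\rho^{T}\|y^{(0)}-y^*\|^2$ with $\rho=1-\tfrac{\tau-1}{N-1}\tfrac{\alpha}{\beta}$ and leading constant $1$. This does \emph{not} follow from the function-value rate by the naive sandwich $\tfrac{1}{2\beta}\|y-y^*\|^2\le G(y)-G(y^*)\le\tfrac{1}{2\alpha}\|y-y^*\|^2$, which would introduce an extra factor $\beta/\alpha$ and destroy the claimed constant; and it is not obviously quotable in this exact form from \citet{necoara2017random}, whose main theorem is stated in function values. The estimate is nonetheless true, but it needs its own one-step argument: expand $\mE\|y^{(t+1)}-y^*\|_{\bL}^2$, use $\mE_I\ip{G_I\nabla G(y)}{y-y^*}_{\bL}=\tfrac{\tau-1}{N-1}\ip{\nabla G(y)}{y-y^*}$ and $\mE_I\|G_I\nabla G(y)\|_{\bL}^2\le\tfrac{\tau-1}{N-1}\cdot 2\big(G(y)-G(y^*)\big)$ (the identities in \cref{lemma:matrix_expectation} and the upper bound in \cref{lemma:GnormLowerAndUpperBound}, valid since $L_{\max}=L_{\min}=1/\alpha$ here), and the $(1/\beta)$-strong-convexity lower bound $\ip{\nabla G(y)}{y-y^*}\ge G(y)-G(y^*)+\tfrac{1}{2\beta}\|y-y^*\|^2$; the function-value terms cancel and exactly the $\rho$-contraction remains. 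With that lemma stated and proved (or precisely cited), your proof is correct and delivers the same bounds as the paper's.
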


The rate \eqref{eq:necoaraDualConvergenceStrongCVX} is based on a minor modification of~\citet[Theorem~3.1]{necoara2017random}. By strong convexity of the primal problem, we are able to extend the result to primal variables, which is shown in \cref{eq:necoaraPrimalConvergenceStrongCVX}. Furthermore, if we assume a bound on the data heterogeneity, i.e. \Cref{assum:dissimilar}, then we can get a better convergence rate on the primal variables; see \cref{eq:necoaraPrimalConvergenceDissimilar}.

\subsection{Applicability under federated learning}
In this section, we discuss the properties of FedDCD (\Cref{alg:rbcd}) and argue that it is a suitable algorithm for FL in the sense that it respects the governing FL assumptions (\cref{assum:govern}), which are different from classical distributed optimization, as described in \cref{sec:intro}. Specifically, 
\begin{itemize}
    \item \textbf{Reduced communication.} 
    %\Cref{thm:necoaraConvergenceRate} shows that in order to achieve $\epsilon$ accuracy in primal variable, i.e., $\|w^{(T)} - w^*\|^2 \leq \epsilon$, we need $\Oscr(\log(\frac{1}{\epsilon}))$ rounds of communication.
    \Cref{tab:communicationComplexity} summaries the communication complexities of some existing FO methods such as mini-batch SGD (MB-SGD)~\citep{WoodworthPS20}, FedAvg (local SGD)~\citep{mcmahan2017communication}, SCAFFOLD~\citep{pmlr-v119-karimireddy20a}. Compared to the rates of other algorithms, our communication complexity only involves a logarithmic dependence on $\epsilon$ (but with a cost that $N$ appears in the nominator). These rates implies that when $N/\tau$ is not too large, FedDCD converges faster than other algorithms.
    On the other hand, if $N/\tau$ is large, i.e. the participation rate is small, MB-SGD and FedAvg (local SGD) converges faster since their convergence rates are independent from the number of clients. 
    \item \textbf{Data privacy.} As shown in \Cref{alg:rbcd}, our method only requires clients to send local model updates to the server, which is similar to most existing FO methods \citep{mcmahan2017communication,li2018federated,yuan2021federated,karimireddy2020mime}. Local data privacy is thus preserved. 
    \item \textbf{Data heterogeneity.} 
    The data heterogeneity between clients is captured by the parameter $\zeta$ in \Cref{assum:dissimilar}, i.e., a larger $\zeta$ indicates greater data heterogeneity between clients. \Cref{eq:necoaraPrimalConvergenceDissimilar} reveals the impact of $\zeta$ on the convergence rate. In the extreme case when all the clients have same local data, i.e., $f_i = f_j$ for all $i,j \in [N]$, \Cref{alg:rbcd} will reach the optimal point at the first iteration.
    \item \textbf{Partial participation.} By design, \cref{alg:rbcd} only needs $\tau$ clients to participate in each round, where $\tau$ can be any number between $2$ and $N$. This feature offers flexibility for the numebr of participating clients in each round. \Cref{thm:necoaraConvergenceRate} also implies that the convergence rate improves as more clients participate. Note that the convergence analysis can also reveal the behaviour of the method when the number of participating clients is allowed to vary across rounds.
\end{itemize}

\begin{table}[t]
\small
\centering
     \begin{tabular}{lc}    
     \toprule 
     algorithm & rounds  \\
     \midrule
     MB-SGD & $\displaystyle \BigOh \left( \frac{\sigma^2}{\tau \alpha \epsilon}  + \left( 1 - \frac{\tau}{N} \right) \frac{\zeta^2}{\tau \alpha \epsilon} + \frac{\beta}{\alpha} \log\left(\frac{1}{\epsilon}\right) \right) $ \\ 
     FedAvg (local SGD) & $\displaystyle \BigOh \left( \frac{\sigma^2}{\tau K \alpha \epsilon} + \frac{\sqrt{\beta}( \zeta K + \sigma \sqrt{K} ) }{\alpha \sqrt{\epsilon}} + \frac{\beta}{\alpha} \log\left(\frac{1}{\epsilon}\right) \right)$  \\
     SCAFFOLD & $\displaystyle \BigOh \left( \frac{\sigma^2}{\tau K \alpha \epsilon}  + \left(\frac{\beta}{\alpha} +\frac{N}{\tau} \right)\log\left(\frac{1}{\epsilon}\right) \right) $  \\
     FedDCD (Ours)& $\displaystyle \BigOh\left( \frac{(N-1) \beta}{(\tau-1) \alpha} \log\left( \frac{\zeta}{ \epsilon } \right) \right)$ \\
     AccFedDCD (Ours)& $\displaystyle \BigOh\left( \frac{(N-1) }{(\tau-1) } \sqrt{ \frac{\beta}{\alpha} } \log\left( \frac{\zeta}{ \epsilon } \right) \right)$ \\
     \bottomrule 
     \end{tabular}
    %  \captionsetup{justification=centering}
     \caption{Communication complexities of different algorithms, where $K$ is the number of local steps of FedAvg and SCAFFOLD, $\sigma$ is the gradient variance bound. To make a fair comparison, we present our rate in averaged empirical risk instead of the sum of empirical risk (we divide \cref{eq:primal} by $N$). See \citet{WoodworthPS20} for a more comprehensive summary of the rates. The rate of AccFedDCD is from \Cref{sec:accFedDCD}.\label{tab:communicationComplexity}}
\end{table}

% Here we present a convergence rate of \Cref{alg:rbcd} under a weaker condition.
% \begin{theorem}[Convergence under weaker conditions]
%     Assume that \Cref{assum:boundedDomain} is satisfied. Given $T \in \mathbb{N}$, let $\{ y^{(t)} \}_{t=1}^T$ be the iterates generated from \Cref{alg:rbcd} with the diagonal scaling matrix $\Lambda = \diag( \gamma_1^{-1}, \gamma_2^{-1}, \ldots, \gamma_N^{-1} ) \otimes \mathbb{I}_{d \times d}$ and $\eta_t = \sqrt{ \frac{(N-1)(\sum_{i=1}^N \gamma_i \| y_i^{0} - y_i^* \|^2)}{2(\tau-1)(\sum_{i=1}^N \gamma_i)T} }$. Then 
%     \[
%         \min_{t \in [T]} \mE \left[ F( y^{(t)} ) - F( y^* ) \right] \leq \sqrt{ \frac{ (N-1) ( \sum_{i=1}^N \gamma_i ) ( \sum_{i=1}^N \gamma_i \| y_i^{(0)} - y_i^* \|^2 ) }{(\tau-1)T} }.
%     \]
% \end{theorem}

\section{Inexact federated dual coordinate descent}
\label{sec:inexactFedDCD}
A drawback of \Cref{alg:rbcd} is that the calculation of local primal model $w_i^{(t)}$ requires exact minimization of the individual primal objective, e.g., solving \eqref{eq:exact_gradient}. This could potentially be computationally prohibitive as clients may have limited local computational resources. 
A natural remedy is to solve \ref{eq:exact_gradient} inexactly. The convergence of coordinate descent with inexact gradients has been studied by~\citet{CassioliLS13,TappendenRG16,LiuSY21}. \citet{LiuSY21} recently extended the MSDA algorithm~\citep{Scaman2017OptimalAF} by using \emph{lazy dual gradients}. Our approach is related to their work and we use some of their intermediate results to build our analysis. 

%We introduce an inexact version of FedDCD as a remedy with little loss in convergence rate. 
First, we introduce the inexact gradient oracle. 
\begin{definition}[$\delta$-inexact gradient oracle] 
Given a function $u : \Real^p \to \Real$ and $\delta \in (0,1)$, we say that $\texttt{oracle}_{u, \delta}(x, g_{\init})$ is a $\delta$-inexact gradient oracle for $u$ if it outputs $\wt{\nabla} u(x)$ as an approximation of $\nabla u(x)$ that satisfies
\begin{align}
        \mE\left[\| \wt{\nabla} u(x) - {\nabla} u(x) \|^2\right] \leq \delta \| g_{\init} - {\nabla} u(x) \|^2, \label{eq:inexactOracle1}
\end{align}
where $g_{\init}$ is an initial guess of the true gradient $\nabla u(x)$ and the expectation is taken over the oracle itself as the oracle can be a randomized procedure.
\end{definition}
To incorporate inexact gradient oracle into FedDCD, we only need to modify line 5 of \Cref{alg:rbcd} to
\begin{equation} \label{eq:inexact_gradient}
    w_i^{(t)} = \texttt{oracle}_{f_i^*, \delta}(y^{(t)}_i, w_{i}^{(t-1)}),
\end{equation}
where $\texttt{oracle}_{f_i^*, \delta}$ is a $\delta$-inexact gradient oracle for $f_i^*$ and 
we let $w_i^{(-1)} \coloneqq 0~\forall i \in [N]$.

Next, we show that the $\delta$-inexact gradient oracle can be implemented by running some standard optimization algorithms locally on clients' devices. Specifically, when \Cref{assum:smooth} and \Cref{assum:stronglyCvx} are satisfied, we can bound the number of local updates required to satisfy \cref{eq:inexact_gradient} with different algorithms. Assume that client $i$ has $n_i$ data points on its device, we list some standard algorithms for solving \cref{eq:inexact_gradient} below along with the number of local steps required:
\begin{itemize}
    \item Gradient descent with initial point $w_i^{(t-1)}$ needs $\BigOh\left( \frac{ n_i \beta}{\alpha} \log\left( \frac{1}{\delta} \right) \right)$ gradient steps for $i \in [N], t \in \mathbb{N}$ \citep{cvxopt_lecture};
    \item When each individual loss functions $f_i$'s are finite sums such that $f_i = \sum_{j=1}^{n_1} f_{i,j}~\forall i \in [N]$, and all their inner losses $f_{i,j}$'s are $\alpha$-strongly convex and $\beta$-smooth, variance-reduced SGD such as SAG, SAGA and SVRG with initial point $w^{(t-1)}_i$ needs $\BigOh\left( \left( \frac{\beta}{\alpha} + n_i \right) \log\left( \frac{1}{\delta} \right) \right)$ stochastic gradient steps for $i \in [N], t \in \mathbb{N}$ \citep{roux2012stochastic,JohnsonZ13,DefazioBL14};
    \item Inexact Newton method with initial point $w^{(t-1)}_i$ needs $\BigOh( \log\left( \frac{1}{\delta} \right) )$ inexact Newton steps for $i \in [N], t \in \mathbb{N}$~\citep{dembo1982inexact}.
\end{itemize}

% All these methods show that for any fixed $\delta$, the clients can implement the inexact gradient oracle with a fixed number of local computational steps. 
When $\delta$ is fixed as a constant, all the above methods require only a fixed number of local training steps in each round, and the number of local steps is independent from the number of the final accuracy $\epsilon$. 
The inexact gradient oracle offers more flexibility than other federated learning algorithms as they usually require all client to use specific algorithm to perform local updates, whereas inexact FedDCD allows clients to choose their own local solver that are suitable to the computational power of their local device. This mechanism can potentially mitigate the device heterogeneity issue in FL.

We then show the convergence rate of FedDCD with inexact gradient oracle.
\begin{theorem}[Convergence rate of inexact FedDCD] \label{thm:inexactConvergence}
    Suppose that \Cref{assum:stronglyCvx} and \Cref{assum:smooth} hold. Define an auxiliary constant
    \[\kappa ~=~ \frac{ (\tau-1) \alpha}{32 (N-1) \beta}.\]
    Let $w^{(T)}$ and $y^{(T)}$ be the iterates generated from \Cref{alg:rbcd} with $\delta$-inexact gradient oracle where $\delta=(1-\kappa)/4$, learning rate $\eta^{(t)} = 1/4$ for all $t$, and diagonal scaling matrix $\Lambda = \alpha^{-1} \mathbb{I}_{Nd \times Nd}$. Then
    \begin{equation} \label{eq:InexactDualConvergenceStrongCVX}
        \mE \left[ G( y^{(T)} ) - G( y^* ) \right] \leq \left( 1 - \kappa \right)^{T} ( G(y^{(0)}) - G(y^*)  ), 
    \end{equation}
    and
    \begin{equation} \label{eq:InexactPrimalConvergenceStrongCVX}
        \mE \left[ \| w^{(T)} - w^* \|^2 \right] \leq \frac{20}{3N\alpha^2}(1-\kappa)^T  \|y^*\|^2. 
    \end{equation}
    If in addition that \Cref{assum:dissimilar} holds, then
    \[\mE \left[ \| w^{(T)} - w^* \|^2 \right] \leq \frac{20}{3\alpha^2}(1-\kappa)^T \zeta^2.\]
\end{theorem}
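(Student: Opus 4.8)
The plan is to treat the inexact update as a perturbation of the exact random-block-coordinate descent step already analyzed in \Cref{thm:necoaraConvergenceRate}, and to control that perturbation through a coupled recursion. First I would record the one-step progress inequality for the exact method: since $G$ is $(1/\alpha)$-block-wise smooth and the projected direction $\hat w_I^{(t)}=\proj_{\Cscr_{I}}^{\Lambda}(\Lambda^{-1}w_I^{(t)})$ is the reduced gradient on the manifold $\Cscr_{I}$, the descent argument of \citet[Theorem~3.1]{necoara2017random} gives, in expectation over the random block $I_t$, a contraction of $G(y^{(t)}) - G(y^*)$ by the factor $1 - \frac{\tau-1}{N-1}\frac{\alpha}{\beta}$ whenever the exact gradients $\nabla f_i^*(y_i^{(t)})$ are used. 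The task is then to quantify how replacing $\nabla f_i^*(y_i^{(t)})$ by the oracle output $w_i^{(t)}$ degrades this contraction.

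Next I would introduce the per-coordinate gradient error $e_i^{(t)} \coloneqq w_i^{(t)} - \nabla f_i^*(y_i^{(t)})$ and split the descent inequality into the exact progress plus cross terms and a quadratic term in $e_i^{(t)}$. Each cross term is handled by Young's inequality, and the oracle definition bounds $\mE\norm{e_i^{(t)}}^2 \le \delta\,\norm{w_i^{(t-1)} - \nabla f_i^*(y_i^{(t)})}^2$. The essential difficulty — and the step I expect to be the main obstacle — is that this bound is \emph{relative} to the warm-start gap $\norm{w_i^{(t-1)} - \nabla f_i^*(y_i^{(t)})}$, which is not small a priori. I would control it by the triangle inequality
\[
  \norm{w_i^{(t-1)} - \nabla f_i^*(y_i^{(t)})}
  \le \norm{e_i^{(t-1)}} + \norm{\nabla f_i^*(y_i^{(t-1)}) - \nabla f_i^*(y_i^{(t)})},
\]
where the first term is the previous-round error, feeding a recursion, and the second is bounded using the $(1/\alpha)$-Lipschitz continuity of $\nabla f_i^*$ by $\frac{1}{\alpha}\norm{y_i^{(t)} - y_i^{(t-1)}} = \frac{\eta^{(t)}}{\alpha}\norm{\hat w_i^{(t-1)}}$, i.e., proportional to the size of the last step. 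This is precisely the lazy-gradient mechanism of \citet{LiuSY21}, whose intermediate estimates I would invoke here.

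I would then assemble a Lyapunov-type recursion in the pair $(G(y^{(t)}) - G(y^*),\ \sum_i \norm{e_i^{(t)}}^2)$: the dual-gap descent loses a small, controlled amount to the error term, while the error term itself contracts geometrically provided $\delta$ is small enough and the step $\eta^{(t)}$ keeps the displacement $\norm{y^{(t)} - y^{(t-1)}}$ comparable to the dual gap. The specific choices $\eta^{(t)} = 1/4$ and $\delta = (1-\kappa)/4$ are exactly what make these two inequalities close into a single geometric recursion with ratio $1-\kappa$, where $\kappa = \frac{(\tau-1)\alpha}{32(N-1)\beta}$ and the constant $32$ absorbs the accumulated Young/triangle-inequality slack. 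Telescoping yields \eqref{eq:InexactDualConvergenceStrongCVX}, and combined with the $(1/\beta)$-block-wise strong convexity of $G$ it also gives geometric decay of $\mE\norm{y^{(t)} - y^*}^2$.

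Finally, for the primal bounds I would write $w_i^{(T)} = \nabla f_i^*(y_i^{(T)}) + e_i^{(T)}$ and $w^* = \nabla f_i^*(y_i^*)$, so that, again by Lipschitzness of $\nabla f_i^*$,
\[
  \mE\norm{w_i^{(T)} - w^*}^2
  \le 2\,\mE\norm{e_i^{(T)}}^2 + \frac{2}{\alpha^2}\,\mE\norm{y_i^{(T)} - y_i^*}^2.
\]
Averaging over the uniformly sampled index $i$ produces the $1/N$ factor, and since both the error $\mE\norm{e_i^{(T)}}^2$ and the dual-iterate distance decay at rate $1-\kappa$, I obtain \eqref{eq:InexactPrimalConvergenceStrongCVX}; the enlarged constant $20/3$ (versus $1/\alpha^2$ in the exact case) is the price of the extra $\mE\norm{e_i^{(T)}}^2$ contribution together with the splitting constants. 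The heterogeneity bound $\norm{y_i^*}\le\zeta$ from \Cref{assum:dissimilar} then replaces $\frac{1}{N}\norm{y^*}^2$ by $\zeta^2$ to give the last inequality.
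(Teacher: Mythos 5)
Your strategy for the dual rate \eqref{eq:InexactDualConvergenceStrongCVX} is essentially the paper's: the paper also builds a Lyapunov function coupling the dual gap $G(y^{(t)})-G(y^*)$ with a geometrically weighted sum of consecutive gradient differences, unrolls the relative oracle error via \cref{lemma:LiuLemma1} (the lazy-gradient recursion of \citet{LiuSY21} that you identify), applies Young's inequality to the cross terms, and closes the recursion at rate $1-\kappa$ with the same choices $\eta^{(t)}=1/4$ and $\delta=(1-\kappa)/4$; see the proof of \cref{thm:inexact_rbcd}. Your only substantive deviation is in the primal transfer, and there it matters for the stated constant. You bound $\|\nabla f_i^*(y_i^{(T)})-\nabla f_i^*(y_i^*)\|$ by $\frac{1}{\alpha}\|y_i^{(T)}-y_i^*\|$ and then invoke strong convexity of $G$ to control $\|y^{(T)}-y^*\|^2$ by $2\beta\,(G(y^{(T)})-G(y^*))$; chaining these introduces an extra factor of $\beta/\alpha$ relative to the paper's route, so you would obtain a bound of order $\frac{\beta}{N\alpha^{3}}(1-\kappa)^T\|y^*\|^2$ rather than the claimed $\frac{20}{3N\alpha^2}(1-\kappa)^T\|y^*\|^2$. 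The paper avoids this loss by using the co-coercivity inequality for the $(1/\alpha)$-smooth convex function $G$ directly, namely $\sum_i\|\nabla f_i^*(y_i^{(t)})-w^*\|^2=\|\nabla G(y^{(t)})-\nabla G(y^*)\|^2\le \frac{2}{\alpha}\,(G(y^{(t)})-G(y^*))$ (\cref{lemma:relation_primal_dual}), which needs no bound on $\|y^{(T)}-y^*\|$ at all; combined with the geometric-series summation over the error terms (with ratio $(1-\kappa)/\delta=4$) and \cref{lemma:bound_dual_obj} this yields exactly the constants $40/(3N\alpha)$ and then $20/(3N\alpha^2)$. If you replace your Lipschitz-plus-strong-convexity step by this co-coercivity bound, the rest of your argument goes through and recovers the theorem as stated.
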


\Cref{thm:inexactConvergence} implies that \Cref{alg:rbcd} can still enjoy linear convergence rate when using appropriate gradient accuracy $\delta$ and learning rate $\eta^{(t)}$ as specified above. We only suffer from a loss in the constant term compared with the convergence rate with exact gradient oracles, c.f., \Cref{thm:necoaraConvergenceRate}.

\section{Accelerated federated dual coordinate descent}
\label{sec:accFedDCD}

In this section, we apply Nesterov's acceleration to \Cref{alg:rbcd} and obtain improved convergence rates. Random coordinate descent with Nesterov's acceleration has been widely studied; see~\citet{Nesterov12,LeeS13,LinLX15,ZhuQRY16,NesterovAccCD}. However, the analysis in the literature is almost exclusively focused on unconstrained problems or problems with separable regularizers and therefore does not apply to \cref{eq:dual} because of the linear constraint. In the following of this section, we adapt the accelerated randomized coordinate descent algorithm to problems linear constraints.

\begin{algorithm}[t]
 \DontPrintSemicolon
 \SetKwComment{tcp}{\tiny [}{]}
 \caption{Accelerated FedDCD \label{alg:fedArbcd}}
 \smallskip
 \KwIn{number of selected clients in each round $1<\tau\leq n$; diagonal scaling matrix $\Lambda\in\Real^{Nd \times Nd}$}
 $y_i^{(0)}, z_i^{(0)}, v_i^{(0)} \gets 0$ for all $i \in [N]$, let $r =\frac{\tau-1}{N-1}$, $a = \frac{\sqrt{\alpha/\beta}}{\frac{1}{r} + \sqrt{\alpha/\beta}}, b = \frac{\alpha a r^2}{\beta}$\tcp*{\tiny initialize feasible local dual variables}
 \For{\nllabel{alg-level-set-loop}$t\gets0,1,2,\ldots$}{
    \For{each client $i \in [N]$ \textbf{in parallel}}{
   $v_i^{(t)} = (1-a) y_i^{(t)} + a z_i^{(t)}$  \tcp*{\tiny update}
   }
   $I_t^1 \leftarrow \mbox{ set of } \tau \mbox{ random participating clients }$\tcp*{\tiny random clients}
   \For{each client $i\in I_t^1$ \textbf{in parallel}}{
        Compute local gradient $w_i^{(t)}$ as in \cref{eq:exact_gradient} with $y_i^{(t)}$ replaced by $v_i^{(t)}$\tcp*{\tiny local computation}
        Upload local gradient $w_i^{(t)}$ to the central server\tcp*{\tiny upload step} 
   }
   Compute updates $\hat w_i^{(t)}$ for all $i\in I_t^1$ as in \cref{eq:direction_adjust2}\tcp*{\tiny adjust directions}
   Send the updated directions to the participating clients $I_t^1$\tcp*{\tiny download step} 
   \For{each client $i\in [N]$ \textbf{in parallel}}{
        $u_i^{(t)} = \frac{a^2}{a^2 + b} z_i^{(t)} + \frac{b}{a^2 + b} v_i^{(t)} $  \tcp*{\tiny update}
        \uIf{$i \in I_t^1$}{
            $y_i^{(t+1)} \leftarrow v_i^{(t)} - \hat w_i^{(t)}$\tcp*{\tiny update} 
        }\Else{
            $y_i^{(t+1)} \leftarrow v_i^{(t)}$\tcp*{\tiny standby}
        } 
   }
   $I_t^2 \leftarrow \mbox{ set of } \tau \mbox{ random participating clients }$\tcp*{\tiny random clients}
   \For{each client $i\in I_t^2$ \textbf{in parallel}}{
        Compute local gradient $w_i^{(t)}$ as in \cref{eq:exact_gradient}  with $y_i^{(t)}$ replaced by $v_i^{(t)}$\tcp*{\tiny local computation}
        Upload local gradient $w_i^{(t)}$ to the central server\tcp*{\tiny upload step} 
   }
   Compute updates $\hat w_i^{(t)}$ for all $i\in I_t^2$ as in \cref{eq:direction_adjust2}\tcp*{\tiny adjust directions}
   Send the updated directions to the participating clients $I_t^2$\tcp*{\tiny download step} 
   \For{each client $i\in [N]$ \textbf{in parallel}}{
        \uIf{$i \in I_t^2$}{
            $z_i^{(t+1)} \leftarrow u_i^{(t)} - \frac{ar}{a^2 + b} \hat w_i^{(t)}$\tcp*{\tiny update} 
        }\Else{
            $z_i^{(t+1)} \leftarrow u_i^{(t)}$\tcp*{\tiny standby}
        } 
   }
 }
 \Return{$w^{(T)} = w_i^{(T)}$}, where $i$ uniform randomly sampled from $[N]$.\tcp*{\tiny primal global model} 
\end{algorithm}

The accelerated FedDCD method is detailed in \Cref{alg:fedArbcd}. 
The detailed algorithm is shown in , we call it accelerated FedDCD. Accelerated FedDCD follows the standard algorithm template of accelerated coordinate descent: we introduce auxiliary variables $v_i^{(t)}, u_i^{(t)}$ and $z_i^{(t)}$, where $v_i^{(t)}$ is a linear combination of $y_i^{(t)}$ and $z_i^{(t)}$ (see line 18 of \Cref{alg:fedArbcd}) and $u_i^{(t)}$ is a linear combination of $z_i^{(t)}$ and $v_i^{(t)}$ (see line 26 of \Cref{alg:fedArbcd}). In each round, we sample two sets of clients and calculate their gradients, the calculation of gradient is based on the variable $v_i^{(t)}$ instead of $y_i^{(t)}$ (see line 21 and line 33).
The major difference between accelerated FedDCD and standard accelerated RCD is in line 23 and 35, where accelerated FedDCD requires a partial projection step to keep the updated iterates feasible.
Note that the standard efficient implementation of accelerated RCD requires a  change of variable technique \citep{LeeS13}, which is not necessary in our setting because all clients can independently update their local models in parallel.

The convergence rate of accelerated FedDCD is given by the following result, which shows that it enjoys essentially the same convergence rate of standard accelerated RCD for unconstrained problems~\citep{LeeS13,Lu18}.

\begin{theorem}[Convergence rate of accelerated FedDCD] \label{thm:accFedDCDStronglyCVX}
     Let $y^{(T)}$ and $w^{(T)}$ be the iterates generated from \Cref{alg:fedArbcd} with diagonal scaling matrix $\Lambda = \alpha^{-1} \mathbb{I}_{Nd \times Nd}$. Suppose that \Cref{assum:stronglyCvx} and \Cref{assum:smooth} hold, then
    \begin{equation} \label{eq:necoaraDualConvergenceStrongCVXAcc}
        \mE \left[ G( y^{(T)} ) - G( y^* ) \right] ~\leq~ \left( 1 - \frac{ \sqrt{\frac{\alpha}{\beta}} }{ \frac{N-1}{\tau -1}+ \sqrt{\frac{\alpha}{\beta}} } \right)^{T} ( G(y^{(0)}) - G(y^*) ),
    \end{equation}
    and
    \begin{equation} \label{eq:necoaraPrimalConvergenceStrongCVXAcc}
        \mE \left[ \| w^{(T)} - w^* \|^2 \right] 
        \leq \frac{1}{N\alpha^2} \left( 1 - \frac{ \sqrt{\frac{\alpha}{\beta}} }{ \frac{N-1}{\tau -1}+ \sqrt{\frac{\alpha}{\beta}} } \right)^{T} \|y^*\|^2.
    \end{equation}
    If in addition that \cref{assum:dissimilar} holds, then
    \begin{equation} \label{eq:necoaraPrimalConvergenceDissimilarAcc}
        \mE \left[ \| w^{(T)} - w^* \|^2 \right] 
        \leq \frac{1}{\alpha^2} \left( 1 - \frac{ \sqrt{\frac{\alpha}{\beta}} }{ \frac{N-1}{\tau -1}+ \sqrt{\frac{\alpha}{\beta}} } \right)^{T} \zeta^2. 
    \end{equation}
\end{theorem}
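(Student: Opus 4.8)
The plan is to recognize \Cref{alg:fedArbcd} as an instance of accelerated randomized block coordinate descent applied to the linearly constrained dual \eqref{eq:dual}, and to run the standard estimate-sequence/Lyapunov analysis with each block gradient replaced by its projection onto the constraint tangent space. First I would record the dual regularity: \Cref{assum:stronglyCvx,assum:smooth} make $G$ block-wise $(1/\alpha)$-smooth and block-wise $(1/\beta)$-strongly convex, so the per-block condition number is $\beta/\alpha$ and $\sqrt{\alpha/\beta}$ is the quantity that should appear in an accelerated rate. With $\Lambda=\alpha^{-1}\mathbb{I}$, the adjusted direction \eqref{eq:direction_adjust2} simplifies to $\hat w_I^{(t)}=\alpha\,P_I\nabla G(v^{(t)})$, where $P_I$ is the Euclidean projection onto the tangent space of $\Cscr_I$ (subtracting the block-mean over $I$). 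I would then prove the two combinatorial identities that drive the whole argument: for any $g=(g_1,\dots,g_N)$ and a uniformly random $I$ of size $\tau$,
\[ \mE_I\!\left[P_I g\right]=r\,P_{[N]}g \quad\text{and}\quad \mE_I\!\left[\norm{P_I g}^2\right]=r\,\norm{P_{[N]}g}^2, \qquad r=\tfrac{\tau-1}{N-1}. \]
These are pure second-moment computations over the sampling, and they show that partial participation behaves, in expectation, like a full projected-gradient step damped by $r$; this is precisely where $\tfrac{\tau-1}{N-1}$ (equivalently the effective block count $\tfrac{N-1}{\tau-1}$) enters the rate. I would also record the feasibility invariant $y^{(t)},z^{(t)},v^{(t)},u^{(t)}\in\Cscr_{[N]}$, proved by induction since $\hat w_I^{(t)}\in\Cscr_I$ and the remaining updates are convex combinations of feasible points; feasibility is what lets the linear term $\ip{w^*}{\,\cdot\,}$ vanish whenever $y^*$-optimality is invoked.

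Second I would establish the two one-step inequalities of the accelerated template. Block-smoothness at $v^{(t)}$ together with the gradient step $y^{(t+1)}=v^{(t)}-\hat w^{(t)}$ on $I_t^1$ yields $G(y^{(t+1)})\le G(v^{(t)})-\tfrac{\alpha}{2}\norm{P_{I_t^1}\nabla G(v^{(t)})}^2$, and taking $\mE_{I_t^1}$ and applying the second identity replaces the partial norm by $r\,\norm{P_{[N]}\nabla G(v^{(t)})}^2$. The mirror/estimate-sequence step $z^{(t+1)}=u^{(t)}-\tfrac{ar}{a^2+b}\hat w^{(t)}$ on $I_t^2$, combined with block-strong-convexity, yields a progress bound on $\mE[\norm{z^{(t+1)}-y^*}^2]$ in terms of $\norm{z^{(t)}-y^*}^2$, the inner product $\ip{\nabla G(v^{(t)})}{z^{(t)}-y^*}$ (produced by the first identity), and the same projected-gradient norm. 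Sampling $I_t^1$ and $I_t^2$ independently is convenient here because it decouples the descent and momentum expectations.

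Third I would combine the two inequalities through the coupling relations $v^{(t)}=(1-a)y^{(t)}+az^{(t)}$ and the definition of $u^{(t)}$, together with a convexity bound on $G(v^{(t)})$, to form the Lyapunov function $\Phi^{(t)}=\bigl(G(y^{(t)})-G(y^*)\bigr)+c\,\norm{z^{(t)}-y^*}^2$ for a suitable $c$, and show $\mE[\Phi^{(t+1)}]\le(1-a)\Phi^{(t)}$. The prescribed constants $a=\tfrac{\sqrt{\alpha/\beta}}{1/r+\sqrt{\alpha/\beta}}$ and $b=\tfrac{\alpha a r^2}{\beta}$ are exactly the values that cancel the cross terms and force the coefficient of $\norm{P_{[N]}\nabla G(v^{(t)})}^2$ to be nonpositive, so that the contraction factor is precisely $1-a$; since $a$ equals the stated rate constant, unrolling over $t$ gives \eqref{eq:necoaraDualConvergenceStrongCVXAcc}. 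The primal bounds then follow exactly as in \Cref{thm:necoaraConvergenceRate}: co-coercivity of each $(1/\alpha)$-smooth $f_i^*$ plus feasibility give $\mE_i[\norm{w^{(T)}-w^*}^2]\le\tfrac{2}{N\alpha}\,\mE[G(y^{(T)})-G(y^*)]$ after averaging over the random output index $i$; block-smoothness of $G$ at $y^*$ with feasibility bounds the initial gap by $G(y^{(0)})-G(y^*)\le\tfrac{1}{2\alpha}\norm{y^*}^2$ (using $y^{(0)}=0$); combining with the dual rate yields \eqref{eq:necoaraPrimalConvergenceStrongCVXAcc}, and \Cref{assum:dissimilar} via $\norm{y^*}^2\le N\zeta^2$ gives \eqref{eq:necoaraPrimalConvergenceDissimilarAcc}.

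The main obstacle is the third step: verifying that, after substituting the two sampling identities, the weighted combination of the descent and mirror inequalities telescopes into a clean $(1-a)$-contraction of $\Phi^{(t)}$. This is where the algebraic choices of $a$ and $b$ must be reverse-engineered so that the residual projected-gradient-norm terms carry a nonpositive coefficient and the $y$- and $z$-distance cross terms match against the damping factor $r$; balancing these constants is the delicate calculation, whereas the sampling identities, the feasibility invariant, and the primal transfer are routine.
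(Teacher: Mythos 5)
Your proposal is correct and follows essentially the same route as the paper: the sampling identities you state are exactly \Cref{lemma:matrix_expectation}, the descent/mirror/three-point combination with the Lyapunov function $h(x^{(t)})-h(x^*)+\tfrac{a^2+b}{2r^2}\|x^*-z^{(t)}\|_{\bL}^2$ and contraction factor $1-a$ is the paper's \Cref{thm:accelerated_rbcd} (adapted from Lu's template), and the primal transfer via co-coercivity, $G(y^{(0)})-G(y^*)\le\tfrac{1}{2\alpha}\|y^*\|^2$, and $\|y^*\|^2\le N\zeta^2$ is \Cref{lemma:relation_primal_dual,lemma:bound_dual_obj}. No substantive differences.
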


\Cref{thm:accFedDCDStronglyCVX} implies that the iteration complexity (and thus the bound on the number of communication rounds) of accelerated FedDCD is 
\begin{equation}
        \BigOh\left( \frac{(N-1) }{(\tau-1) } \sqrt{ \frac{\beta}{\alpha} } \log\left( \frac{ \zeta}{ \epsilon } \right) \right) , \label{eq:accFedDCDcomplexity}
\end{equation}
which improved the condition number $\frac{\beta}{\alpha}$ found in \eqref{eq:necoaraPrimalConvergenceDissimilar} to $\sqrt{\frac{\beta}{\alpha}}$. In the appendix, we also show that when all the functions $f_i$ are only strongly convex but not necessarily smooth (the dual is smooth but not strongly convex), then accelerated random coordinate descent with linear constraint converges with rate $\BigOh( \epsilon^{-\frac{1}{2}} )$ on the dual problem.

We believe that the accelerated version of FedDCD can also use inexact gradient oracle without compromising its convergence rate; we leave this for future work.

\section{Complexity lower bound under random participation}
\label{sec:lowerBound}

We follow the black-box procedure from \cite{Scaman2017OptimalAF,Scaman2018OptimalAF} and propose the following constraints for the black-box optimization procedures of FO under random participation:
\begin{enumerate}
    \item \textbf{Clients' memory:} at time $t$, each client $i$ can store the past models, denoted by $\Mscr_{i,t} \subset \Real^d$. The stored models  either come from each client's local update or client-server communication, that is
    \[
        \Mscr_{i,t} = \Mscr_{i,t}^{comp} \cup \Mscr_{i,t}^{comm} \quad \forall i \in [N].
    \]
    \item \textbf{Clients' local computation:} at time $t \geq 0$, the clients can update their local model via arbitrary first-order oracles for arbitrary steps:
    \[
        \Mscr_{i,t}^{comp} = \bigcup_{k=0}^{\infty} \Ascr_k,
    \]
    where
    \begin{align*}
        \Ascr_0 &= \mathrm{Span}\left( \left\{  w, \nabla f_i(w), \nabla f_i^*(w) : w \in \Mscr_{i,t-1}  \right\} \right), \\
        \Ascr_k &= \mathrm{Span}\left( \left\{  w, \nabla f_i(w), \nabla f_i^*(w) : w \in \Ascr_{k-1}  \right\} \right) \qquad \forall k \geq 1.
    \end{align*}
    \item \textbf{Client-server communication:} at time $t \geq 0$, the server can collect the models from the randomly generated participating clients at time $t-1$:
    \[
        \displaystyle \Mscr_{server, t} = \mathrm{Span} \left( \Mscr_{server, t-1} \cup  \left( \bigcup_{i \in \Nscr_t} \Mscr_{i, t-1} \right) \right),
    \]
    where the set of participating clients $\Nscr_t$ is uniformly generated from $\{1, \dots, N\}$ and $|\Nscr_t| = \tau \in \{2, \dots, N\}$. The client could also receive a new model through the communication with the central server:
    \[
        \Mscr_{i,t}^{comm} = \left\{ w \right\}, \qquad w \in \Mscr_{server, t-1},
    \]
    where $w$ could an arbitrary model from $\Mscr_{server, t-1}$.
    \item \textbf{Output model:} at time $t$, the server selects one model in its memory as output:
    \[
        w^{(t)} \in \Mscr_{server, t}.
    \]
\end{enumerate}

For simplicity, we assume all clients and server have trivial initialization $M_{server, 0}, M_{i,0} = \{0\}~\forall i \in [N]$. We also assume that in every round the communication cost is 1 time unit, and that the server and clients are allowed to conduct local computation within each time unit. The major difference between our black-box procedure and the distributed optimization black-box procedure from \citet{Scaman2017OptimalAF} is that we have an additional constraint on random participation that constrains the server from communicating with no more than $\tau$ uniform randomly-generated clients in each round.

\begin{theorem}[Lower bound] \label{thm:lowerBound}
    There exist $N$ functions $f_i$'s that satisfy \Cref{assum:stronglyCvx} and \Cref{assum:smooth}, such that for any positive integer $T$ and any algorithm from our black-box procedure we have
    \begin{align}
       \mE_{ \Nscr_i, i \in [T] } \left[ f( w^{(T)} ) - f(w^*) \right]  ~\geq~ \Omega \left( \left( 1 - \min\left\{ \frac{\tau}{\sqrt{N}} \frac{4}{ \sqrt{ \beta/\alpha }}, \frac{8 \sqrt{2} }{\beta/\alpha} , 1\right\} \right)^T \right)  \label{eq:lowerBound}.
    \end{align}
\end{theorem}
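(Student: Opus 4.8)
The plan is to follow the black-box lower-bound methodology of \citet{Scaman2017OptimalAF}: exhibit one family of quadratic functions on which \emph{every} procedure obeying the memory, local-computation, and communication rules of the black-box model makes provably slow progress, and then read off the contraction factor. I would take each local objective to be a shifted quadratic $f_i(w) = \tfrac12 w^\top A_i w - b_i^\top w + \tfrac{\alpha}{2}\|w\|^2$, where the $+\tfrac{\alpha}{2}\|w\|^2$ enforces \Cref{assum:stronglyCvx} and the $A_i$ are scaled so that \Cref{assum:smooth} holds on \emph{each} client. The aggregate $\sum_i A_i$ would be the tridiagonal Nesterov operator whose spectrum realises the target condition number $\kappa = \beta/\alpha$, and I would scatter its off-diagonal couplings across the clients so that the entry linking coordinate $k$ to coordinate $k{+}1$ sits on a distinct client. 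This is the design choice that makes ``activating'' a new coordinate require communicating with one particular client, which is exactly where \Cref{assum:partial} can be made to bite.

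First I would establish the span invariant. Since the gradients $A_i w - b_i$ are \emph{zero-respecting} on this quadratic — they can only populate the coordinate immediately past the current support — a routine induction over the four black-box rules shows that after $t$ rounds the server memory $\Mscr_{server,t}$, and hence every admissible output $w^{(t)}$, lies in $\mathrm{Span}(e_1,\dots,e_{k_t})$, where $k_t$ is the largest coordinate whose coupling client has already been reached along a connected chain of communications. Because $w^*$ has geometrically decaying entries, $w^*_k \sim \rho^{k}$ with $\rho$ governed by $\sqrt{\kappa}$, confinement to the first $k_t$ coordinates forces $f(w^{(t)}) - f(w^*) \ge c\,\rho^{2 k_t}$ for an absolute constant $c$. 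The whole theorem therefore reduces to upper-bounding the expected frontier $\mE[k_T]$.

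The heart of the argument is the probabilistic analysis of how fast $k_t$ can advance under \Cref{assum:partial}. I would model the frontier as a process that can increment only when the round's uniform $\tau$-subset $\Nscr_t$ contains the particular client carrying the next needed coupling, an event of marginal probability $\tau/N$. Feeding this per-round participation rate into the geometric decay of the Nesterov chain, and tracking the acceleration that enters the frontier dynamics through a square root, yields a per-round contraction no faster than $1 - \Theta\!\bigl(\tfrac{\tau}{\sqrt N}\,\tfrac{1}{\sqrt\kappa}\bigr)$; a stochastic-dominance or martingale bound on $k_T$ makes this rigorous and supplies the first term of the minimum. Separately, specialising to full participation — or to a single client holding the entire Nesterov operator — reduces the instance to a centralised first-order lower bound and produces the computation-limited term $1 - \Theta(1/\kappa)$, while the trivial cap at $1$ covers the regimes where either of the other two exceeds one. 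Taking the adversary's slowest instance assembles these into the stated $\min\{\cdot,\cdot,1\}$ and, after taking expectations in the $\rho^{2k_T}$ bound, into \eqref{eq:lowerBound}.

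The step I expect to be the main obstacle is the frontier-advancement analysis: extracting the $\tau/\sqrt N$ dependence — rather than the naive $\tau/N$ that a crude ``wait for the right client'' coupon-collector bound would give — requires correctly coupling the random participation with the accelerable geometry of the chain and controlling the resulting stochastic process in expectation. Aligning the constants across the communication and computation regimes so that they fuse into a single clean $\min\{\cdot,\cdot,1\}$ is similarly delicate, and it is precisely the slack in this step that leaves the acknowledged factor-$\sqrt N$ gap between this lower bound and the accelerated FedDCD rate \eqref{eq:accFedDCDcomplexity}.
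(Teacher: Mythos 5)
Your overall strategy---a Nesterov chain whose consecutive couplings are scattered across distinct clients, a zero-respecting/span invariant confining $w^{(t)}$ to the first $K_t$ coordinates, and a frontier process that advances with probability $\tau/N$ per round---is exactly the skeleton of the paper's proof. But the two quantitative steps you defer are precisely where your proposal has gaps, and in both cases the mechanism you sketch is not the one that works.

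First, the $\tau/\sqrt N$ dependence. You attribute it to "acceleration entering the frontier dynamics through a square root" and propose a stochastic-dominance or martingale argument, flagging this as the main obstacle. In fact no subtle coupling is needed: $K_T$ is simply Binomial$(T,\tau/N)$, and $\mE[\delta^{K_T}]=\bigl(1-\tfrac{\tau}{N}(1-\delta)\bigr)^T$ by the moment generating function. The $\sqrt N$ gain comes entirely from the decay rate $\delta$ of the entries of $w^*$: because each $f_i$ must individually be $\alpha$-strongly convex, the sum $F=\sum_i f_i$ is $N\alpha$-strongly convex, so its effective condition number is $\beta/(N\alpha)$, not $\beta/\alpha$, and $\delta=\bigl(\tfrac{\sqrt{\beta/(N\alpha)}-1}{\sqrt{\beta/(N\alpha)}+1}\bigr)^2\approx 1-4\sqrt{N\alpha/\beta}$. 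Your statement that $w^*_k\sim\rho^k$ with $\rho$ governed by $\sqrt{\beta/\alpha}$ is therefore off by the crucial factor of $\sqrt N$, and combined with the $\tau/N$ advance rate it would yield $1-\Theta\bigl(\tfrac{\tau}{N}\tfrac{1}{\sqrt{\beta/\alpha}}\bigr)$ rather than the claimed $1-\Theta\bigl(\tfrac{\tau}{\sqrt N}\tfrac{1}{\sqrt{\beta/\alpha}}\bigr)$; the resulting bound would not match \cref{eq:lowerBound}.

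Second, the term $\tfrac{8\sqrt 2}{\beta/\alpha}$. Your proposed route---full participation, or a single client holding the entire Nesterov operator, reduced to a centralized first-order lower bound---fails in this black-box model, because clients are allowed unlimited local first-order computation per round ($\Mscr_{i,t}^{comp}=\bigcup_{k=0}^\infty\Ascr_k$): a single client holding the whole chain solves the problem exactly in one round, so no centralized iteration-complexity bound survives. The paper instead needs this term because the first construction requires $\beta>N\alpha$ (the scaling $\tfrac{\beta-N\alpha}{8}$ must be positive); in the complementary regime $\alpha\ge\beta/N$ it builds only $m=\lfloor(1-\epsilon)\beta/\alpha\rfloor$ distinct functions, replicates each over a block of $\approx N/m$ clients, and reruns the frontier argument with advance probability at most $\tau\lceil N/m\rceil/N$ and condition number $\beta/(m\alpha)=O(1)$, which is what produces $1-\Theta(1/(\beta/\alpha))$. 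Without this case split and the block-replication construction, your argument does not cover the regime $\alpha\ge\beta/N$ at all.
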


\Cref{thm:lowerBound} implies that the iteration complexity of the random participation first-order black-box procedure is bounded below by
\[
    \Omega \left( \frac{\sqrt{N}}{\tau} \sqrt{ \frac{\beta}{\alpha} } \ln\left( \frac{1}{\epsilon} \right) \right)
\]
when $\beta/\alpha$ is large. Compared to the iteration complexity of accelerated FedDCD (\cref{eq:accFedDCDcomplexity}), there is a gap $\sqrt{N}$ between the lower and upper bound, which suggests that the rate of accelerated FedDCD may not be optimal. It is an open problem whether the lower bound can be further tightened or if there is a algorithm with better rate than accelerated FedDCD.

\section{Experiments}
\label{sec:experiments}

We conduct experiments on real-world datasets to evaludate the effectiveness of FedDCD and accelereated FedDCD. We include the following algorithms for comparison. 
\begin{itemize}
    \item \textbf{Primal methods} (baseline algorithms): FedAvg~\citep{mcmahan2017communication}, FedProx~\citep{li2018federated} and SCAFFOLD~\citep{pmlr-v119-karimireddy20a}. 
    \item \textbf{Dual methods}: FedDCD (\Cref{alg:rbcd}) and AccFedDCD (\Cref{alg:fedArbcd}).
\end{itemize}

\begin{figure}[t]
\begin{subfigure}{.33\textwidth}
  \centering
  \includegraphics[width=\linewidth]{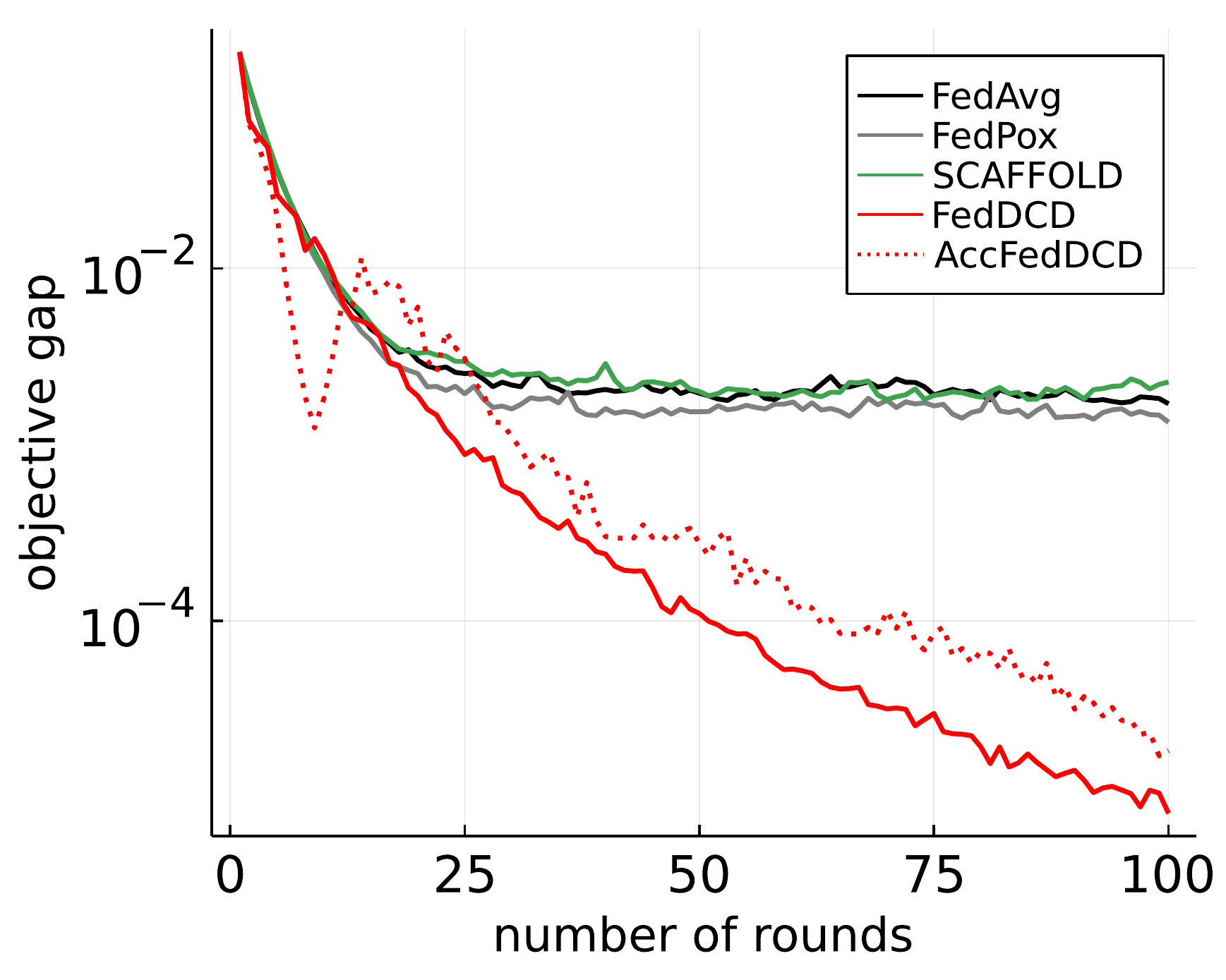}
  \caption{RCV1 with MLR model.}
  \label{fig:exp1_rcv1_mlr}
\end{subfigure}%
\begin{subfigure}{.33\textwidth}
  \centering
  \includegraphics[width=\linewidth]{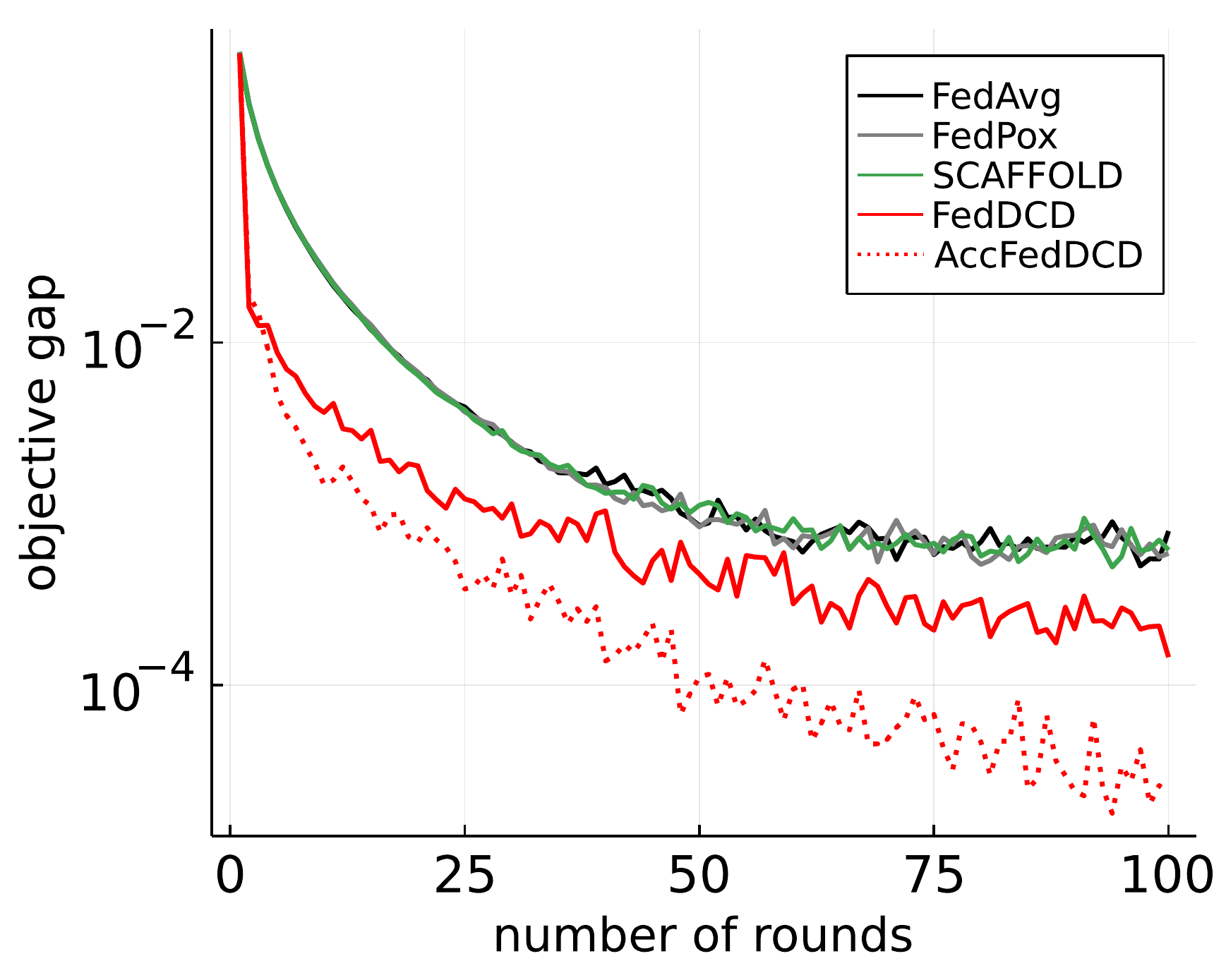}
  \caption{MNIST with MLR model.}
  \label{fig:exp1_mnist_mlr}
\end{subfigure}
\begin{subfigure}{.33\textwidth}
  \centering
  \includegraphics[width=\linewidth]{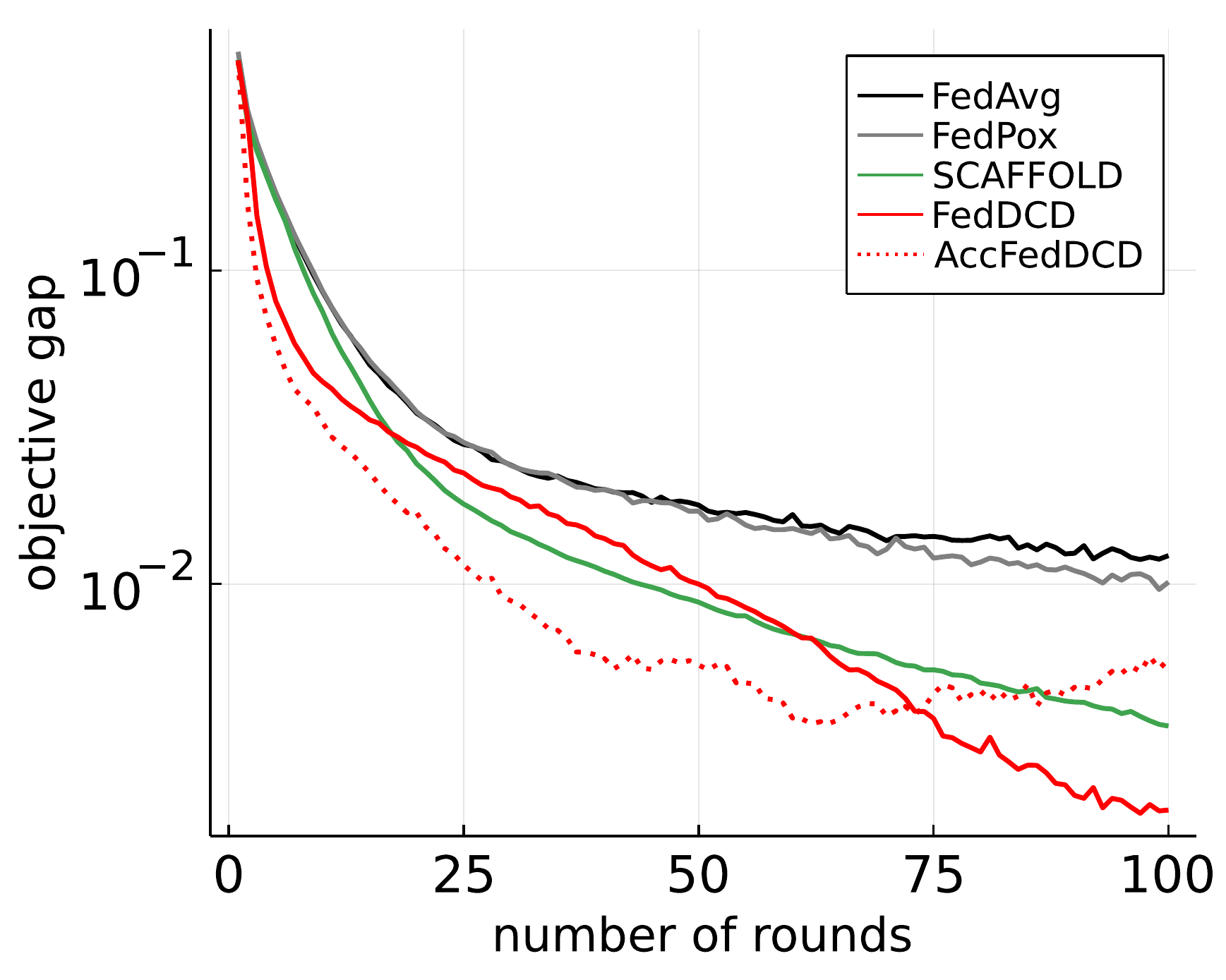}
  \caption{MNIST with MLP model.}
  \label{fig:exp1_mnist_mlp}
\end{subfigure}
\caption{Comparison between primal and dual methods; see~\cref{sec:exp1}} 
\label{fig:exp1}
\end{figure}

\subsection{Data sets and implementation} \label{sec:details}
\paragraph{RCV1} The first dataset we use is the Reuters Corpus Volume I (RCV1) dataset~\citep{lewis2004rcv1}, where the task is to categorize newswire
stories provided by Reuters, Ltd. for research purposes. The number of training points is $20,242$, the number of test points is $677,399$, the number of features is $47,236$ and the number of classes is $2$.

\paragraph{MNIST} The second dataset we use is the well known MNIST dataset~\citep{lecun1998gradient}, where the task is to classify hand-written digits. The number of training points is $60,000$, the number of test points is $10,000$, the number of features is $784$ and the number of classes is $10$.

All datasets are downloaded from the website of LIBSVM\footnote{\url{https://www.csie.ntu.edu.tw/~cjlin/libsvmtools/datasets/}}~\citep{libsvm}.

\paragraph{Model} For RCV1 dataset, we train a multinomial logistic regression (MLR) model. For MNIST dataset, we train two models: a MLR model and a 2-layer multilayer perceptron (MLP) model with 32 neurons in the hidden layer.

\paragraph{Data distribution} For the experiments in~\cref{sec:exp1} and~\cref{sec:exp2}, we distribute the data to clients in an i.i.d. fashion, i.e., the local datasets are uniformly sampled without replacement and the numbers of local training samples are equal among all the clients. For the experiments in~\cref{sec:exp3},  we distribute the data to clients in a non-i.i.d. fashion, i.e., each client gets samples of only two classes and the numbers of local training samples are not equal, which is the same setting as in the FedAvg paper~\citep{mcmahan2017communication}.

\paragraph{Implementation}  We set the number of clients to be $100$ for all experiments. We implement the variants of FedDCD algorithm proposed in this paper, as well as the primal methods mentioned above in the Julia language~\citep{bezanson2017julia}. Our code is publicly available at \url{https://github.com/ZhenanFanUBC/FedDCD.jl}. For the primal methods, we try the number of local epochs to be $5$ or $20$ and report the best result. For dual methods, when the model is MLR, we compute the local gradient via $10$ steps of Newton updates, and when the model is MLP, we compute the local gradient via $20$ steps of ADAM~\citep{kingma2014adam} updates (we locally perform 5 epochs SGD when the data is non-i.i.d. distributed). All the experiments are conducted on a Linux server with 8 CPUs and 64 GB memory. 

\subsection{Comparison between primal and dual methods} \label{sec:exp1}
We compare the performances between the well-known primal methods listed above and the dual method we proposed. We set the number of active clients in each round $\tau = 30$ for all methods. The experiment results is shown in~\cref{fig:exp1}, 
%where the x-axis represents the number of communication rounds, and the y-axis represent the gap between the current objective and optimal objective. 
As we can see from the plots, FedDCD and AccFedDCD have better convergence in terms of communication for the MLR models compared with primal methods and perform similarly as SCAFFOLD for the MLP model.

\subsection{Impact of participation rate} \label{sec:exp2}
We examine the impact of participation rate for both primal and dual methods. We set the number of active clients in each round $\tau \in \{ 5, 10, 30\}$ for all the primal and dual methods. We report the number of communication rounds required by different algorithms to achieve certain objective gap $\epsilon \in \{ 10^{-3}, 10^{-2}, 10^{-1}\}$. The results are summarized in \Cref{tab:impactOfParticipationRate}. We observe that FedDCD and AccFedDCD outperform other primal methods in most settings, the trend is obvious especially when the participation rate is high and the target objective gap is small.
The observation is consistent with our analysis in \Cref{sec:FedDCD}.

\begin{table}[t]
    \centering
    \begin{tabular}{ccccc}
    \toprule
    & & MLR for MNIST& {MLR for RCV1} & {MLP for MNIST}  \\ \hline
    Setup & Algorithm & \# rounds  & \# rounds  & \# rounds \\  \cline{1-5}
    \multirow{5}{*}{$\tau=30, \epsilon=10^{-3}$} & FedAvg & 51 &  $>100$  & $>100$  \\ 
    & FedProx & 47  & $>100$  & $>100$ \\
    & SCAFFOLD & 45  & $>100$  & $>100$ \\
    & FedDCD & 28  & \textbf{24}  & $>100$ \\
    & AccFedDCD & \textbf{15} & 30  & $>100$ \\ \cline{2-5}
    \multirow{5}{*}{$\tau=30, \epsilon=10^{-2}$} & FedAvg & 17 &  10  & $>100$  \\ 
    & FedProx & 17  & 10  & 99 \\
    & SCAFFOLD & 17  & 11  & 44 \\
    & FedDCD & 4  & 10  & 49 \\
    & AccFedDCD & \textbf{3} & \textbf{5}  & \textbf{28} \\ \cline{2-5}
    \multirow{5}{*}{$\tau=10, \epsilon=10^{-2}$} & FedAvg & 19 &  \textbf{11}  & $>500$  \\ 
    & FedProx & 18  & \textbf{11}  & $>500$ \\
    & SCAFFOLD & 18  & \textbf{11}  & $418$ \\
    & FedDCD & 19  & 27  & \textbf{96} \\
    & AccFedDCD & \textbf{14} & 14  & $>500$ \\ \cline{2-5}
    \multirow{5}{*}{$\tau=5, \epsilon=10^{-1}$} & FedAvg & 5 &  3  & 88  \\ 
    & FedProx & 5  & 3  & 83 \\
    & SCAFFOLD & 5  & 3  & 79 \\
    & FedDCD & 3  & \textbf{5}  & 5 \\
    & AccFedDCD & \textbf{1} & \textbf{1}  & \textbf{4} 
    \\ \bottomrule
    \end{tabular}
    \caption{Number of rounds required under different setups, $\epsilon$ stands for the target objective gap. We bold the smallest number of rounds for each setup of $\tau$ and $\epsilon$.}  \label{tab:impactOfParticipationRate}
 \end{table}

\subsection{Impact of data heterogeneity} \label{sec:exp3}
We examine the impact of data heterogeneity for both primal and dual methods. We distribute the data to clients in an non-i.i.d. fashion as described in \cref{sec:details}. We set the number of active clients in each round $\tau = 30$ for all the primal and dual methods. The training curves are shown in \cref{fig:exp3} and the final testing accuracies from different algorithms are reported in \Cref{tab:exp3}.
%We show the experiment results in~\cref{fig:exp3} and~\Cref{tab:exp3}. In~\cref{fig:exp3}, the x-axis represents the number of communication rounds, and the y-axis represent the gap between the current objective and optimal objective. In~\Cref{tab:exp3}, we show the test accuracies for all the algorithms. 
As we can see from the plots, for the MLR model, AccFedDCD out performs all the other algorithms, and for the MLP model, FedDCD and AccFedDCD have similar performance as SCAFFOLD. Besides, when compared with Figure~\ref{fig:exp1_mnist_mlr} and Figure~\ref{fig:exp1_mnist_mlp}, the convergence behaviors of FedDCD and AccFedDCD become worse, which reflects the impact of data heterogeneity. From \Cref{tab:exp3}, we can observe that both FedDCD, AccFedDCD can reach testing accuracy as good as SCAFFOLD.

\begin{table}[t]
    \begin{minipage}[c]{0.65\textwidth}%
    \centering
        \begin{subfigure}{.5\textwidth}
          \centering
          \includegraphics[width=.95\linewidth]{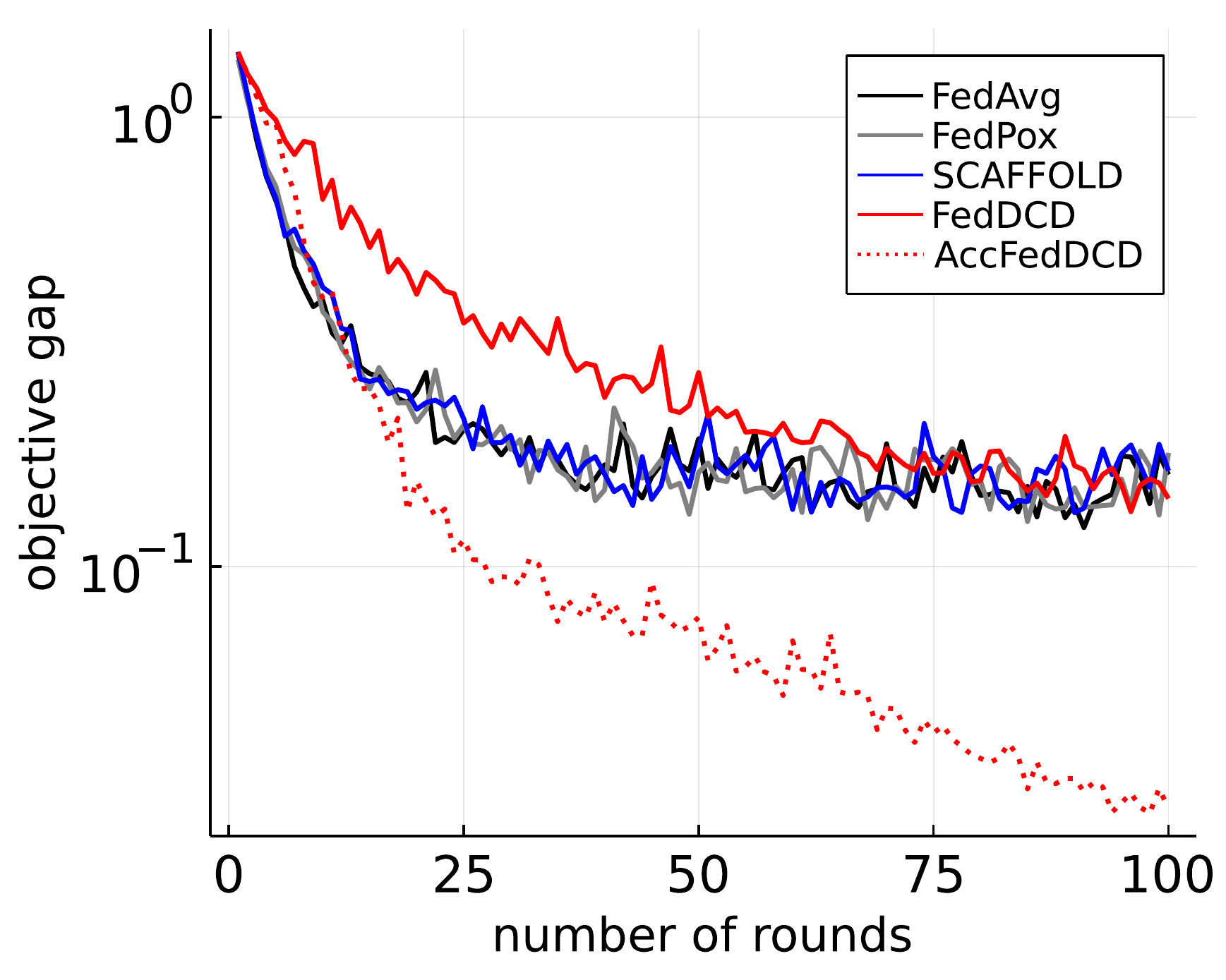}
          \caption{MNIST with MLR model.}
          \label{fig:exp3_mnist_mlr}
        \end{subfigure}%
        \begin{subfigure}{.5\textwidth}
          \centering
          \includegraphics[width=.95\linewidth]{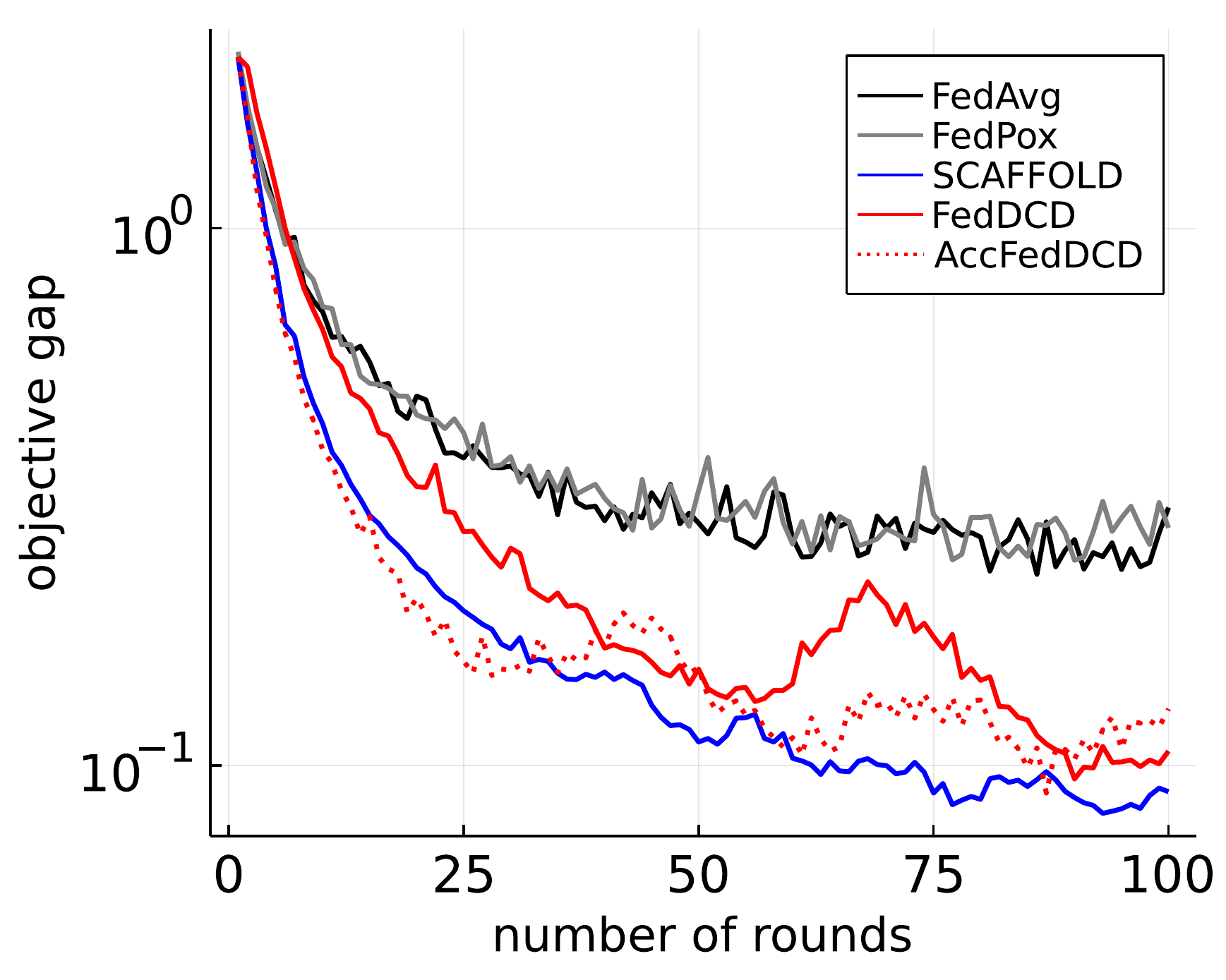}
          \caption{MNIST with MLP model.}
          \label{fig:exp3_mnist_mlp}
        \end{subfigure}
        \captionof{figure}{Impact of data heterogeneity; see~\cref{sec:exp3}.} 
        \label{fig:exp3}
    \end{minipage}
    \hfil
    \begin{minipage}[c]{0.3\textwidth}%
    \small
    \centering
        \begin{tabular}{ccc}
        \toprule
                        &  MLR      & MLP \\ \hline
            FedAvg      &  88.30\%  & 88.17\% \\
            FedProx     &  88.67\%  & 87.92\% \\ 
            SCAFFOLD    &  88.47\%  & 90.23\% \\
            FedDCD      &  88.16\%  & 90.02\% \\
            AccFedDCD   &  89.24\%  & 90.32\% \\
        \bottomrule
        \end{tabular}
        \caption{Test accuracy.}
        \label{tab:exp3}
    \end{minipage}
\end{table}

% \begin{figure}[t]
% \begin{subfigure}{.5\textwidth}
%   \centering
%   \includegraphics[width=.65\linewidth]{figures/exp3_mnist_mlr.pdf}
%   \caption{MNIST with MLR model.}
%   \label{fig:exp3_mnist_mlr}
% \end{subfigure}%
% \begin{subfigure}{.5\textwidth}
%   \centering
%   \includegraphics[width=.65\linewidth]{figures/exp3_mnist_mlp.pdf}
%   \caption{MNIST with MLP model.}
%   \label{fig:exp3_mnist_mlp}
% \end{subfigure}
% \caption{Impact of data heterogeneity; see~\cref{sec:exp3}} 
% \label{fig:exp3}
% \end{figure}

% \begin{table}[t]
%     \centering
%     \begin{tabular}{ccc}
%     \toprule
%                     &  MLR      & MLP \\ \hline
%         FedAvg      &  88.30\%  & 88.17\% \\
%         FedProx     &  88.67\%  & 87.92\% \\ 
%         SCAFFOLD    &  88.47\%  & 90.23\% \\
%         FedDCD      &  88.16\%  & 90.02\% \\
%         AccFedDCD   &  89.24\%  & 90.32\% \\
%     \bottomrule
%     \end{tabular}
%     \caption{Test accuracy.}
%     \label{tab:exp3}
% \end{table}

\section{Conclusion and future directions}
\label{sec:conclusion}

In this paper, by tackling the dual problem of federated optimization, we propose the federated dual coordinate descent (FedDCD) algorithm and its variants based on the random block coordinate descent algorithm~\citep{necoara2017random}. Both FedDCD and its variants satisfy the desired properties for federated learning and have better communication complexities than other SGD-based federated learning algorithms under certain scenarios. 

More importantly, FedDCD provides a general framework for federated optimization and suggests many interesting future research directions. First, it is possible to develop an asynchronous version of FedDCD by leveraging well-studied analysis of asynchronous parallel coordinate descent methods~\citep{LiuW15,0002WRBS15}. Next, one might consider client sampling strategies other than the standard uniform sampling. For example, there are some recent studies of the coordinate descent with the greedy selection rule~\citep{NutiniSLFK15,BCD_julie,fang2020greed}, which can be adopted with FedDCD. Finally, the lower bound of complexity of first-order methods with random participation for federated optimization is still an open problem. As we have shown in \Cref{sec:lowerBound}, our lower bound analysis has a $\BigOh(\sqrt{N})$ gap to the upper bound of accelerated FedDCD. We hope to explore whether the lower bound can be further tightened or an algorithm with a faster convergence rate can be developed. 

% Some possible future directions:
% \begin{itemize}
%     \item Asynchronous version. 
%     \item Greedy version.
%     \item Different participation rate.
%     \item Optimal convergence rate.  
% \end{itemize}

\bibliographystyle{plainnat}
\bibliography{refs/shorttitles, refs/master, refs/friedlander}

\newpage
\appendix

\section{Structure of the appendix}

In this appendix, we include some materials to supplement the main context. To make our argument cleaner, we consider the following optimization problem 
\begin{align}
    \minimize{x_1, \dots, x_n \in \Real}\enspace h(x) \coloneqq \sum_{i=1}^n h_i(x_i) \enspace\st\enspace \sum_{i=1}^n x_i = 0 \label{eq:CDObj}.
\end{align}
For simplicity, we assume $h_i$'s to be 1-dimensional scalar functions and all the results presented in the appendix can be easily extended to the block case. In this appendix, we analyze the convergence behavior of several extensions to the randomized block coordinate descent (RBCD) method proposed by~\citet{necoara2017random}. In \Cref{appendix:lemmas}, we present some useful lemmas. We provide a convergence analysis for inexact RBCD and accelerated RBCD respectively in \Cref{appendix:inexact_rbcd} and \Cref{appendix:accelerated_rbcd}. We provide the proofs for all the theorems in the main context in \Cref{appendix:derivation_main_context}. Finally, in \Cref{appendix:lower_bound}, we show the complexity lower bound for solving problem \eqref{eq:primal}. 

% We present some useful lemmas in  and provide the following extensions over~\citet{necoara2017random}'s block randomized coordinate descent (BRCD) with linear constraint:
% \begin{itemize}
%     \item We provide a convergence analysis of BRCD with linear constraint for convex and Lipschitz continuous objective (but not necessarily smooth).
%     \item We give an accelerated BRCD algorithm with linear constraint, the resulting algorithm could obtain convergence rate $\BigOh(1/T^2)$.
%     \item We derive convergence rate when using inexact gradient information.
% \end{itemize}
% Note that applying CD to solve problem in form~\cref{eq:CDObj} is not new. For example, the dual SVM with a bias term can also be written as~\cref{eq:CDObj}, and there is a line of research on the analysis of CD for~\cref{eq:CDObj}~\cite{tsy10}. The previous analysis of CD applied to~\cref{eq:CDObj} assumed the objective to be smooth. To our knowledge, our extensions described above has not appeared in the literature.

\section{Preliminaries and Lemmas} \label{appendix:lemmas}
In this section, we introduce some assumptions and notations used in the appendix. 
Throughout the appendix, we assume $h$ to be coordinate-wise smooth and strongly convex, which is formalized in the following assumption. 
\begin{assumption}[Structure of function $h$] \label{ass:hStruncture} 
  There exists positive constants $\{L_i \mid i \in [N]\}$ and $\{\mu_i \mid i \in [N]\}$ such that for all $x,y \in \Real$ and $i \in [N]$, 
  \begin{align}
      h_i(x) &\leq h_i(y) + \langle \nabla h_i(y), x - y \rangle + \frac{L_i}{2} (x - y)^2 \tag{smoothnes}
      \\ h_i(x) &\geq h_i(y) + \langle \nabla h_i(y), x - y \rangle + \frac{\mu_i}{2} (x - y)^2 \tag{strong convexity}.
  \end{align}
    
\end{assumption}

Let $L_{\max} \coloneqq \max_{i \in [n]} L_i, L_{\min} \coloneqq \min_{i \in [n]} L_i, \mu_{\max} \coloneqq \max_{i \in [n]} \mu_i, \mu_{\min} \coloneqq \min_{i \in [n]} \mu_i$ and $\bL = \diag(L_1, L_2, \ldots, L_n)$. Sometimes we will simply write $\mu_{\min}$ as $\mu$ in our analysis. 
We denote the set of coordinates selected as $I \subseteq [n]$. Given a vector $x \in \Real^n$, we define $x_I \coloneqq \sum_{i \in I} x_i e_i $. The identity matrix is written as $\mathbb{I}$ and $\mathbb{I}_I$ is the diagonal matrix with $i$th diagonal element equal to $1$ if $i \in I$ and $0$ otherwise. The constraint sets
\[
    \Cscr = \left\{x\in\Real^n ~\Bigg|~ \sum_{i=1}^n x_i=0\right\} \quad \text{and} \quad \Cscr_I=\left\{x\in\Real^n ~\Bigg|~ \sum_{i\in I}x_i = 0\right\}
\]
are used repeatedly in our analysis. Given $1 < \tau \leq n$, we use $\mathcal{P}_\tau$ to denote all possible subsets of $[n]$ that has cardinality $\tau$, e.g., $\mathcal{P}_\tau = \{ I \subseteq [n] \mid |I| = \tau \}$. We consider the distribution
\[
    \mathbb{P}(I) = \frac{e_I^T\bL^{-1}e_I}{\sum_{J \in \Pscr_\tau} e_J^T\bL^{-1}e_J}
\]
when generating random indices set $I \subseteq [n]$. This distribution is identical to the one used in~\citet{necoara2017random}. We refer readers to their work for more intuition behind this sampling scheme. Given a PSD matrix $W\in \Real^{n \times n}$, we define the norm $\|x\|_W^2 \coloneqq x^T W x~\forall x \in \Real^{n}$ and we also define $\langle x,y \rangle_{W}$ as $x^T W y$. The projection operator on $\Cscr_I$ with respect to $\|\cdot\|_W$ is defined as
\[
    \proj_{\Cscr_I}^{W}(x) = \argmin_{y} \|y-x\|_W^2 \enspace\st\enspace y \in \Cscr_I.
\]
We define the following four operators, which are widely used in our analysis:
\[
    P_I \coloneqq \proj_{\Cscr_I}^{\bL} \circ \mathbb{I}_I, \quad  G_I \coloneqq P_I \circ \bL^{-1}, \quad P_\tau \coloneqq \mE_I [P_I], \quad G_\tau \coloneqq \mE_I [G_I].
\]
Next we present some useful lemmas for our analysis.
% \begin{itemize}
%     \item $I \subseteq [n]$;
%     \item $W = \diag(w_1, \dots, w_n)$ with $w_i > 0$ for all $i$;
%     \item $\mathbb{I} \in \Real^{n\times n}$ is the identity matrix;
%     \item $\mathbb{I}_I \in \Real^{n\times n}$ is the diagonal matrix with $i$th diagonal element equal to $1$ if $i \in I$ and $0$ otherwise;
%     \item $\Cscr = \left\{d\in\Real^n \mid \sum_{i=1}^n d_i=0\right\}$;
%     \item $\Cscr_I=\left\{d\in\Real^n\mid \sum_{i\in I}d_i = 0\right\}$;
%     \item $\Pscr_\tau = \{I \subseteq [n] \mid |I| = \tau\}$;
%     \item the probability of $I$ being selected is $\mathbb{P}(I) = \frac{e_I^TW^{-1}e_I}{\sum_{J \in \Pscr_\tau} e_J^TW^{-1}e_J}$;
%     \item $P_I = \proj_{\Cscr_I}^{W} \circ \mathbb{I}_I$ and $G_I = P_I \circ W^{-1}$. 
% \end{itemize}

\begin{lemma}[Expression of the projection operator] \label{lemma:projection1}
The projection operator on the set $C_I$ can be expressed as 
    \[\proj_{\Cscr_I}^{W}(x) = \left(I - \frac{1}{e_I^T W^{-1} e_I}W^{-1}e_I e_I^T\right)x.\]
\end{lemma}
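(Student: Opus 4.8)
The plan is to treat the defining minimization as an equality-constrained convex quadratic program and solve it explicitly through its first-order optimality (KKT) conditions. The key observation is that $\Cscr_I = \{y : e_I^T y = 0\}$ is cut out by a \emph{single} linear equality, so $\proj_{\Cscr_I}^{W}(x)$ is the unique minimizer of the strictly convex objective $\|y-x\|_W^2$ over this hyperplane. Strict convexity (which requires $W$ to be positive definite, as it must be for $\|\cdot\|_W$ to be a genuine norm) guarantees that any point satisfying the stationarity and feasibility conditions is the global, unique solution, so it suffices to exhibit one.

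First I would form the Lagrangian $\mathcal{L}(y,\nu) = (y-x)^T W (y-x) + 2\nu\, e_I^T y$, introducing a scalar multiplier $\nu \in \Real$ for the constraint $e_I^T y = 0$ (the factor $2$ is only cosmetic). Setting the gradient in $y$ to zero gives $W(y-x) = -\nu\, e_I$, and since $W$ is invertible this yields $y = x - \nu\, W^{-1} e_I$. Substituting into the feasibility condition $e_I^T y = 0$ produces a scalar equation for $\nu$, namely $e_I^T x = \nu\, e_I^T W^{-1} e_I$; because $W^{-1}$ is positive definite and $e_I \neq 0$, the denominator $e_I^T W^{-1} e_I$ is strictly positive, so $\nu = (e_I^T x)/(e_I^T W^{-1} e_I)$ is well defined.

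Back-substituting this value of $\nu$ and rewriting $(e_I^T x)\, W^{-1} e_I = W^{-1} e_I e_I^T x$ collects the expression into the claimed matrix form $\bigl(I - \frac{1}{e_I^T W^{-1} e_I} W^{-1} e_I e_I^T\bigr)x$, completing the derivation. I do not expect a genuine obstacle here beyond bookkeeping; the only two points deserving a word of care are (i) verifying that the denominator cannot vanish, which is handled by positive definiteness of $W^{-1}$ together with $e_I \neq 0$, and (ii) justifying that stationarity plus feasibility suffice for global optimality, which follows from convexity of the objective and affineness of the constraint. As an independent check I would note that, since $\Cscr_I$ is a linear subspace, one can equivalently characterize the projection by requiring the output to lie in $\Cscr_I$ while the residual $y-x$ is $W$-orthogonal to $\Cscr_I$; verifying the stated formula against these two conditions recovers the same answer and confirms correctness.
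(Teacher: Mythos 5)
Your proposal is correct and follows essentially the same route as the paper: form the Lagrangian for the equality-constrained quadratic program, use stationarity to get $y = x - \nu W^{-1}e_I$, solve the feasibility condition for the multiplier, and back-substitute. The only difference is that you explicitly justify non-vanishing of the denominator and sufficiency of the KKT conditions via convexity, points the paper leaves implicit.
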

\begin{proof}
    By definition, the projection operator can be expressed as 
\begin{equation} \label{eq:proj_cI}
    \proj_{\Cscr_I}^{W}(x) = \argmin_{d} \frac{1}{2} (d - x)^TW(d-x) \enspace\st\enspace e_I^Td = 0.
\end{equation}
The Lagrangian function with respect to \cref{eq:proj_cI} takes the form 
\[\Lscr(d, \lambda) = \frac{1}{2} (d - x)^TW(d-x) + \lambda e_I^Td.\]
By checking the optimility condition with respect to $d$, we have
\[\nabla_d \Lscr(d, \lambda) = W(d - x) + \lambda e_I = 0 \enspace\Rightarrow\enspace d = x - \lambda W^{-1} e_I.\]
By checking the optimility condition with respect to $\lambda$, we have 
\[
    \nabla_\lambda \Lscr(d, \lambda) = e_I^Td
    = e_I^Tx - \lambda e_I^T W^{-1} e_I = 0 
    \enspace\Rightarrow\enspace
    \lambda = \frac{e_I^T x}{e_I^T W^{-1} e_I}.
\]
We can thus conclude that 
\[\proj_{\Cscr_I}^{W}(x) = \left(I - \frac{1}{e_I^T W^{-1} e_I}W^{-1}e_I e_I^T\right)x.\]
\end{proof}

As a consequence of \cref{lemma:projection1}, we have the following expressions for $P_I$ and $G_I$:
\begin{align*}
    P_I &= \mathbb{I}_I - \frac{1}{e_I^T\bL ^{-1}e_I}\bL^{-1}e_Ie_I^T\\
    G_I &= \mathbb{I}_I\bL^{-1} - \frac{1}{e_I^T\bL^{-1}e_I}\bL^{-1}e_Ie_I^T\bL^{-1}.
\end{align*}
Our next lemma gives explicit expressions for the expectations of $P_I$ and $G_I$. 
\begin{lemma}[Statistical properties of matrices $P$ and $G$] \label{lemma:matrix_expectation}
    For any $I \subseteq [n]$, we have the following relationships:
    \begin{align}
        G_{I} &= G_{I}^T, \tag{Symmetric} \\
        P_{I} &= P_{I}^2, \tag{Idempotent} \\
        G_{I} &= G_{I}^T \bL G_{I}, \label{eq:GeqGTLG} \\
        P_\tau \coloneqq \mathbb{E}_I[P_I] &= \frac{\tau - 1}{n - 1} P_{[n]}, \label{eq:Pnexp} \\
        G_\tau \coloneqq \mathbb{E}_I[G_I] &= \frac{\tau - 1}{n - 1} G_{[n]}, \label{eq:Gnexp} \\
        \mE_I \left[\| G_I x \|^2_{\bL} \right] &= \frac{\tau-1}{n-1}  \| G_{[n]} x \|^2_{\bL} \qquad \forall x \in \Real^n, \label{eq:GnormExp}\\
        \mE_I \left[ \ip{G_I x}{y}_{\bL}\right] &= \frac{\tau - 1}{n - 1}\ip{x}{y} \qquad \forall x,y \in \Cscr. \label{eq:GIpExp}
    \end{align}
\end{lemma}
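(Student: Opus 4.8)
The plan is to separate the purely algebraic identities (the first three) from the expectation identities (the last four), since the former are elementary linear algebra built on the closed forms recorded after \Cref{lemma:projection1}, whereas \eqref{eq:Pnexp} carries the combinatorial content and everything else reduces to it.

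\textbf{Algebraic identities.} I would work directly from
\[
    P_I = \mathbb{I}_I - \frac{1}{e_I^T\bL^{-1}e_I}\bL^{-1}e_Ie_I^T, \qquad G_I = P_I\bL^{-1}.
\]
Symmetry of $G_I$ is immediate: $\mathbb{I}_I\bL^{-1}$ is a product of diagonal matrices and $\bL^{-1}e_Ie_I^T\bL^{-1}$ is manifestly self-transpose. For idempotence I would abbreviate $c = e_I^T\bL^{-1}e_I$, $v = \bL^{-1}e_I$, $u = e_I$ and expand $P_I^2 = (\mathbb{I}_I - c^{-1}vu^T)^2$, using $\mathbb{I}_I^2 = \mathbb{I}_I$, $\mathbb{I}_I v = v$, $u^T\mathbb{I}_I = u^T$ and $u^T v = c$; the cross terms recombine to return $P_I$. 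The identity \eqref{eq:GeqGTLG} then follows from the first two: since $G_I$ is symmetric and $G_I\bL = P_I\bL^{-1}\bL = P_I$, we get $G_I^T\bL G_I = (P_I\bL^{-1})(\bL)(P_I\bL^{-1}) = P_I^2\bL^{-1} = P_I\bL^{-1} = G_I$.

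\textbf{Expectation identity for $P$.} This is the technical core. Writing $Z = \sum_{J\in\Pscr_\tau}e_J^T\bL^{-1}e_J$ and $s = \mathrm{tr}(\bL^{-1}) = \sum_i 1/L_i$, the sampling weight $\mathbb{P}(I)$ exactly cancels the factor $1/(e_I^T\bL^{-1}e_I)$ in the second term of $P_I$, so that
\[
    \mE_I[P_I] = \frac{1}{Z}\sum_{I\in\Pscr_\tau}\Big[(e_I^T\bL^{-1}e_I)\,\mathbb{I}_I - \bL^{-1}e_Ie_I^T\Big].
\]
I would evaluate the two sums by counting incidences: each index lies in $\binom{n-1}{\tau-1}$ subsets of size $\tau$ and each pair of indices in $\binom{n-2}{\tau-2}$ of them. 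This gives $\sum_I e_Ie_I^T = \binom{n-1}{\tau-1}\mathbb{I} + \binom{n-2}{\tau-2}(e_{[n]}e_{[n]}^T - \mathbb{I})$ and, after splitting the diagonal entry at $j$ into its $i=j$ and $i\neq j$ contributions, $\sum_I(e_I^T\bL^{-1}e_I)\mathbb{I}_I = \big[\binom{n-1}{\tau-1}-\binom{n-2}{\tau-2}\big]\bL^{-1} + \binom{n-2}{\tau-2}\,s\,\mathbb{I}$. Subtracting, the terms carrying $\binom{n-1}{\tau-1}-\binom{n-2}{\tau-2}$ cancel and one is left with $\binom{n-2}{\tau-2}\big(s\mathbb{I} - \bL^{-1}e_{[n]}e_{[n]}^T\big) = \binom{n-2}{\tau-2}\,s\,P_{[n]}$ by the closed form of $P_{[n]}$. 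Combined with $Z = \binom{n-1}{\tau-1}s$ and $\binom{n-2}{\tau-2}/\binom{n-1}{\tau-1} = (\tau-1)/(n-1)$, this yields \eqref{eq:Pnexp}; then \eqref{eq:Gnexp} follows by right-multiplying by $\bL^{-1}$, since $\mE_I[G_I] = \mE_I[P_I]\bL^{-1}$.

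\textbf{The quadratic and bilinear identities.} For \eqref{eq:GnormExp} I would invoke \eqref{eq:GeqGTLG} to write $\|G_Ix\|_{\bL}^2 = x^TG_I^T\bL G_Ix = x^TG_Ix$, take expectations, apply \eqref{eq:Gnexp}, and reverse the same step on $G_{[n]}$. For \eqref{eq:GIpExp}, symmetry of $G_I$ together with $G_I\bL = P_I$ gives $\ip{G_Ix}{y}_{\bL} = x^TP_Iy$; taking expectations and using \eqref{eq:Pnexp} reduces the claim to $x^TP_{[n]}y$, and the rank-one correction in $P_{[n]}$ vanishes because $e_{[n]}^Ty = 0$ for $y\in\Cscr$, leaving $\ip{x}{y}$. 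I expect the combinatorial averaging behind \eqref{eq:Pnexp} to be the main obstacle, since it requires carefully isolating the single-index (diagonal) from the pairwise (off-diagonal) incidence counts and verifying that they recombine into the rank-one structure of $P_{[n]}$; the remaining four identities are short consequences of \eqref{eq:Pnexp} and the algebraic relations.
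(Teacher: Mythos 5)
Your proposal is correct and follows essentially the same route as the paper: closed forms from \Cref{lemma:projection1} for the algebraic identities, incidence counting over $\tau$-subsets (with the sampling weight cancelling the $1/(e_I^T\bL^{-1}e_I)$ factor) for \eqref{eq:Pnexp}, and reduction of \eqref{eq:GnormExp} and \eqref{eq:GIpExp} to \eqref{eq:GeqGTLG} and the expectation identities. The only differences are minor and in your favor: you derive \eqref{eq:Gnexp} directly from \eqref{eq:Pnexp} by right-multiplication with $\bL^{-1}$ where the paper cites \citet[Theorem~3.3]{necoara2017random}, and you track the diagonal ($u=v$) incidence count $\binom{n-1}{\tau-1}$ explicitly and show its contribution cancels between the two sums, a term the paper's intermediate displays for $(a)$ and $(b)$ silently omit even though the final expression for $P_\tau$ is correct.
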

\begin{proof}
    The symmetry of $G_{I}$ and the idempotent of $P_{I}$ follow directly from the definitions. 
    
    For \cref{eq:GeqGTLG}, we have
    \begin{align*}
        G_{I}^T \bL G_{I} &~=~ G_{I} \bL G_{I} ~=~ P_{I} \bL^{-1} \bL P_{I} \bL^{-1} ~=~ P_{I} \bL^{-1} ~=~ G_{I}.
    \end{align*}

    The expression of $G_\tau$ follows directly from \cite[Theorem~3.3]{necoara2017random}. So we only need to derive the expression fro $P_\tau$. By definition, we have 
    \begin{align*}
        P_\tau &= \mathbb{E}_I[P_I]\\
        &= \sum_{I \in \Pscr_\tau} \mathbb{P}(I) \left[ \mathbb{I}_I - \frac{1}{e_I^TW^{-1}e_I}W^{-1}e_Ie_I^T \right]\\
        &= \sum_{I \in \Pscr_\tau} \frac{e_I^TW^{-1}e_I}{\sum_{J \in \Pscr_\tau} e_J^TW^{-1}e_J} \left[ \mathbb{I}_I - \frac{1}{e_I^TW^{-1}e_I}W^{-1}e_Ie_I^T \right]\\
        &= \underbrace{\sum_{I \in \Pscr_\tau} \frac{e_I^TW^{-1}e_I}{\sum_{J \in \Pscr_\tau} e_J^TW^{-1}e_J} \mathbb{I}_I}_{(a)} - \underbrace{\sum_{I \in \Pscr_\tau} \frac{1}{\sum_{J \in \Pscr_\tau} e_J^TW^{-1}e_J}W^{-1}e_Ie_I^T}_{(b)}.
    \end{align*}
    Let $\Sigma_\tau = \sum_{J \in \Pscr_\tau} e_J^TW^{-1}e_J$. Then the first component can be expressed as 
    \begin{align*}
        (a) &= \Sigma_\tau^{-1} \sum_{I \in \Pscr_\tau} e_I^TW^{-1}e_I \mathbb{I}_I\\
        &= \Sigma_\tau^{-1} \sum_{j=1}^{\binom{n}{\tau}} \left(\sum_{u \in I_j}w_u^{-1}\right)\left(\sum_{v \in I_j}e_ve_v^T\right)\\
        &= \Sigma_\tau^{-1} \sum_{j=1}^{\binom{n}{\tau}} \sum_{u \in I_j} \sum_{v \in I_j} w_u^{-1} e_ve_v^T \\
        &= \Sigma_\tau^{-1} \sum_{u=1}^n \sum_{v=1}^n w_u^{-1} e_ve_v^T \left(\sum_{j=1}^{\binom{n}{\tau}} \mathbbm{1}_{u,v \in I_j} \right)\\
        &= \binom{n-2}{\tau-2}\Sigma_\tau^{-1}e^TW^{-1}e \mathbb{I},
    \end{align*}
    and the second component can be expressed as 
    \begin{align*}
        (b) &= \Sigma_\tau^{-1} \sum_{I \in \Pscr_\tau} W^{-1}e_Ie_I^T\\
        &= \Sigma_\tau^{-1} W^{-1} \sum_{j=1}^{\binom{n}{\tau}} \sum_{u \in I_j} \sum_{v \in I_j} e_ue_v^T\\
        &= \Sigma_\tau^{-1} W^{-1} \sum_{u=1}^n \sum_{v=1}^n e_ue_v^T \left(\sum_{j=1}^{\binom{n}{\tau}} \mathbbm{1}_{u,v \in I_j} \right)\\
        &= \binom{n-2}{\tau-2}\Sigma_\tau^{-1}W^{-1}ee^T.
    \end{align*}
    Therefore, we have 
    \[P_\tau = \binom{n-2}{\tau-2}\Sigma_\tau^{-1}e^TW^{-1}e P_{[n]}.\]
    Next, we show that $\binom{n-2}{\tau-2}\Sigma_\tau^{-1}e^TW^{-1}e = \frac{\tau-1}{n-1}$. Indeed, we have 
    \begin{align*}
        \binom{n-2}{\tau-2}\Sigma_\tau^{-1}e^TW^{-1}e &= \binom{n-2}{\tau-2}\frac{1}{\sum_{J \in \Pscr_\tau} e_J^TW^{-1}e_J}e^TW^{-1}e\\
        &= \binom{n-2}{\tau-2} \frac{1}{\binom{n-1}{\tau-1}e^TW^{-1}e}e^TW^{-1}e\\
        &= \frac{\tau-1}{n-1}.
    \end{align*}
    This finishes the proof for~\cref{eq:Pnexp}.
    
    For any $x\in \Real^n$,
    \begin{align*}
        \mE_I \left[ \| G_I x \|_{\bL}^2 \right] &~=~ \mE_I \left[ x^T G_I^T \bL G_I x \right] \\
        &~=~ \mE_I \left[ x^T G_I x \right] \qquad \text{(By \cref{eq:GeqGTLG})} \\
        &~=~ \frac{\tau - 1}{n-1}   x^T G_{[n]} x  \qquad \text{(By \cref{eq:Gnexp})} \\
        &~=~ \frac{\tau - 1}{n-1} \| G_{[n]} x \|^2_{\bL}.
    \end{align*}
    This finishes the proof for~\cref{eq:GnormExp}.
    
    Finally, we prove \cref{eq:GIpExp},
    \begin{align*}
        \mE \left[\ip{G_Ix}{y}_{\bL} \right] &= \frac{\tau - 1}{n-1} \ip{G_{[n]}x}{y}_{\bL}\\
        &= \frac{\tau - 1}{n-1} \ip{x}{G_{[n]}\bL y}\\
        &= \frac{\tau - 1}{n-1} \ip{x}{P_{[n]}\bL^{-1}\bL y}\\
        &= \frac{\tau - 1}{n-1} \ip{x}{y}  \qquad \text{(Since $y \in \Cscr$)}.
    \end{align*}
\end{proof}

\begin{lemma}[Eigenvalues of $G_{[n]}$] \label{lemma:eigenG}
    \begin{align}
        \lambda_1 ( G_{[n]} ) &\leq 1/L_{\min}, \label{eq:largestEigenValG} \\
        \lambda_n ( G_{[n]} ) & = 0. \label{eq:smallestEigenValG}
    \end{align}
\end{lemma}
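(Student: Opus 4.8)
The plan is to work directly from the closed-form expression for $G_{[n]}$ obtained by specializing the formula for $G_I$ just above \Cref{lemma:matrix_expectation} to the case $I = [n]$, namely
\[
    G_{[n]} = \bL^{-1} - \frac{1}{e_{[n]}^T \bL^{-1} e_{[n]}} \bL^{-1} e_{[n]} e_{[n]}^T \bL^{-1},
\]
where $e_{[n]}$ is the all-ones vector (recall $\mathbb{I}_{[n]} = \mathbb{I}$). Two structural facts will be used: $G_{[n]}$ is symmetric (the ``Symmetric'' identity of \Cref{lemma:matrix_expectation}), and it is positive semidefinite, since \cref{eq:GeqGTLG} gives $x^T G_{[n]} x = x^T G_{[n]}^T \bL G_{[n]} x = \norm{G_{[n]} x}_{\bL}^2 \geq 0$ for every $x$. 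In particular all eigenvalues of $G_{[n]}$ are real and nonnegative, so it suffices to bound the largest eigenvalue above by $1/L_{\min}$ and to exhibit a single null vector for the smallest.

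For the upper bound \eqref{eq:largestEigenValG}, I would observe that the subtracted rank-one term equals $v v^T$ with $v = \bL^{-1} e_{[n]} / \sqrt{e_{[n]}^T \bL^{-1} e_{[n]}}$, and is therefore positive semidefinite. Hence $G_{[n]} \preceq \bL^{-1}$ in the Loewner order, and eigenvalue monotonicity gives $\lambda_1(G_{[n]}) \leq \lambda_1(\bL^{-1})$. Since $\bL^{-1} = \diag(1/L_1, \dots, 1/L_n)$ is diagonal, its largest eigenvalue is $\max_i 1/L_i = 1/L_{\min}$, which yields the claim.

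For the zero eigenvalue \eqref{eq:smallestEigenValG}, I would simply verify that $e_{[n]}$ lies in the kernel by direct substitution:
\[
    G_{[n]} e_{[n]} = \bL^{-1} e_{[n]} - \frac{e_{[n]}^T \bL^{-1} e_{[n]}}{e_{[n]}^T \bL^{-1} e_{[n]}} \bL^{-1} e_{[n]} = 0.
\]
Thus $0$ is an eigenvalue, and combined with the positive semidefiniteness established above it must be the smallest one, so $\lambda_n(G_{[n]}) = 0$.

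There is little genuine difficulty here; the only points requiring care are making the positive-semidefinite / Loewner-order step rigorous so that eigenvalue monotonicity applies, and correctly reading off $\lambda_1(\bL^{-1}) = 1/L_{\min}$ rather than $1/L_{\max}$, since $L_{\min}$ is the smallest diagonal entry of $\bL$ and hence produces the largest entry of $\bL^{-1}$.
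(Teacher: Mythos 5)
Your proof is correct, and it is essentially the argument the paper intends: the paper's own ``proof'' consists of the single sentence that the claims follow directly from the definition of $G_{[n]}$, and your write-up simply supplies the details (the closed form $G_{[n]} = \bL^{-1} - \frac{1}{e^T\bL^{-1}e}\bL^{-1}ee^T\bL^{-1}$, positive semidefiniteness via \cref{eq:GeqGTLG}, the Loewner bound $G_{[n]} \preceq \bL^{-1}$, and the null vector $e_{[n]}$). No gaps; your care about $\lambda_1(\bL^{-1}) = 1/L_{\min}$ is exactly right.
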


\begin{proof}
    Follows directly from the definition of $G_{[n]}$. 
\end{proof}

\begin{lemma}[Gradient at optimal] \label{lemma:gradient_optimal}
    Let $x^*$ be the optimal solution to \cref{eq:CDObj}. Then we have $P_{[n]}^T\nabla h(x^*) = 0$. 
\end{lemma}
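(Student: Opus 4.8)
The plan is to combine the first-order optimality conditions for the linearly-constrained problem \eqref{eq:CDObj} with the closed-form expression for the projection operator supplied by \Cref{lemma:projection1}. The essential observation is that at the optimum $\nabla h(x^*)$ must be normal to the constraint hyperplane $\Cscr$, i.e. it lies in $\mathrm{span}(e)$, and that $P_{[n]}^T$ annihilates precisely this direction. The whole argument is then a two-line computation.

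First I would write down the optimality conditions. Since each $h_i$ is strongly convex by \Cref{ass:hStruncture} and the single constraint $e^Tx=0$ is linear, the first-order conditions are necessary and sufficient for optimality. Forming the Lagrangian $\Lscr(x,\lambda)=h(x)+\lambda e^Tx$ and setting $\nabla_x\Lscr=0$ yields
\[
    \nabla h(x^*) = -\lambda e
\]
for some multiplier $\lambda\in\Real$; that is, the optimal gradient is a scalar multiple of the all-ones vector $e$.

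Next I would instantiate \Cref{lemma:projection1} with $W=\bL$ and $I=[n]$ (so that $e_I=e$ and $\mathbb{I}_I=\mathbb{I}$), which gives
\[
    P_{[n]} = \mathbb{I} - \frac{1}{e^T\bL^{-1}e}\bL^{-1}ee^T, \qquad \mbox{hence} \qquad P_{[n]}^T = \mathbb{I} - \frac{1}{e^T\bL^{-1}e}ee^T\bL^{-1}.
\]
A direct computation then shows $P_{[n]}^T e = e - \frac{e^T\bL^{-1}e}{e^T\bL^{-1}e}\,e = 0$. Substituting the optimality relation yields $P_{[n]}^T\nabla h(x^*) = -\lambda\,P_{[n]}^T e = 0$, as claimed.

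There is essentially no serious obstacle here; the statement is a short consequence of optimality together with the explicit form of the projection. The only point worth stating carefully is why the optimality condition forces $\nabla h(x^*)\in\mathrm{span}(e)$, which holds because $\mathrm{span}(e)=\Cscr^\perp$ is the normal space of the constraint hyperplane. Equivalently, one may observe that $\ker(P_{[n]}^T)=\mathrm{range}(P_{[n]})^\perp=\Cscr^\perp=\mathrm{span}(e)$, so the lemma simply records that $\nabla h(x^*)$ lands in this kernel.
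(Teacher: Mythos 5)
Your proof is correct, and it reaches the conclusion by a slightly different route than the paper. The paper argues abstractly from the first-order optimality condition $-\nabla h(x^*)\in\Nscr_{\Cscr}(x^*)$: since $P_{[n]}$ maps all of $\Real^n$ onto $\Cscr$ and fixes $x^*$, the variational inequality $\ip{-\nabla h(x^*)}{z-x^*}\le 0$ for $z\in\Cscr$ can be rewritten as $\ip{-P_{[n]}^T\nabla h(x^*)}{z}\le 0$ for \emph{all} $z\in\Real^n$, forcing $P_{[n]}^T\nabla h(x^*)=0$; no explicit formula for $P_{[n]}$ is needed. You instead make the normal direction concrete: the Lagrangian condition gives $\nabla h(x^*)=-\lambda e$, and the closed form of $P_{[n]}$ from \Cref{lemma:projection1} shows $P_{[n]}^T e=0$ by direct computation. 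Both arguments rest on the same geometric fact (the optimal gradient is normal to the hyperplane and $P_{[n]}^T$ annihilates $\mathrm{span}(e)=\Cscr^\perp$); the paper's version is marginally more robust in that it would survive if $\Cscr$ were replaced by a general closed convex set onto which $P_{[n]}$ surjects, while yours is more transparent about \emph{why} the identity holds, at the cost of invoking a constraint-qualification/multiplier step and the explicit projection formula. Your computation $P_{[n]}^T e = e - \frac{e^T\bL^{-1}e}{e^T\bL^{-1}e}\,e = 0$ checks out, so there is no gap.
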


\begin{proof}
    Since $x^*$ is the optimal solution to \cref{eq:CDObj}, by the first order optimality condition, we know that $-\nabla h(x^*) \in \Nscr_\Cscr(x^*)$, where $\Nscr_\Cscr(x^*)$ is the normal cone of $\Cscr$ at $x^*$. By the definition of normal cone, we know that 
    \begin{align*}
                    ~&~ \ip{-\nabla h(x^*)}{z - x^*} \leq 0 \enspace\forall z \in \Cscr \\
        \Rightarrow ~&~ \ip{-\nabla h(x^*)}{P_{[n]} z - x^*} \leq 0 \enspace\forall z \in \Real^n \\
        \Rightarrow ~&~ \ip{-\nabla h(x^*)}{P_{[n]} (z - x^*)} \leq 0 \enspace\forall z \in \Real^n \\
        \Rightarrow ~&~ \ip{-P_{[n]}^T \nabla h(x^*)}{z - x^*} \leq 0 \enspace\forall z \in \Real^n \\
        \Rightarrow ~&~ \ip{-P_{[n]}^T \nabla h(x^*)}{z} \leq 0 \enspace\forall z \in \Real^n \\
        \Rightarrow ~&~ P_{[n]}^T \nabla h(x^*) = 0.
    \end{align*}
\end{proof}
\begin{lemma}[Bound on gradient] \label{lemma:GnormLowerAndUpperBound}
    Under \Cref{ass:hStruncture}, we have 
    \[\frac{2\mu}{L_{\max}^2}(h(x^t) - h(x^*)) \leq \|G_{[n]}\nabla h(x^t)\|^2 \leq \frac{2L_{\max}}{L_{\min}^2}(h(x^t) - h(x^*)),\]
    and
    \[\frac{2\mu}{L_{\max}}(h(x^t) - h(x^*)) \leq \|G_{[n]}\nabla h(x^t)\|_{\bL}^2 \leq \frac{2L_{\max}}{L_{\min}}(h(x^t) - h(x^*))\]
    for any $x^t \in \Cscr$.
\end{lemma}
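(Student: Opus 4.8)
The plan is to prove the two $\bL$-norm bounds first and then deduce the two Euclidean bounds from the norm equivalence $L_{\min}\norm{v}^2 \le \norm{v}_{\bL}^2 \le L_{\max}\norm{v}^2$ applied to $v = G_{[n]}\nabla h(x^t)$. The two central inequalities are an upper bound coming from smoothness and a lower bound of Polyak--\L ojasiewicz type coming from strong convexity. Throughout I will use the identity $\norm{G_{[n]}g}_{\bL}^2 = g^\top G_{[n]} g$, which follows from $G_{[n]} = G_{[n]}^\top \bL G_{[n]}$ (see \cref{eq:GeqGTLG} in \Cref{lemma:matrix_expectation}).

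\textbf{Upper bound.} First I would record the factorization $G_{[n]} = \bL^{-1} P_{[n]}^\top$, which is immediate from the explicit expressions for $P_{[n]}$ and $G_{[n]}$ obtained from \Cref{lemma:projection1}. Combined with \Cref{lemma:gradient_optimal}, this gives $G_{[n]}\nabla h(x^*) = \bL^{-1}P_{[n]}^\top \nabla h(x^*) = 0$, so I may replace $\nabla h(x^t)$ by $v \coloneqq \nabla h(x^t) - \nabla h(x^*)$ inside $G_{[n]}$. Since $G_{[n]}$ is symmetric positive semidefinite, the identity above gives $\norm{G_{[n]}\nabla h(x^t)}_{\bL}^2 = v^\top G_{[n]} v \le \lambda_1(G_{[n]})\norm{v}^2 \le \tfrac1{L_{\min}}\norm{v}^2$, using $\lambda_1(G_{[n]})\le 1/L_{\min}$ from \Cref{lemma:eigenG}. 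Finally, the standard coercivity bound for smooth convex functions, applied coordinatewise under \Cref{ass:hStruncture}, yields $\norm{v}^2 \le 2L_{\max}\big(h(x^t)-h(x^*) - \ip{\nabla h(x^*)}{x^t-x^*}\big)$, where the cross term vanishes because $\nabla h(x^*)$ is parallel to $e$ (again by \Cref{lemma:gradient_optimal}) while $x^t - x^* \in \Cscr$. This produces the $\bL$-norm upper bound with constant $2L_{\max}/L_{\min}$.

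\textbf{Lower bound.} Starting from the strong-convexity inequality in \Cref{ass:hStruncture} and weakening $\tfrac{\mu}{2}\norm{y-x^t}^2 \ge \tfrac{\mu}{2L_{\max}}\norm{y-x^t}_{\bL}^2$, I obtain a quadratic lower model of $h$ valid for every $y$. Evaluating at $y = x^*$ and bounding by the minimum of this model over the subspace $\Cscr$ gives $h(x^*) \ge h(x^t) - \tfrac{L_{\max}}{2\mu}\norm{G_{[n]}\nabla h(x^t)}_{\bL}^2$: the unconstrained minimizer of the model is $-\tfrac{L_{\max}}{\mu}\bL^{-1}\nabla h(x^t)$, and its $\bL$-projection onto $\Cscr$ is exactly $-\tfrac{L_{\max}}{\mu}G_{[n]}\nabla h(x^t)$, since $G_{[n]} = P_{[n]}\bL^{-1}$ and $\proj^{\bL}_{\Cscr} = P_{[n]}$. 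Rearranging delivers the $\bL$-norm lower bound with constant $2\mu/L_{\max}$.

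\textbf{Finishing and main obstacle.} The Euclidean bounds then follow at once: the lower one from $\norm{v}^2 \ge \tfrac1{L_{\max}}\norm{v}_{\bL}^2$ (giving $2\mu/L_{\max}^2$) and the upper one from $\norm{v}^2 \le \tfrac1{L_{\min}}\norm{v}_{\bL}^2$ (giving $2L_{\max}/L_{\min}^2$). I expect the lower bound to be the hard part: one must carry out the constrained minimization in the $\bL$-geometry so that the projection lands precisely on $G_{[n]}$, and carefully match the weighting between the Euclidean strong-convexity term and the $\bL$-norm that defines $G_{[n]}$ — this matching is exactly what produces the factor $L_{\max}$ in the denominator. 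The identity $G_{[n]}\nabla h(x^*)=0$ is the other delicate point, since without it $\norm{\nabla h(x^t)}^2$ does not vanish at optimality and so cannot be controlled by $h(x^t)-h(x^*)$.
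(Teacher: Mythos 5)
Your proof is correct and arrives at all four bounds with the paper's constants, but it takes a noticeably different route. The paper introduces the composite function $\phi(x) \coloneqq h(P_{[n]}x)$, derives Euclidean Polyak--\L ojasiewicz-type inequalities for $\phi$ restricted to $\Cscr$ from the strong convexity and smoothness of $h$, identifies $G_{[n]}\nabla h(x^t)$ with $\bL^{-1}\nabla\phi(x^t)$, and then converts between norms using the extreme eigenvalues of $\bL^{-1}$. You instead work directly with the quadratic form via the identity $\|G_{[n]}g\|_{\bL}^2 = g^\top G_{[n]} g$ from \cref{eq:GeqGTLG}: for the lower bound you minimize the strong-convexity model over $\Cscr$ in the $\bL$-geometry, which lands exactly on $-\tfrac{L_{\max}}{2\mu}\|G_{[n]}\nabla h(x^t)\|_{\bL}^2$ without ever introducing $\phi$; for the upper bound you combine $\lambda_1(G_{[n]})\le 1/L_{\min}$ with the cocoercivity inequality $\|\nabla h(x^t)-\nabla h(x^*)\|^2 \le 2L_{\max}(h(x^t)-h(x^*))$ and the observation $G_{[n]}\nabla h(x^*)=0$ (both correctly justified from \Cref{lemma:gradient_optimal} and the factorization $G_{[n]}=\bL^{-1}P_{[n]}^\top$). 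Your upper-bound argument is arguably more self-contained than the paper's: the paper's step $\phi(x^t)-\phi(x^*)\ge \tfrac{1}{2L_{\max}}\|\nabla\phi(x^t)\|^2$ requires some care because $\nabla\phi(x^t)=P_{[n]}^\top\nabla h(x^t)$ need not lie in $\Cscr$ and $P_{[n]}$ is an oblique (non-Euclidean) projection, whereas your cocoercivity route avoids this entirely. What the paper's $\phi$-based approach buys is a single template that produces both directions symmetrically; what yours buys is that the key objects ($G_{[n]}$ as a quadratic form, the $\bL$-projection) appear explicitly at each step, and the delicate points you flag — the constrained minimization in the $\bL$-metric and the vanishing of $G_{[n]}\nabla h(x^*)$ — are indeed exactly where the constants $L_{\max}$ and $L_{\min}$ are decided.
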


\begin{proof}
    We define a helper function $\phi:\Real^n\to\Real$ as 
    \[\phi(x) \coloneqq h(P_{[n]}x).\]
    By the assumption that $h$ is $\mu$-strongly convex, it follows that 
    \begin{align*}
        &~ h(y) \geq h(x) + \ip{\nabla h(x)}{y-x} + \frac{\mu}{2}\|x-y\|^2 \enspace \forall x, y\\
        \Rightarrow &~  h(P_{[n]}y) \geq h(P_{[n]}x) + \ip{\nabla h(P_{[n]}x)}{P_{[n]}y-P_{[n]}x} + \frac{\mu}{2}\|P_{[n]}x-P_{[n]}y\|^2 \enspace \forall x, y\\
        \Rightarrow &~  h(P_{[n]}y) \geq h(P_{[n]}x) + \ip{P_{[n]}^T\nabla h(P_{[n]}x)}{y-x} + \frac{\mu}{2}\|P_{[n]}x-P_{[n]}y\|^2 \enspace \forall x, y\\
        \Rightarrow &~  \phi(y) \geq \phi(x) + \ip{\nabla \phi(x)}{y-x} + \frac{\mu}{2}\|P_{[n]}x-P_{[n]}y\|^2 \enspace \forall x, y\\
        \Rightarrow &~  \phi(y) \geq \phi(x) + \ip{\nabla \phi(x)}{y-x} + \frac{\mu}{2}\|x-y\|^2 \enspace \forall x, y \in \Cscr.
    \end{align*}
    Fix $x = x^t$ and take minimization with respect to $y \in \Cscr$ to both sides of the inequality, we can get 
    \[\frac{1}{2}\|\nabla \phi(x^t)\|^2 \geq \mu(\phi(x^t) - \phi(x^*)).\]
    It follows that 
    \begin{align*}
        \|G_{[n]}\nabla h(x^t)\|^2 &= \|\bL^{-1}\nabla \phi(x^t)\|^2
        \\&\geq \frac{1}{L_{\max}^2} \|\nabla \phi(x^t)\|^2
        \\&\geq \frac{2\mu}{L_{\max}^2}(\phi(x^t) - \phi(x^*))
        \\&= \frac{2\mu}{L_{\max}^2}(h(x^t) - h(x^*)).
    \end{align*}
    Similarly, for the $L$-norm, we have
    \begin{align*}
        \|G_{[n]}\nabla h(x^t)\|_{\bL}^2 &= \|\bL^{-1}\nabla \phi(x^t)\|_{\bL}^2
        \\&= \|\bL^{-\frac{1}{2}}\nabla \phi(x^t)\|^2
        \\&\geq \frac{1}{L_{\max}} \|\nabla \phi(x^t)\|^2
        \\&\geq \frac{2\mu}{L_{\max}}(\phi(x^t) - \phi(x^*))
        \\&= \frac{2\mu}{L_{\max}}(h(x^t) - h(x^*)).
    \end{align*}
    By the same reason and the assumption that $h$ is $L$-smooth, we can conclude that for any $x^t$,
    \[\phi(x^t) - \phi(x^*) \geq \frac{1}{2L_{\max}}\|\nabla \phi(x^t)\|^2.\]
    It follows that 
    \begin{align*}
        \|G_{[n]}\nabla h(x^t)\|^2 &= \|\bL^{-1}\nabla \phi(x^t)\|^2
        \\&\leq \frac{1}{L_{\min}^2}\|\nabla \phi(x^t)\|^2
        \\&\leq \frac{2L_{\max}}{L_{\min}^2} (\phi(x^t) - \phi(x^*))
        \\&= \frac{2L_{\max}}{L_{\min}^2}(h(x^t) - h(x^*)).
    \end{align*}
    Similarly, for the $L$-norm, we have
    \begin{align*}
        \|G_{[n]}\nabla h(x^t)\|_{\bL}^2 &= \|L^{-1}\nabla \phi(x^t)\|_{\bL}^2
        \\&= \|\bL^{-\frac{1}{2}}\nabla \phi(x^t)\|^2
        \\&\leq \frac{1}{L_{\min}}\|\nabla \phi(x^t)\|^2
        \\&\leq \frac{2L_{\max}}{L_{\min}}(\phi(x^t) - \phi(x^*))
        \\&= \frac{2L_{\max}}{L_{\min}}(h(x^t) - h(x^*)).
    \end{align*}
\end{proof}

\begin{lemma}[Three-point property with constraint \citep{tse08}] \label{lemma:threePoints}
    Let $u$ be a convex function, and let $D_\Phi(\cdot, \cdot)$ be the Bregman divergence induced by the mirror map $\Phi$. Given a convex constraint set $C \in \Real^d$. For a given vector $z$, let 
    \begin{align} \label{eq:defzplus}
        z^+ \coloneqq \underset{x \in C}{\arg\min} \{ u(x) + D_\Phi(x, z) \}. 
    \end{align}
    Then 
    \begin{align}
        u(x) + D_\Phi(x, z) \geq u(z^+) + D_\Phi(z^+, z) + D_\Phi(x, z^+) \quad \forall x \in C. \label{eq:theePointIneq}
    \end{align}
\end{lemma}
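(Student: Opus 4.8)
The plan is to combine the first-order optimality condition that characterizes the constrained minimizer $z^+$ with an exact algebraic identity satisfied by every Bregman divergence — the so-called three-point identity — and then to invoke the convexity of $u$. Throughout I would use that, for the mirror map $\Phi$, the divergence is $D_\Phi(x,y) = \Phi(x) - \Phi(y) - \langle \nabla\Phi(y),\, x-y\rangle$.

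First I would establish the purely algebraic three-point identity: for any $x$, $z$, and $z^+$,
\[
    D_\Phi(x,z) = D_\Phi(x,z^+) + D_\Phi(z^+,z) + \langle \nabla\Phi(z^+) - \nabla\Phi(z),\, x - z^+ \rangle .
\]
This follows by expanding all three divergences from the definition and cancelling the $\Phi(x)$, $\Phi(z^+)$, and $\Phi(z)$ terms together with the linear pieces; it requires no optimality and no convexity, only differentiability of $\Phi$.

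Next I would write down the optimality condition for $z^+$. Since $z^+$ minimizes the convex function $x \mapsto u(x) + D_\Phi(x,z)$ over the convex set $C$, and since $\nabla_x D_\Phi(x,z)\big|_{x=z^+} = \nabla\Phi(z^+) - \nabla\Phi(z)$, there exists a subgradient $g \in \partial u(z^+)$ such that the variational inequality
\[
    \langle g + \nabla\Phi(z^+) - \nabla\Phi(z),\; x - z^+ \rangle \;\ge\; 0 \qquad \forall x \in C
\]
holds. Convexity of $u$ gives $u(x) \ge u(z^+) + \langle g,\, x - z^+ \rangle$ for all $x$. I would then substitute the three-point identity into the left-hand side $u(x) + D_\Phi(x,z)$, apply the convexity lower bound on $u(x)$, and finally discard the aggregate inner-product term $\langle g + \nabla\Phi(z^+) - \nabla\Phi(z),\, x-z^+\rangle$, which is nonnegative by the variational inequality. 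What remains is exactly $u(z^+) + D_\Phi(z^+,z) + D_\Phi(x,z^+)$, the claimed bound.

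The main obstacle is the careful handling of the first-order condition when $u$ is merely convex rather than differentiable and the minimization is constrained to $C$: one must select the correct subgradient $g \in \partial u(z^+)$ so that the aggregate subgradient of $u + D_\Phi(\cdot,z)$ at $z^+$ lies in the normal cone of $C$ at $z^+$, which is precisely what the displayed variational inequality encodes; everything else is bookkeeping. Since this is the standard result of \citet{tse08}, one could simply cite it, but the two-ingredient argument above — the algebraic three-point identity together with the optimality variational inequality — is the natural self-contained route.
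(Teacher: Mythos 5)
Your argument is correct: the algebraic three-point identity $D_\Phi(x,z) = D_\Phi(x,z^+) + D_\Phi(z^+,z) + \langle \nabla\Phi(z^+)-\nabla\Phi(z),\,x-z^+\rangle$ checks out by direct expansion, and combining it with the variational inequality at $z^+$ and the subgradient inequality for $u$ yields exactly \cref{eq:theePointIneq}. The paper itself gives no proof of this lemma --- it is imported verbatim from \citet{tse08} --- and your derivation is the standard one found there, so there is nothing to reconcile; the only caveat, which you already flag, is that splitting $\partial\bigl(u + D_\Phi(\cdot,z) + \iota_C\bigr)(z^+)$ into $\partial u(z^+)$, the gradient of the divergence, and the normal cone requires a mild constraint qualification, which is automatic in the paper's use case where $u$ is linear.
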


\begin{lemma}[Descent lemma] \label{lemma:descentLemma}
     Under \Cref{ass:hStruncture}. Given $x \in \Real^n$, let $y \coloneqq x - G_I \nabla h(x) $, then
    \begin{align*}
       \mE[ h(y) ] ~\leq~ h(x) - \frac{1}{2} \frac{\tau-1}{n-1} \| G_{[n]} \nabla h(x) \|_{\bL}^2,
    \end{align*}
    and
    \begin{align*}
        \mE[ h( y ) ] - h(x^*) \leq \left(1 - \frac{\mu_{\min}}{L_{\max}} \right) \mE[ h( x ) ] - h(x^*).
    \end{align*}
\end{lemma}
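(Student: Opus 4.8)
The plan is to bound $h(y)$ by a one-step descent estimate coming from the separable (coordinate-wise) smoothness of $h$, collapse the resulting first- and second-order terms using the projection identity already established, and then pass to the expectation over the random block $I$. Throughout I write $g \coloneqq \nabla h(x)$ and record the structural fact that $d \coloneqq -G_I g = -P_I \bL^{-1} g$ is supported on $I$ and lies in $\Cscr_I$, so that $y = x + d$ modifies only the coordinates in $I$ and, crucially, preserves feasibility: if $x \in \Cscr$ then $\sum_i y_i = \sum_i x_i = 0$, hence $y \in \Cscr$.

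For the first inequality I would first invoke smoothness. Because $h = \sum_i h_i$ is separable and each $h_i$ is $L_i$-smooth, the quadratic upper bound reads
\[ h(y) \;\le\; h(x) + \ip{g}{d} + \tfrac12 \|d\|_{\bL}^2 \;=\; h(x) - \ip{g}{G_I g} + \tfrac12 \|G_I g\|_{\bL}^2. \]
The key step is to eliminate the cross term using \cref{eq:GeqGTLG}, i.e. $G_I = G_I^{T}\bL G_I$, which gives
\[ \|G_I g\|_{\bL}^2 = g^{T} G_I^{T} \bL G_I\, g = g^{T} G_I g = \ip{g}{G_I g}. \]
Substituting, the two terms combine into the clean descent $h(y) \le h(x) - \tfrac12 \|G_I g\|_{\bL}^2$. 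Taking $\mE_I[\cdot]$ and applying \cref{eq:GnormExp}, which contributes exactly the factor $\tfrac{\tau-1}{n-1}$, yields the first claimed bound.

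For the second inequality I would feed the first bound into the strong-convexity estimate of \Cref{lemma:GnormLowerAndUpperBound}. Since $x \in \Cscr$, that lemma supplies $\|G_{[n]} g\|_{\bL}^2 \ge \tfrac{2\mu_{\min}}{L_{\max}}\bigl(h(x) - h(x^*)\bigr)$. Plugging this in and subtracting $h(x^*)$ from both sides produces the contraction
\[ \mE[h(y)] - h(x^*) \;\le\; \Bigl(1 - \tfrac{\tau-1}{n-1}\tfrac{\mu_{\min}}{L_{\max}}\Bigr)\bigl(h(x) - h(x^*)\bigr), \]
which matches the factor $\tfrac{\tau-1}{N-1}\tfrac{\alpha}{\beta}$ of \Cref{thm:necoaraConvergenceRate} under the dual correspondence $\mu_{\min} = 1/\beta$, $L_{\max} = 1/\alpha$.

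The main obstacle is structural rather than analytic: the whole argument hinges on the cancellation produced by $G_I = G_I^{T}\bL G_I$, which is what makes the sampled partial-projection step behave like an exact coordinate gradient step in the $\bL$-norm and reduces the descent to a single quadratic. The two points to verify with care are (i) that $G_I g$ is supported on $I$ and lies in $\Cscr_I$, so that the separable smoothness bound may be applied and feasibility is preserved; and (ii) that \Cref{lemma:GnormLowerAndUpperBound} is invoked only at a feasible point $x \in \Cscr$, which the feasibility-preservation observation guarantees along the whole trajectory.
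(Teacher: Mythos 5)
Your proposal is correct and follows essentially the same route as the paper: the coordinate-wise smoothness upper bound, the identity $G_I = G_I^{T}\bL G_I$ from \cref{eq:GeqGTLG} to merge the linear and quadratic terms into $-\tfrac12\|G_I\nabla h(x)\|_{\bL}^2$, the expectation identity \cref{eq:GnormExp} for the factor $\tfrac{\tau-1}{n-1}$, and \Cref{lemma:GnormLowerAndUpperBound} for the contraction. Your added care about feasibility ($x\in\Cscr$, needed to invoke \Cref{lemma:GnormLowerAndUpperBound}) is a point the paper leaves implicit, and your derived contraction factor $1-\tfrac{\tau-1}{n-1}\tfrac{\mu_{\min}}{L_{\max}}$ matches what the paper's proof actually establishes.
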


\begin{proof}
    % By \citet[Equation~7]{necoara2017random}.
    For the first inequality,
    \begin{align*}
       \mE[ h(y) ] &~\leq~ \mE \left[ h(x) + \langle \nabla h(x), y - x \rangle + \frac{1}{2} \| y - x \|_{\bL}^2 \right] \\
       &~=~ h(x) - \mE \left[ \langle \nabla h(x), G_I \nabla h(x) \rangle + \frac{1}{2}  \| G_I \nabla h(x) \|_{\bL}^2 \right] \\
       &~=~ h(x) - \frac{1}{2} \frac{\tau-1}{n-1} \| G_{[n]} \nabla h(x) \|_{\bL}^2 \quad \text{(By \cref{eq:GnormExp} and \cref{eq:GeqGTLG})}.
    \end{align*}
    For the second inequality, apply \cref{lemma:GnormLowerAndUpperBound} to the above equation, we get
    \begin{align*}
        \mE[ h(y) ] &~\leq~ h(x) - \frac{\tau-1}{n-1} \frac{\mu_{\min}}{L_{\max}} ( h( x ) - h(x^*) ) \\
        \Rightarrow~~ \mE[ h(y) ] - h(x^*) &~\leq \left(1 -  \frac{\tau-1}{n-1} \frac{\mu_{\min}}{L_{\max}} \right) ( h(x) - h(x^*) ). 
    \end{align*}
\end{proof}

\section{Inexact randomized block coordinate descent} \label{appendix:inexact_rbcd}
% There is a line of research on the convergence of randomized CD with inexact gradient information~\cite{tbd}. 

\begin{algorithm}[t]
 \DontPrintSemicolon
 \SetKwComment{tcp}{\tiny [}{]}
 \caption{Inexact Random Block Coordinate Descent Method with Linear Constraint\label{alg:inexactrbcd}}
 \smallskip
 \KwIn{number of participating coordinates in each round $1<\tau\leq n$, $\delta \in (0,1)$}
 $x_i^{(0)} \gets 0$ for all $i \in [n]$, let $r =\frac{\tau-1}{n-1}$. \tcp*{\tiny initialization}
 \For{\nllabel{alg-level-set-loop}$t\gets0,1,\ldots,T$}{
    $I \leftarrow \mbox{ random set of } \tau \mbox{ coordinates }$\tcp*{\tiny randomly select a subset of coordinates}
    $\wt{\nabla} h(x^{(t)}) \gets 0$ \tcp*{\tiny }
    \For{\nllabel{alg-level-set-loop}$i\in I$}{
        $\wt{\nabla}_i h(x^{(t)}) \gets \texttt{oracle}(\wt{\nabla} h_i(x^{(t-1)}), \delta/2 )$ 
    }
    $x^{(t+1)} \coloneqq x^{(t)} - \eta_t G_{I_t} \wt{\nabla} h(x^{(t)})$ \tcp*{\tiny update iterate}
 }
 \Return{$x^{(T)}$}
\end{algorithm}

In this section, we analyze the convergence behaviour of RBCD with inexact gradient oracle for solving \cref{eq:CDObj}. The detailed algorithm is shown in \Cref{alg:inexactrbcd} and the convergence rate is shown in the following theorem. 
\begin{theorem} \label{thm:inexact_rbcd}
    Assume \Cref{ass:hStruncture} is satisfied. Let
    \begin{align}
        \kappa &~=~ \frac{ r \mu }{ 8 L_{\max}} \min \left\{ \frac{1}{4}, \frac{L_{\min}^{\frac{3}{2}}}{4 r^{\frac{1}{2}} L_{\max}^{\frac{3}{2}} } \right\} ~\in (0,1) \nonumber \\
        \delta &~=~ (1-\kappa) / 2, \nonumber \\
        \eta &~=~ \min \left\{ \frac{1}{4}, \frac{L_{\min}^{\frac{3}{2}}}{4 r^{\frac{1}{2}} L_{\max}^{\frac{3}{2}} } \right\}
    \end{align} 
    Let $x^{(T)}$ be the iterate generated from \Cref{alg:inexactrbcd}, then we have 
    \[\mE\left[h(x^{(T)})\right] - h(x^*) \leq (1-\kappa)^T\left[h(x^{(0)}) - h(x^*) \right]. \]
\end{theorem}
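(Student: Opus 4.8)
The plan is to prove a one-step contraction of a suitable Lyapunov function and then unroll it over $t=0,\dots,T$. Write $r=\tfrac{\tau-1}{n-1}$, and for a generic iteration let $x^+ = x - \eta\,G_I \wt{\nabla} h(x)$ be the update. The starting point is the block-wise smoothness inequality of \Cref{ass:hStruncture}, which gives
\[
h(x^+) \;\leq\; h(x) + \ip{\nabla h(x)}{x^+ - x} + \tfrac12\|x^+-x\|_{\bL}^2 \;=\; h(x) - \eta\,\ip{\nabla h(x)}{G_I \wt{\nabla} h(x)} + \tfrac{\eta^2}{2}\|G_I \wt{\nabla} h(x)\|_{\bL}^2.
\]
First I would decompose $\wt{\nabla} h(x) = \nabla h(x) + e$, where $e$ is the oracle error (supported on $I$), and take the conditional expectation over the random block $I$ and the oracle. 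Using $G_I = G_I^T\bL G_I$ from \eqref{eq:GeqGTLG} the exact part collapses to $\ip{\nabla h(x)}{G_I\nabla h(x)} = \|G_I\nabla h(x)\|_{\bL}^2$, and then $\mE_I[\|G_I\nabla h(x)\|_{\bL}^2] = r\,\|G_{[n]}\nabla h(x)\|_{\bL}^2$ by \eqref{eq:GnormExp}. This leaves a leading descent term proportional to $-\eta\,r\,\|G_{[n]}\nabla h(x)\|_{\bL}^2$, a cross term $-\eta\,\ip{G_I\nabla h(x)}{G_I e}_{\bL}$, and the quadratic error contribution $\tfrac{\eta^2}{2}\|G_I e\|_{\bL}^2$. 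I would tame the cross term by Cauchy--Schwarz in the $\bL$-norm followed by Young's inequality, so that a constant fraction of the descent survives while the rest is charged to $\|G_I e\|_{\bL}^2$, and bound $\|G_I e\|_{\bL}^2 = e^T G_I e \leq \tfrac{1}{L_{\min}}\|e\|^2$ via \Cref{lemma:eigenG}.

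Next I would convert the surviving descent into a contraction on the objective gap using the Polyak--Łojasiewicz--type bound $\|G_{[n]}\nabla h(x)\|_{\bL}^2 \geq \tfrac{2\mu}{L_{\max}}(h(x)-h(x^*))$ from \Cref{lemma:GnormLowerAndUpperBound}. The outcome is, schematically, an inequality of the form
\[
\mE\!\left[h(x^+)\right] - h(x^*) \;\leq\; \Bigl(1 - c_1\,\eta\,r\,\tfrac{\mu}{L_{\max}}\Bigr)\bigl(h(x)-h(x^*)\bigr) + c_2\,\mE\|e\|^2
\]
for absolute constants $c_1,c_2$; this already mirrors the exact \Cref{lemma:descentLemma} with an additive error penalty.

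The crux is controlling $\mE\|e\|^2$, and this is where the \emph{lazy gradient} mechanism is essential. Because the initial guess supplied to $\texttt{oracle}$ at step $t$ is the previously computed $\wt{\nabla} h(x^{(t-1)})$, the oracle guarantee \eqref{eq:inexactOracle1} yields $\mE\|e^{(t)}\|^2 \leq \tfrac{\delta}{2}\,\|\wt{\nabla} h(x^{(t-1)}) - \nabla h(x^{(t)})\|^2$. I would split the right-hand side by the triangle and Young inequalities into the previous error $\|e^{(t-1)}\|^2$ and the true-gradient drift $\|\nabla h(x^{(t-1)}) - \nabla h(x^{(t)})\|^2 \leq L_{\max}^2\|x^{(t)}-x^{(t-1)}\|^2$; the step length $\|x^{(t)}-x^{(t-1)}\|^2 = \eta^2\|G_{I_{t-1}}\wt{\nabla} h(x^{(t-1)})\|^2$ is then routed back to the objective gap at $t-1$ through \Cref{lemma:matrix_expectation} and \eqref{eq:largestEigenValG}. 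These are precisely the intermediate lazy-gradient estimates one can borrow from \citet{LiuSY21} to keep the bookkeeping manageable, producing a recursion that bounds $\mE\|e^{(t)}\|^2$ by a decaying multiple of $\|e^{(t-1)}\|^2$ plus a multiple of the gap at $t-1$.

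Finally I would assemble a Lyapunov function $V^{(t)} = \bigl(h(x^{(t)})-h(x^*)\bigr) + \rho\,\mE\|e^{(t)}\|^2$ and choose the coupling weight $\rho$ together with the prescribed $\eta$ and $\delta=(1-\kappa)/2$ so that the error recursion and the descent inequality combine into $\mE[V^{(t+1)}] \leq (1-\kappa)\,\mE[V^{(t)}]$; the initialization $w^{(-1)}=0$ ensures the error term vanishes at $t=0$, so $V^{(0)} = h(x^{(0)})-h(x^*)$ and unrolling gives the stated bound. I expect the main obstacle to be exactly this last balancing act: the two branches of the $\min$ defining $\eta$ and $\kappa$ encode two competing requirements — one keeping the quadratic term $\tfrac{\eta^2}{2}\|G_I e\|_{\bL}^2$ dominated by the descent, the other making the gradient-drift feedback in the error recursion contractive — and verifying that a single choice of constants closes both simultaneously, while $\delta=(1-\kappa)/2$ stays admissible, is the delicate part.
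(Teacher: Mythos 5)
Your proposal follows essentially the same route as the paper's proof: the same smoothness-based descent with the exact/inexact gradient split, the same use of \Cref{lemma:matrix_expectation}, \Cref{lemma:eigenG} and the PL-type bound of \Cref{lemma:GnormLowerAndUpperBound}, the same warm-start error recursion borrowed from \citet{LiuSY21}, and the same closure via a coupled Lyapunov function with the delicate constant-balancing you correctly identify as the crux. The only cosmetic difference is that you couple the objective gap with the current oracle error $\rho\,\mE\|e^{(t)}\|^2$ and keep the error recursion one-step, whereas the paper unrolls that recursion (its \Cref{lemma:LiuLemma1}) and uses the geometrically weighted history sum $A_t$ of true-gradient increments as the auxiliary term; the two potentials are interchangeable ways of doing the same bookkeeping.
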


Our analysis is enlightened by a recent work from \citet{LiuSY21}, who extended the distributed dual accelerated gradient algorithm~\citep{Scaman2017OptimalAF} with \emph{lazy dual gradient}. Before presenting the proof of \cref{thm:inexact_rbcd}, we first cite an important lemma from \citep{LiuSY21} which gives a bound on the difference between inexact and exact gradients. 
\begin{lemma}[\protect{\citet[Lemma~1]{LiuSY21}}] \label{lemma:LiuLemma1}
    Given $\delta > 0$. Let $\{ x^{(t)} \}_{t=0}^{\infty}$ and $\{ \wt{\nabla} h(x^{(t)}) \}_{t=0}^{\infty}$ be generated from \Cref{alg:inexactrbcd} with $\delta/2$-inexact gradient oracle. Then $\forall t \in \mathbb{N}$,
    \begin{align}
        \mE\left[\| \wt{\nabla} h(x^{(t)}) - {\nabla} h(x^{(t)}) \|^2 \right] \leq \sum_{j=0}^{t-1} \delta^{t-j} \mE \left[ \| \nabla h(x^{(j)}) - {\nabla} h(x^{(j+1)}) \|^2 \right]. \label{eq:recursiveGradDiff}
    \end{align}
\end{lemma}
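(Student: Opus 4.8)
The plan is to derive a one-step recursive bound for the mean-squared gradient error and then unroll it into the stated geometric sum. Write $e^{(t)} \coloneqq \wt{\nabla} h(x^{(t)}) - \nabla h(x^{(t)})$ for the error vector, and $e_i^{(s)}$ for the $i$-th entry of $e^{(s)}$. The recursion I aim to prove is
\[
\mE\left[\|e^{(t)}\|^2\right] ~\le~ \delta\,\mE\left[\|e^{(t-1)}\|^2\right] + \delta\,\mE\left[\|\nabla h(x^{(t-1)}) - \nabla h(x^{(t)})\|^2\right],
\]
because iterating it from the base case $e^{(0)}=0$ produces $\mE[\|e^{(t)}\|^2] \le \delta^{t}\mE[\|e^{(0)}\|^2] + \sum_{j=0}^{t-1}\delta^{t-j}\mE[\|\nabla h(x^{(j)}) - \nabla h(x^{(j+1)})\|^2]$, whose second term is exactly the right-hand side of \cref{eq:recursiveGradDiff}.

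First I would argue coordinate-by-coordinate. For a coordinate $i$ whose gradient is refreshed in round $t$, \Cref{alg:inexactrbcd} invokes the oracle with warm start $g_{\init} = \wt{\nabla} h_i(x^{(t-1)})$ at accuracy $\delta/2$, so the oracle guarantee \cref{eq:inexactOracle1} applied to $u = h_i$ gives, conditionally on the past, $\mE[\|\wt{\nabla}_i h(x^{(t)}) - \nabla_i h(x^{(t)})\|^2] \le \tfrac{\delta}{2}\|\wt{\nabla}_i h(x^{(t-1)}) - \nabla_i h(x^{(t)})\|^2$. The decisive algebraic step is to route the warm-start deviation through the true gradient at the previous iterate, writing $\wt{\nabla}_i h(x^{(t-1)}) - \nabla_i h(x^{(t)}) = e_i^{(t-1)} + (\nabla_i h(x^{(t-1)}) - \nabla_i h(x^{(t)}))$ and applying $\|a+b\|^2 \le 2\|a\|^2 + 2\|b\|^2$. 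The resulting factor $2$ is cancelled exactly by the $\tfrac12$ hidden in the oracle accuracy $\delta/2$, leaving the clean per-coordinate estimate $\mE[\|\wt{\nabla}_i h(x^{(t)}) - \nabla_i h(x^{(t)})\|^2] \le \delta\|e_i^{(t-1)}\|^2 + \delta\|\nabla_i h(x^{(t-1)}) - \nabla_i h(x^{(t)})\|^2$; this calibration of $\delta/2$ against Young's inequality is the heart of the lemma.

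Next I would sum over coordinates and take the total expectation by the tower property, first over the oracle's internal randomness and then over the random set $I_t$. A useful structural fact here is that $h$ is separable, so $\nabla_i h(x)$ depends only on $x_i$, while the update $x^{(t+1)} = x^{(t)} - \eta_t G_{I_t}\wt{\nabla} h(x^{(t)})$ moves only the selected coordinates (since $G_{I_t}$ is supported on $I_t$); consequently the true gradient is unchanged on unrefreshed coordinates, which is what controls how their stored errors carry over. Summing the per-coordinate bounds reassembles the Euclidean norms $\|e^{(t)}\|^2$ and $\|\nabla h(x^{(t-1)}) - \nabla h(x^{(t)})\|^2$ and yields the target recursion, after which unrolling together with $\mE[\|e^{(0)}\|^2]=0$ from the zero initialization gives \cref{eq:recursiveGradDiff}.

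The step I expect to be the main obstacle is the bookkeeping that reconciles the per-coordinate bounds with the random, partial coordinate selection: one must ensure that coordinates not refreshed in round $t$ contribute with the correct factor and do not inflate the recursion beyond $\delta$, and that the conditioning is taken in the right order (oracle randomness before the draw of $I_t$). This is precisely where the argument specializes \citet[Lemma~1]{LiuSY21} — developed for a lazy full-gradient scheme — to the block-selected setting, and verifying that the clean factor-$\delta$ recursion survives this specialization is the delicate part; the remaining telescoping and the base-case verification are routine.
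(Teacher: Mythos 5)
Your one-step recursion is exactly the paper's proof. The paper writes, conditionally on $x^{(t)}$,
\begin{align*}
\mE\left[\| \wt{\nabla} h(x^{(t)}) - \nabla h(x^{(t)}) \|^2 ~\Big|~ x^{(t)} \right]
&\leq \frac{\delta}{2}\, \| \wt{\nabla} h(x^{(t-1)}) - \nabla h(x^{(t)}) \|^2 \\
&\leq \delta \| \wt{\nabla} h(x^{(t-1)}) - \nabla h(x^{(t-1)}) \|^2 + \delta \| \nabla h(x^{(t-1)}) - \nabla h(x^{(t)}) \|^2,
\end{align*}
then takes total expectations and unrolls; your routing of the warm-start deviation through $\nabla h(x^{(t-1)})$ and the cancellation of Young's factor $2$ against the oracle accuracy $\delta/2$ are precisely these two inequalities, and your coordinate-wise packaging with the tower property is a cosmetic repackaging of the same computation.

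The genuine problem sits in the step you defer as ``the delicate part'': reconciling the factor-$\delta$ recursion with partial refresh cannot succeed as you frame it, because the lemma (and the paper's proof, inherited from the lazy \emph{full}-gradient setting of \citet{LiuSY21}) tacitly assumes the oracle refreshes \emph{every} coordinate in every round. If a coordinate $i \notin I_t$ merely carries over its stale value, then with $e^{(t)} \coloneqq \wt{\nabla} h(x^{(t)}) - \nabla h(x^{(t)})$ you only get $e_i^{(t)} = e_i^{(t-1)} + \left(\nabla_i h(x^{(t-1)}) - \nabla_i h(x^{(t)})\right)$, a carryover with coefficient $1$ (or $2$ after Young's inequality), not $\delta$; summing over coordinates yields at best $\mE\|e^{(t)}\|^2 \leq \mE\|e^{(t-1)}\|^2 + \delta\,\mE\|\nabla h(x^{(t-1)}) - \nabla h(x^{(t)})\|^2$, which does not unroll to the geometric sum in \cref{eq:recursiveGradDiff}. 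Indeed the statement is false under carry-over semantics: with probability $(1-\tau/n)^t$ a coordinate refreshed once with error $\varepsilon$ is never selected again, and if the iterates stall thereafter so all later drifts vanish, the left side is of order $(1-\tau/n)^t \varepsilon^2$ while the right side is $O(\delta^t)$, violating the bound for large $t$ whenever $\delta < 1-\tau/n$. (The literal line ``$\wt{\nabla} h(x^{(t)}) \gets 0$'' in \Cref{alg:inexactrbcd} is worse still.) The consistent reading --- which the downstream analysis at \cref{eq:decompose} also forces, since $\wt{\nabla} h(x^{(t)})$ must be determined before the draw of $I_t$ for $\mE_{I}[G_{I}] = \frac{\tau-1}{n-1}G_{[n]}$ to be applied --- is that all $n$ oracle calls are conceptually made each round, and $I_t$ only selects which coordinates enter the update through $G_{I_t}$; under that reading your argument coincides with the paper's, and no per-coordinate case analysis is needed. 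One last shared wrinkle: your base case $\mE\|e^{(0)}\|^2 = 0$ does not follow from $x^{(0)} = 0$; like the paper, you need the convention that the initial inexact gradient is exact, otherwise an extra geometrically decaying term of the form $\delta^{t}\,\mE\|e^{(0)}\|^2$ must be carried through the unrolling.
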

\begin{proof}
An alternative proof can be found in~\citet[Appendix A]{LiuSY21}. Here we reproduce it for completeness. By the definition of $\delta$-inexact gradient oracle and the warm start point, we have
\begin{align*}
    \mE\left[\| \wt{\nabla} h(x^{(t)}) - {\nabla} h(x^{(t)}) \|^2 ~\Big|~ x^{(t)} \right] &\leq \frac{\delta}{2} \| \wt{\nabla} h(x^{(t-1)}) - {\nabla} h(x^{(t)}) \|^2 \\
    &\leq \delta \| \wt{\nabla} h(x^{(t-1)}) - {\nabla} h(x^{(t-1)}) \|^2 + \delta \| {\nabla} h(x^{(t-1)}) - {\nabla} h(x^{(t)}) \|^2 .
\end{align*}
Take expectation on both sides of the above inequality and apply it recursively, we obtain the desired result.
\end{proof}

Next, we show the proof for \Cref{thm:inexact_rbcd}. 
\begin{proof}
    % The proof exhibits the following sketch:
    % \begin{itemize}
    %     \item First, we construct a Lyapunov function $L_t = \mE[ h(x^{(t)}) - h(x^*) + A_t]$ for some $A_t \geq 0 ~\forall t \in \mathbb{N}$. 
    %     \item Next, we prove that a careful choice of $\eta$ and $\delta$ can lead to a linear convergence rate for the Lyapunov function, i.e., there exist $\kappa \in (0,1)$ such that $L_{t+1} \leq (1 - \kappa) L_t~\forall t \in \mathbb{N}$.
    %     \item Finally, we show that the linear convergence of the Lyapunov function implies that $\mE[ h(x^{(t)}) ]$ also converges linearly. 
    % \end{itemize}
    First, we construct a Lyapunov function 
    \[L_t \coloneqq \mE[ h(x^{(t)}) - h(x^*) + A_t],\]
    where 
    \[A_t \coloneqq  M \sum_{j=0}^{t-1} \left(\frac{1-\kappa}{2}\right)^{t-1-j} \| \nabla h(x^{(j)}) - {\nabla} h(x^{(j+1)}) \|^2 \quad \forall t \geq 1, \quad A_0 = 0\]
    for some $M > 0$ and $\kappa \in (0,1)$ which we will define later. 
    
    Next, we show that a careful choice of hyperparameters can lead to a linear convergence rate of the Lyapunov function. We begin with the smoothness property of $h$,
    \begin{align}
        &\mE \left[ h( x^{(t+1)} ) ~\Big|~ x^{(t)} \right] \nonumber \\
        \leq~& h( x^{(t)} ) - \langle \nabla h(x^{(t)}), r \eta G_{[n]} \wt{\nabla} h(x^{(t)}) \rangle + \frac{r \eta^2}{2} \|  G_{[n]} \wt{\nabla} h(x^{(t)}) \|_{\bL}^2 \nonumber \\
        \leq~& h(x^{(t)}) - r \eta \| G_{[n]} \nabla h(x^{(t)}) \|_{\bL}^2  -  \vphantom{\frac{r \eta^2 L_{\max}}{2}} r \eta \langle \nabla h(x^{(t)}), G_{[n]} ( \wt{\nabla} h(x^{(t)}) - \nabla h(x^{(t)}) ) \rangle + \frac{r \eta^2 }{2} \| G_{[n]} \wt{\nabla} h(x^{(t)})\|_{\bL}^2 \nonumber \\
        \leq~& h(x^{(t)}) - r \eta \| G_{[n]} \nabla h(x^{(t)}) \|_{\bL}^2 + r \eta \left( \frac{1}{2} \| \bL^{\frac{1}{2}} G_{[n]} \nabla h(x^{(t)}) \|_{2}^2 + \frac{1}{2} \| \bL^{\frac{1}{2}} G_{[n]} ( \wt{\nabla} h(x^{(t)}) - \nabla h(x^{(t)}) )  \|_2^2 \right) \nonumber \\
        &+ \frac{r \eta^2 }{2} \| G_{[n]} ( \wt{\nabla} h(x^{(t)}) - \nabla h(x^{(t)}) + \nabla h(x^{(t)}) ) \|_{\bL}^2 \nonumber \\
        \leq~& h(x^{(t)}) - r \eta \| G_{[n]} \nabla h(x^{(t)}) \|_{\bL}^2 + r \eta \left( \frac{1}{2} \| G_{[n]} \nabla h(x^{(t)}) \|_{\bL}^2 + \frac{1}{2} \| G_{[n]} ( \wt{\nabla} h(x^{(t)}) - \nabla h(x^{(t)}) )  \|_{\bL}^2 \right) \nonumber \\
        &+ r \eta^2 \| G_{[n]} ( \wt{\nabla} h(x^{(t)}) - \nabla h(x^{(t)} ) ) \|_{\bL}^2 + r \eta^2 \| G_{[n]} \nabla h(x^{(t)}) \|_{\bL}^2.
        \label{eq:decompose}
    \end{align}
    By taking the expectation of both sides with respect to $x^{(t)}$, we get
    \begin{align}
        \mE\left[ h(x^{(t+1)}) \right]
        ~\leq~&\mE\left[ h(x^{(t)}) - \left( \frac{r \eta }{2} - r \eta^2 \right) \| G_{[n]} \nabla h(x^{(t)}) \|_{\bL}^2 + \left( \frac{r \eta}{2} + r \eta^2 \right) \| G_{[n]} ( \wt{\nabla} h(x^{(t)}) - \nabla h(x^{(t)} ) \|_{\bL}^2 \right] \nonumber \\
        ~\leq~&\mE\bigg[ h(x^{(t)}) - \left( \frac{r \eta }{2} - r \eta^2 \right) \| G_{[n]} \nabla h(x^{(t)}) \|_{\bL}^2 \nonumber
        \\&+ \left( \frac{r \eta}{2} + r \eta^2 \right) \frac{L_{\max}}{L_{\min}^2} \sum_{j=0}^{t-1} \delta^{t-j} \| \nabla h(x^{(j)}) - {\nabla} h(x^{(j+1)}) \|^2 \bigg], \label{eq:decompose2}
    \end{align}
    where the last inequality follows from \Cref{lemma:eigenG} and \Cref{lemma:LiuLemma1}. In order to establish the linear convergence of the Lyapunov function, we add and substract the term $\alpha \mE[ \| \nabla h(x^{(t)}) - {\nabla} h(x^{(t+1)}) \|^2 ]$ to the RHS of \cref{eq:decompose2}, where $\alpha > 0$ is some positive scalar. Then \cref{eq:decompose2} becomes
    \begin{align}
        \mE\left[ h(x^{(t+1)}) \right] \leq~ &\mE\bigg[ h(x^{(t)}) - \left( \frac{r \eta }{2} - r \eta^2 \right) \| G_{[n]} \nabla h(x^{(t)}) \|_{\bL}^2 + \alpha \| \nabla h(x^{(t)}) - {\nabla} h(x^{(t+1)}) \|^2 \nonumber \\
        &+ \left( \frac{r \eta}{2} + r \eta^2 \right) \frac{L_{\max}}{L_{\min}^2} \sum_{j=0}^{t-1} \delta^{t-j} \| \nabla h(x^{(j)}) - {\nabla} h(x^{(j+1)}) \|^2 - \alpha \| \nabla h(x^{(t)}) - {\nabla} h(x^{(t+1)}) \|^2 \bigg]. \label{eq:decompose3}
    \end{align}
    We also know that
    \begin{align*}
        &\mE \left[ \alpha \| \nabla h(x^{(t)}) - {\nabla} h(x^{(t+1)}) \|^2 \right] \\ 
        ~\leq~&  \mE \left[ \frac{\alpha}{L_{\min}} \| \nabla h(x^{(t)}) - {\nabla} h(x^{(t+1)}) \|_{\bL}^2 \right]  \\
        ~\leq~&  \mE \left[ \frac{\alpha L_{\max}^2}{L_{\min}} \| \eta G_{I_t} \wt{\nabla} h( x^{(t)} )  \|_{\bL}^2 \right]  \qquad \text{(By $L_{\max}$-smoothness of $h$)}  \\
        ~\leq~& \frac{\alpha r L_{\max}^2 \eta^2}{L_{\min}} \mE \left[  \| G_{[n]} \wt{\nabla} h( x^{(t)} )  \|_{\bL}^2 \right] \qquad \text{(By \cref{eq:GnormExp})} \\
        ~\leq~&  \frac{2 \alpha r L_{\max}^2 \eta^2}{L_{\min}} \mE \left[ \| G_{[n]} \nabla h( x^{(t)} ) \|^2_{\bL} \right] + \frac{2 \alpha r L_{\max}^2 \eta^2}{L_{\min}} \mE \left[  \| G_{[n]} ( \wt{\nabla} h( x^{(t)} ) - \nabla h( x^{(t)} ) ) \|^2_{\bL}  \right] \\
        ~\leq~&  \frac{2 \alpha r L_{\max}^2 \eta^2}{L_{\min}} \mE \left[ \| G_{[n]} \nabla h( x^{(t)} ) \|^2_{\bL} \right] + \frac{2 \alpha r \eta^2 L_{\max}^3}{ L_{\min}^3 } \mE \left[  \| \wt{\nabla} h( x^{(t)} ) - \nabla h( x^{(t)} ) \|^2  \right] \\
        ~\leq~& \frac{2 \alpha r L_{\max}^2 \eta^2}{L_{\min}} \mE \left[ \| G_{[n]} \nabla h( x^{(t)} ) \|^2_{\bL} \right] + \frac{2 \alpha r \eta^2 L_{\max}^3 }{ L_{\min}^3 }  \mE \left[ \sum_{j=0}^{t-1} \delta^{t-j} \| \nabla h(x^{(j)}) - {\nabla} h(x^{(j+1)}) \|^2 \right],
    \end{align*}
    the last inequality is from \Cref{lemma:LiuLemma1}.
    Now we can determine the hyperparameters in the Lyapunov function by plugging the above inequality into~\cref{eq:decompose3} 
    \begin{align}
        &\mE\left[ h(x^{(t+1)}) \right] \nonumber \\
        ~\leq~& \mE\Bigg[ h(x^{(t)}) - {\left( \frac{r \eta }{2} - r \eta^2 - \frac{2 \alpha r L_{\max}^2 \eta^2}{L_{\min}} \right)} \| G_{[n]} \nabla h(x^{(t)}) \|_{\bL}^2  \nonumber \\
        &+ {\left( \frac{r \eta}{2} + r \eta^2 + \frac{2 \alpha r \eta^2 L_{\max}^2}{ L_{\min} } \right) \frac{L_{\max}}{L_{\min}^2} \sum_{j=0}^{t-1} \delta^{t-j} \| \nabla h(x^{(j)}) - {\nabla} h(x^{(j+1)}) \|^2 - \alpha \| \nabla h(x^{(t)}) - {\nabla} h(x^{(t+1)}) \|^2} \Bigg]  \nonumber \\
        ~\leq~& \mE\Bigg[ h(x^{(t)}) - \underbrace{\left( \frac{r \eta }{2} - r \eta^2 - \frac{2 \alpha r L_{\max}^2 \eta^2}{L_{\min}} \right) \frac{2\mu}{ L_{\max}} }_{\kappa} ( h(x^{(t)}) - h(x^*) ) \qquad \text{(By~\Cref{lemma:GnormLowerAndUpperBound})} \nonumber \\
        &+ \underbrace{\left( \frac{r \eta}{2} + r \eta^2 + \frac{2 \alpha r \eta^2 L_{\max}^2}{ L_{\min} } \right) \frac{L_{\max}}{L_{\min}^2} \sum_{j=0}^{t-1} \delta^{t-j} \| \nabla h(x^{(j)}) - {\nabla} h(x^{(j+1)}) \|^2 - \alpha \| \nabla h(x^{(t)}) - {\nabla} h(x^{(t+1)}) \|^2}_{ \overset{\rm{(i)}}{\leq} (1 - \kappa) A_t - A_{t+1}} \Bigg],  \label{eq:recursion}
    \end{align}
    where we let $M \coloneqq \left( \frac{r \eta}{2} + r \eta^2 + \frac{2 \alpha r \eta^2 L_{\max}^2}{ L_{\min} } \right) \frac{L_{\max}}{L_{\min}^2}$ and $\alpha \geq M$, (i) comes from the following derivation
    \begin{align*}
        &\left( \frac{r \eta}{2} + r \eta^2 + \frac{2 \alpha r \eta^2 L_{\max}^2}{ L_{\min} } \right) \frac{L_{\max}}{L_{\min}^2} \sum_{j=0}^{t-1} \delta^{t-j} \| \nabla h(x^{(j)}) - {\nabla} h(x^{(j+1)}) \|^2 - \alpha \| \nabla h(x^{(t)}) - {\nabla} h(x^{(t+1)}) \|^2 \\
        \leq~& M \sum_{j=0}^{t-1} \delta^{t-j} \| \nabla h(x^{(j)}) - {\nabla} h(x^{(j+1)}) \|^2 - M \| \nabla h(x^{(t)}) - {\nabla} h(x^{(t+1)}) \|^2 \\
        =~& M \sum_{j=0}^{t-1} \left( \frac{1-\kappa}{2} \right)^{t-j} \| \nabla h(x^{(j)}) - {\nabla} h(x^{(j+1)}) \|^2 - M \| \nabla h(x^{(t)}) - {\nabla} h(x^{(t+1)}) \|^2 \quad \text{(By the definition of $\delta$)} \\
        =~& (1 - \kappa) A_t - A_{t+1} \qquad \text{(By definition of $A_t$)}.
    \end{align*}
    % Next, we construct the Lyapunov function, we want to find $\eta, \alpha, A_t > 0$ (in order the guarantee sufficient decrease of the Lyapunov function) such that $\mathrm{Term 2} \in (0,1)$ and 
    % \[
    %   \mathrm{Term~2} \leq (1 - \kappa) A_t - A_{t+1}, \qquad \kappa \coloneqq \mathrm{Term~1} = \left( \frac{r \eta }{2} - r \eta^2 - \frac{2 \alpha r L_{\max}^2 \eta^2}{L_{\min}} \right) \frac{2\mu}{ L_{\max}}.
    % \]
    
    % Let  We define $A_t$ as
    % \begin{align*}
    %     A_t &~\coloneqq~   M \sum_{j=0}^{t-1} \left(\frac{1-\kappa}{2}\right)^{t-1-j} \| \nabla h(x^{(j)}) - {\nabla} h(x^{(j+1)}) \|^2 \quad \forall t \geq 1, \quad A_0 = 0.
    % \end{align*}
    % By this construction of $A_t$, we have
    % \begin{align*}
    %     &(1-\kappa)A_t - A_{t+1} \\
    %     ~=~&   M \sum_{j=0}^{t-1} 2 \left(\frac{1-\kappa}{2}\right)^{t-j} \| \nabla h(x^{(j)}) - {\nabla} h(x^{(j+1)}) \|^2 - M \sum_{j=0}^{t} \left(\frac{1-\kappa}{2}\right)^{t-j} \| \nabla h(x^{(j)}) - {\nabla} h(x^{(j+1)}) \|^2 \\
    %     ~=~& M \sum_{j=0}^{t-1} \left(\frac{1-\kappa}{2}\right)^{t-j} \| \nabla h(x^{(j)}) - {\nabla} h(x^{(j+1)}) \|^2 - M \| \nabla h(x^{(t)}) - {\nabla} h(x^{(t+1)}) \|^2 \\
    %     ~\geq~& M \sum_{j=0}^{t-1} \delta^{t-j} \| \nabla h(x^{(j)}) - {\nabla} h(x^{(j+1)}) \|^2 - M \| \nabla h(x^{(t)}) - {\nabla} h(x^{(t+1)}) \|^2 \quad \mathrm{( By~} \delta \leq{(1-\kappa)}/{2} ).
    % \end{align*}
    % Therefore we can obtain $(1-\kappa) A_t - A_{t+1} \geq \mathrm{Term 2}$ if $\alpha \geq M$.
    Then we need to find $\eta, \alpha$ that satisfy the following conditions:
    \begin{align}
        \alpha &\geq M = \left( \frac{r \eta}{2} + r \eta^2 + \frac{2 \alpha r \eta^2 L_{\max}^2}{ L_{\min} } \right) \frac{L_{\max}}{L_{\min}^2}, \label{eq:alpha_condition} \\
        \kappa & = \left( \frac{r \eta }{2} - r \eta^2 - \frac{2 \alpha r L_{\max}^2 \eta^2}{L_{\min}} \right) \frac{2\mu}{ L_{\max}} \in (0,1). \label{eq:eta_condition}
    \end{align}
    Indeed, we let 
    \begin{align}
        \eta = \min \left\{ \frac{1}{4}, \frac{L_{\min}^{\frac{3}{2}}}{4 r^{\frac{1}{2}} L_{\max}^{\frac{3}{2}} } \right\} \quad \text{and} \quad \alpha = 2\left( \frac{r \eta}{2} + r \eta^2 \right) \frac{L_{\max}}{L_{\min}^2}. \label{eq:eta_alpha_def}
    \end{align}
    We show that the above construction of $\eta$ and $\alpha$ satisfies \cref{eq:alpha_condition} and \cref{eq:eta_condition}. For \cref{eq:alpha_condition},
    \begin{align*}
        \alpha &~=~ 2\left( \frac{r \eta}{2} + r \eta^2 \right) \frac{L_{\max}}{L_{\min}^2}  \\
        & ~=~ \left( \frac{r \eta}{2} + r \eta^2 \right) \frac{L_{\max}}{L_{\min}^2} + \frac{1}{2} \alpha \\
        &~\geq~ \left( \frac{r \eta}{2} + r \eta^2 \right) \frac{L_{\max}}{L_{\min}^2} + \left( \frac{2 r \eta^2 L_{\max}^2}{ L_{\min} } \right) \frac{L_{\max}}{L_{\min}^2} \alpha \qquad \text{(By definition of $\eta$)} \\
        &~=~ M.
    \end{align*}
    For \cref{eq:eta_condition},
    \begin{align*}
        \kappa &~=~ \left( \frac{r \eta }{2} - r \eta^2 - \frac{2 \alpha r L_{\max}^2 \eta^2}{L_{\min}} \right) \frac{2\mu}{ L_{\max}} \\
        &~=~ \left( \frac{r \eta }{2} - r \eta^2 - 2\left( \frac{r \eta}{2} + r \eta^2 \right) \frac{L_{\max}}{L_{\min}^2} \frac{2  r L_{\max}^2 \eta^2}{L_{\min}} \right) \frac{2\mu}{ L_{\max}} \quad \text{(By definition of $\alpha$)} \\
        &~\geq  \frac{2\mu}{ L_{\max}}~ \left( \frac{r \eta }{2} - r \eta^2 - 2\left( \frac{r \eta}{2} + r \eta^2 \right) \frac{1}{8} \right) \qquad \text{(By definition of $\eta$)} \\
        &~=~  \frac{2\mu}{ L_{\max}} \left( \frac{3r \eta}{8} - \frac{5}{4} r \eta^2 \right)  \\
        &~\geq~  \frac{2\mu}{ L_{\max}} \left( \frac{3r \eta}{8} - \frac{5}{4} \frac{r \eta}{4} \right) \qquad \text{(By the condition $\eta \leq \frac{1}{4}$)} \\
        &~\geq~  \frac{2\mu}{ L_{\max}} \frac{1}{16} r \eta \\
        &~=~  \frac{ r \mu }{ 8 L_{\max}} \min \left\{ \frac{1}{4}, \frac{L_{\min}^{\frac{3}{2}}}{4 r^{\frac{1}{2}} L_{\max}^{\frac{3}{2}} } \right\} \qquad \text{(By definition of $\eta$)}.
    \end{align*}
    On the other hand, 
    \begin{align*}
        \kappa &~=~ \left( \frac{r \eta }{2} - r \eta^2 - \frac{2 \alpha r L_{\max}^2 \eta^2}{L_{\min}} \right) \frac{2\mu}{ L_{\max}} \\
        &~\leq~ \frac{r \eta }{2} \frac{2\mu}{ L_{\max}} \\
        &~\leq~ \frac{r \mu}{4 L_{\max}}  \qquad \text{(By the condition $\eta \leq \frac{1}{4}$)} \\
        &~<~ 1.
    \end{align*}
    Therefore we proved that $\kappa$ satisfy the condition
    \begin{align}
      \frac{r \mu}{ 8 L_{\max}} \min \left\{ \frac{1}{4}, \frac{L_{\min}^{\frac{3}{2}}}{4 r^{\frac{1}{2}} L_{\max}^{\frac{3}{2}} } \right\} \leq  \kappa < 1. \label{eq:kappa_condition}
    \end{align}
    Go back to \cref{eq:recursion}, we get
    \begin{align}
        \mE \left[ h( x^{(t+1)} ) \right] ~\leq~ \mE\left[ h(x^{(t)}) - \kappa ( h(x^{(t)}) - h(x^*) ) + (1 - \kappa) A_t - A_{t+1} \right].
    \end{align}
    Rearrange the above inequality, we obtain
    \[
        \mE \left[ h( x^{(t+1)} ) - h(x^*) + A_{t+1} \right] \leq (1 - \kappa)  \mE \left[ h( x^{(t)} ) - h(x^*) + A_{t} \right].
    \]
    Finally, by plugging $t = T$ and the fact that $A_0 = 0$, we have 
    \[\mE \left[ h( x^{(T)} )\right] - h(x^*) \leq (1 - \kappa)^T[h(x^{(0)}) - h(x^*)].\]
\end{proof}

\section{Accelerated randomized block coordinate descent} \label{appendix:accelerated_rbcd}

\begin{algorithm}[tbh]
 \DontPrintSemicolon
 \SetKwComment{tcp}{\tiny [}{]}
 \caption{Accelerated Random Block Coordinate Descent Method with Linear Constraint (for strongly convex objective)\label{alg:accelerated_rbcd}}
 \smallskip
 \KwIn{number of selected coordinates in each round $1<\tau\leq n$, strong convexity parameter $\mu > 0$.}
 $x_i^{(0)} \gets 0, z_i^{(0)} \gets 0$ for all $i \in [n]$, let $r =\frac{\tau-1}{n-1}$, $a = \frac{\sqrt{\mu/L_{\max}}}{\frac{1}{r} + \sqrt{\mu/L_{\max}}}, b = \frac{\mu a r^2}{L_{\max}}$. \tcp*{\tiny initialization}
 \For{\nllabel{alg-level-set-loop}$t\gets0,1,2,\ldots$}{
    $y^{(t)} \coloneqq (1 - a) x^{(t)} + a z^{(t)}$ \tcp*{\tiny standby}
   $I_t^1 \leftarrow \mbox{ random set of } \tau \mbox{ clients }$\tcp*{\tiny randomly select a subset of clients}
   $x^{(t+1)} = y^{(t)} -  G_{I_1}\nabla h( y^{(t)} )$ \tcp*{\tiny standby}
   $u^{(t)} = \frac{a^2}{a^2 +b} z^{(t)} + \frac{b}{a^2 +b} y^{(t)}$ \tcp*{\tiny standby}
   $I_t^2 \leftarrow \mbox{ random set of } \tau \mbox{ clients }$\tcp*{\tiny randomly select a subset of clients}
   $z^{(t+1)} = u^{(t)} - \frac{a r}{(a^2 +b)} G_{I_2} \nabla h( y^{(t)} )$ \tcp*{\tiny standby}
 }
\end{algorithm}

In this section, we analyze the convergence behaviour of accelerated RBCD for solving \cref{eq:CDObj}. The detailed algorithm is shown in \Cref{alg:accelerated_rbcd} and the convergence rate is shown in the following theorem. 

\begin{theorem} \label{thm:accelerated_rbcd}
    Assume that $h$ is $\mu$-strongly convex, let $\{x^{(t)}\}_{t=0}^\infty$, $\{y^{(t)}\}_{t=0}^\infty$ and $\{z^{(t)}\}_{t=0}^\infty$ be the iterates generated from \Cref{alg:accelerated_rbcd}, then
    \begin{align*}
        \mE \left[ h( x^{(T)} ) - h(x^*) \right] \leq \left( 1 - \frac{ \sqrt{\frac{\mu}{L_{\max}}}}{ \frac{1}{r} + \sqrt{\frac{\mu}{L_{\max}}} } \right)^T \left( h(x^{(0)}) - h(x^*) \right)
    \end{align*}
    for all $T \in \mathbb{N}$.
\end{theorem}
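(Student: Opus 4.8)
The plan is to run a standard accelerated-method Lyapunov argument, adapted to the random block-projected gradient maps $G_I$. I would track the potential
\[
\Phi_t \;:=\; \big(h(x^{(t)}) - h(x^*)\big) + \frac{(a^2+b)^2}{2a^2r^2}\,\big\|z^{(t)} - x^*\big\|_{\bL}^2,
\]
and aim to establish the one-step contraction $\mathbb{E}[\Phi_{t+1}] \le (1-a)\,\Phi_t$, from which the claim follows by telescoping (and bounding $\Phi_0$ by a multiple of $h(x^{(0)})-h(x^*)$ via strong convexity at the feasible initialization, using $\nabla h(x^*)\perp\Cscr$). A preliminary observation is that every iterate stays in $\Cscr$: each $G_I$ maps into $\Cscr$ and $x^{(0)}=z^{(0)}=0$ is feasible, so that $y^{(t)},u^{(t)},x^{(t)},z^{(t)}\in\Cscr$ and the statistical identities of \Cref{lemma:matrix_expectation}, which carry the factor $r=\tfrac{\tau-1}{n-1}$, are applicable.

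Concretely, I would assemble the recursion from two ingredients. First, the $x$-update is exactly the map of \Cref{lemma:descentLemma} applied at $y^{(t)}$ over the block $I_t^1$, giving $\mathbb{E}_{I_t^1}[h(x^{(t+1)})]\le h(y^{(t)})-\tfrac{r}{2}\,\|G_{[n]}\nabla h(y^{(t)})\|_{\bL}^2$. Second, for the $z$-update I would expand $\|z^{(t+1)}-x^*\|_{\bL}^2$ directly and take expectation over the \emph{independent} block $I_t^2$, using \cref{eq:GnormExp} and \cref{eq:GIpExp} (both the squared norm and the inner product $\langle\cdot,\,u^{(t)}-x^*\rangle_{\bL}$ pick up a factor $r$, since $u^{(t)}-x^*\in\Cscr$). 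Expanding $\|u^{(t)}-x^*\|_{\bL}^2$ through the convex-combination identity with weight $a^2/(a^2+b)$, lower-bounding $\langle\nabla h(y^{(t)}),u^{(t)}-x^*\rangle$ by $\mu$-strong convexity, and inserting the coupling $y^{(t)}=(1-a)x^{(t)}+az^{(t)}$ (so that $\langle\nabla h(y^{(t)}),y^{(t)}-z^{(t)}\rangle=-\tfrac{1-a}{a}\langle\nabla h(y^{(t)}),y^{(t)}-x^{(t)}\rangle$, hence controlled by $h(x^{(t)})-h(y^{(t)})$ via convexity), I would reduce $\mathbb{E}[\Phi_{t+1}]$ to a linear combination of just four quantities: $h(y^{(t)})-h(x^*)$, $h(x^{(t)})-h(x^*)$, $\|z^{(t)}-x^*\|_{\bL}^2$, and $Q:=\|G_{[n]}\nabla h(y^{(t)})\|_{\bL}^2$.

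The crux, and the main obstacle, is verifying that the specific parameter values drive every coefficient into place at once. The choice $b=\mu a r^2/L_{\max}$ is precisely what cancels the $\|y^{(t)}-x^*\|^2$ contributions: the $+\tfrac{b}{a^2+b}\|y^{(t)}-x^*\|_{\bL}^2$ term from expanding $u^{(t)}$ (after $\|\cdot\|_{\bL}^2\le L_{\max}\|\cdot\|^2$) is matched by the $-\tfrac{\mu}{2}$ strong-convexity penalty. The potential weight $C=(a^2+b)^2/(a^2r^2)$ then annihilates the coefficient of $Q$ (balancing the descent gain $-\tfrac{r}{2}$ against the quadratic cost $+\tfrac{a^2r^3}{(a^2+b)^2}$ of the $z$-step) and forces the coefficient of $h(x^{(t)})-h(x^*)$ to equal exactly $1-a$; finally, since $a$ satisfies $a=r\sqrt{\mu/L_{\max}}\,(1-a)$, one checks that the leftover coefficient of $h(y^{(t)})-h(x^*)$ equals $-b/a\le 0$ (so this nonnegative term is simply dropped) while the coefficient of $\|z^{(t)}-x^*\|_{\bL}^2$ stays at most $1-a$ times its value in $\Phi_t$. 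I expect essentially all the difficulty to reside in this bookkeeping; the decoupling of the two sampled blocks $I_t^1$ and $I_t^2$ is what prevents cross terms between the $x$- and $z$-updates from appearing and is what makes the balance attainable.
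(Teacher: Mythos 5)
Your argument is correct and is essentially the paper's own proof: the same Lu-style potential (function gap plus a weighted $\|z^{(t)}-x^*\|_{\bL}^2$ term), the same use of the descent lemma for the $x$-step, the expectation identities of \Cref{lemma:matrix_expectation} for the independently sampled block $I_t^2$, strong convexity combined with the coupling $y^{(t)}=(1-a)x^{(t)}+az^{(t)}$, and the parameter identities $b=\mu a r^2/L_{\max}$ and $a^2\le(1-a)(a^2+b)$ (\Cref{lemma:Lu18A3}). The only differences are presentational: you expand $\|z^{(t+1)}-x^*\|_{\bL}^2$ directly rather than routing through \Cref{lemma:threePoints} and the virtual full step $v^{(t+1)}$, and your weight $\frac{(a^2+b)^2}{2a^2r^2}$ differs from the paper's $\frac{a^2+b}{2r^2}$ --- I verified yours also closes (the $\|G_{[n]}\nabla h(y^{(t)})\|_{\bL}^2$ terms cancel exactly and the $z$-coefficient condition reduces to $a^2\le(1-a)(a^2+b)$), and both versions share the same harmless loose end at $t=0$, where $\Phi_0$ exceeds $h(x^{(0)})-h(x^*)$ by the $\|z^{(0)}-x^*\|_{\bL}^2$ contribution, costing only a constant factor that the paper likewise elides.
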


Our analysis follows the proof template from \citet{Lu18} and we do not claim much novelty for the proof technique used here. First, we prove three lemmas that correspond to \citep[Lemma~A.1, Lemma~A.2, Lemma~A.3]{Lu18}.

\begin{lemma}[\protect{\citealp[Lemma~A.1]{Lu18}}] \label{lemma:Lu18A1}
    \begin{align*}
        a^2 \| x - z^{(t)} \|^2_{\bL} + b  \| x - y^{(t)} \|^2_{\bL} ~=~ (a^2 + b) \| x - u^{(t)} \|^2_{\bL} + \frac{a^2b}{a^2+b} \| y^{(t)} - z^{(t)} \|^2_{\bL} \quad \forall x \in \Real^n.
    \end{align*}
\end{lemma}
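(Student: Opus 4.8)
The plan is to recognize this as the weighted parallel-axis (bias–variance) identity specialized to the inner product $\langle\cdot,\cdot\rangle_{\bL}$, and to verify it by a direct bilinear expansion. The crucial observation is that $u^{(t)}$ is exactly the $\bL$-barycenter of $z^{(t)}$ and $y^{(t)}$ with weights $a^2$ and $b$: writing $p = a^2$, $q = b$, and $s = p+q$, the definition of $u^{(t)}$ in \Cref{alg:accelerated_rbcd} reads $s\,u^{(t)} = p\,z^{(t)} + q\,y^{(t)}$. Since the claimed identity involves only the symmetric bilinear form $\langle\cdot,\cdot\rangle_{\bL}$, symmetry of $\bL$ is all that is required; positive definiteness plays no role.

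First I would expand each of the three squared norms $\|x - z^{(t)}\|_{\bL}^2$, $\|x - y^{(t)}\|_{\bL}^2$, and $\|x - u^{(t)}\|_{\bL}^2$ via $\|x - c\|_{\bL}^2 = \|x\|_{\bL}^2 - 2\langle x, c\rangle_{\bL} + \|c\|_{\bL}^2$, and group the quantity $a^2\|x-z^{(t)}\|_{\bL}^2 + b\|x-y^{(t)}\|_{\bL}^2 - (a^2+b)\|x-u^{(t)}\|_{\bL}^2$ by powers of $x$. The coefficient of $\|x\|_{\bL}^2$ is $p + q - s = 0$, so the part quadratic in $x$ cancels. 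The part linear in $x$ equals $-2\langle x,\, p z^{(t)} + q y^{(t)} - s u^{(t)}\rangle_{\bL}$, which vanishes precisely because $s u^{(t)} = p z^{(t)} + q y^{(t)}$. Hence the whole expression collapses to the $x$-independent constant $p\|z^{(t)}\|_{\bL}^2 + q\|y^{(t)}\|_{\bL}^2 - s\|u^{(t)}\|_{\bL}^2$.

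It then remains to identify this constant with $\tfrac{pq}{s}\|y^{(t)}-z^{(t)}\|_{\bL}^2$. Substituting $u^{(t)} = (p z^{(t)} + q y^{(t)})/s$ gives $s\|u^{(t)}\|_{\bL}^2 = \tfrac{1}{s}\bigl(p^2\|z^{(t)}\|_{\bL}^2 + 2pq\langle z^{(t)},y^{(t)}\rangle_{\bL} + q^2\|y^{(t)}\|_{\bL}^2\bigr)$, so the constant becomes $\tfrac{p(s-p)}{s}\|z^{(t)}\|_{\bL}^2 + \tfrac{q(s-q)}{s}\|y^{(t)}\|_{\bL}^2 - \tfrac{2pq}{s}\langle z^{(t)},y^{(t)}\rangle_{\bL}$. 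Using $s-p=q$ and $s-q=p$, this factors as $\tfrac{pq}{s}\bigl(\|z^{(t)}\|_{\bL}^2 - 2\langle z^{(t)},y^{(t)}\rangle_{\bL} + \|y^{(t)}\|_{\bL}^2\bigr) = \tfrac{pq}{s}\|y^{(t)}-z^{(t)}\|_{\bL}^2 = \tfrac{a^2 b}{a^2+b}\|y^{(t)}-z^{(t)}\|_{\bL}^2$, which is the right-hand side. Since the argument is a mechanical expansion in a fixed bilinear form, there is no genuine obstacle; the only point that requires care is the bookkeeping, specifically the barycenter relation $s u^{(t)} = p z^{(t)} + q y^{(t)}$ that forces both the quadratic and the linear terms in $x$ to disappear.
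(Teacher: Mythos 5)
Your proof is correct, and it is essentially the argument the paper relies on: the paper's own ``proof'' is just a one-line deferral to \citet[Lemma~A.1]{Lu18} after observing that the definition of $u^{(t)}$ coincides, and your barycenter/parallel-axis expansion in the $\langle\cdot,\cdot\rangle_{\bL}$ inner product is precisely the standard verification behind that cited lemma. The key step --- that $(a^2+b)\,u^{(t)} = a^2 z^{(t)} + b\, y^{(t)}$ kills both the quadratic and linear terms in $x$, leaving the constant $\frac{a^2 b}{a^2+b}\|y^{(t)}-z^{(t)}\|^2_{\bL}$ --- is exactly right, and you correctly note that only symmetry of the bilinear form is needed.
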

\begin{proof}
     The proof is exactly the same as \cite[Lemma~A.1]{Lu18} since our definition of $u^{(t)}$ is the same as the definition of $u^{(t)}$ in \cite[Algorithm~2]{Lu18}.
\end{proof}

\begin{lemma}[\protect{\citealp[Lemma~A.2]{Lu18}}] \label{lemma:Lu18A2}
    Define $v^{(t+1)} \coloneqq u^{(t)} - \frac{ar}{a^2 + b} G_{[n]} \nabla h(y^{(t)})$, then
    \begin{align*}
        \| x^* - v^{(t+1)} \|^2_{\bL} - \| x^* - u^{(t)} \|^2_{\bL} &~=~ \frac{1}{r} \mE \left[ \| x^* - z^{(t+1)} \|^2_{\bL} - \| x^* - u^{(t)} \|^2_{\bL} \right]. 
    \end{align*}
\end{lemma}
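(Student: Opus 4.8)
The plan is to expand both squared-distance differences as a quadratic in the displacement $G_{\bullet}\nabla h(y^{(t)})$ and then match the resulting cross (linear) and quadratic terms using the statistical identities for $G_I$ from \Cref{lemma:matrix_expectation}. Write $g \coloneqq \nabla h(y^{(t)})$, $s \coloneqq \frac{ar}{a^2+b}$, and $d \coloneqq x^* - u^{(t)}$, so that $v^{(t+1)} = u^{(t)} - s\,G_{[n]} g$ and $z^{(t+1)} = u^{(t)} - s\,G_{I_t^2} g$. A direct expansion gives
\begin{align*}
  \|x^* - v^{(t+1)}\|_{\bL}^2 - \|x^*-u^{(t)}\|_{\bL}^2 &= 2s\,\ip{d}{G_{[n]}g}_{\bL} + s^2\|G_{[n]}g\|_{\bL}^2,\\
  \|x^* - z^{(t+1)}\|_{\bL}^2 - \|x^*-u^{(t)}\|_{\bL}^2 &= 2s\,\ip{d}{G_{I_t^2}g}_{\bL} + s^2\|G_{I_t^2}g\|_{\bL}^2.
\end{align*}

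First I would record the feasibility of all iterates: since $x^{(0)}=z^{(0)}=0\in\Cscr$ and every update modifies an iterate by a vector of the form $G_I v = P_I\bL^{-1}v$, which is supported on $I$ with vanishing coordinate-sum over $I$ (so $G_I v\in\Cscr$), an induction through the affine combinations defining $y^{(t)}$ and $u^{(t)}$ shows $x^{(t)},y^{(t)},z^{(t)},u^{(t)}\in\Cscr$. Together with $x^*\in\Cscr$ this yields $d\in\Cscr$, the single fact that makes the scaling by $1/r$ come out exactly.

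The quadratic terms match immediately: taking $\mE_{I_t^2}$ of the second display and invoking \cref{eq:GnormExp} gives $\mE\!\left[\|G_{I_t^2}g\|_{\bL}^2\right] = r\,\|G_{[n]}g\|_{\bL}^2$, so the factor $\tfrac1r$ on the right-hand side cancels this $r$ and reproduces $s^2\|G_{[n]}g\|_{\bL}^2$. For the linear terms, symmetry of $G_{I_t^2}$ lets me rewrite $\ip{d}{G_{I_t^2}g}_{\bL}=\ip{G_{I_t^2}g}{d}_{\bL}$ and apply \cref{eq:GIpExp} with $y=d\in\Cscr$ to obtain $\mE\!\left[\ip{d}{G_{I_t^2}g}_{\bL}\right]=r\,\ip{g}{d}$, whence $\tfrac1r\,\mE\!\left[2s\,\ip{d}{G_{I_t^2}g}_{\bL}\right]=2s\,\ip{d}{g}$. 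It then remains to verify the deterministic counterpart $\ip{d}{G_{[n]}g}_{\bL}=\ip{d}{g}$, which follows from the closed form $\bL G_{[n]}=\mathbb{I}-\frac{1}{e^T\bL^{-1}e}ee^T\bL^{-1}$ and $e^T d=0$. Combining the matched quadratic and linear pieces gives the claimed identity.

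The main obstacle is conceptual rather than computational: one must observe that \cref{eq:GIpExp} requires only its \emph{second} argument to lie in $\Cscr$ (the gradient $g$ itself generally fails $e^T g=0$, so it cannot play that role), and that the precise factor $1/r$ in the statement is exactly what is produced by the common coefficient $\frac{\tau-1}{n-1}=r$ appearing in both \cref{eq:GnormExp} and \cref{eq:GIpExp}. Once the feasibility $d\in\Cscr$ is in place, the remaining algebra is routine and parallels \citet[Lemma~A.2]{Lu18}, with the projection operators $G_I$ and $G_{[n]}$ replacing the unconstrained coordinate updates there.
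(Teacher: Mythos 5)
Your proof is correct, and its skeleton is the same as the paper's: expand both squared-distance differences around the displacement, then match the quadratic terms in expectation via \cref{eq:GnormExp}. The one place you diverge is the cross term. The paper handles it in a single step by applying the operator identity $\mE_I[G_I] = r\,G_{[n]}$ (\cref{eq:Gnexp}) inside the inner product, which makes $\ip{d}{G_{[n]}g}_{\bL} = \tfrac1r\,\mE\bigl[\ip{d}{G_{I_t^2}g}_{\bL}\bigr]$ immediate by linearity of expectation --- no feasibility of $d = x^*-u^{(t)}$ is needed, and the two sides never have to be evaluated explicitly. You instead route both sides through the common value $\ip{d}{g}$ via \cref{eq:GIpExp} and the closed form of $\bL G_{[n]}$, which forces you to first establish $u^{(t)}\in\Cscr$ by an induction over the iterates. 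That induction is sound (your observation that $G_I v = P_I\bL^{-1}v \in \Cscr$ and that $y^{(t)},u^{(t)}$ are affine combinations is exactly right, as is the observation that \cref{eq:GIpExp} only uses membership of its second argument in $\Cscr$), but it is extra machinery this particular identity does not require; the paper's version of the lemma holds for arbitrary $u^{(t)}$. The trade-off is that your route makes the role of feasibility explicit, which is information the surrounding convergence proof does rely on, whereas the paper's route is shorter and more local.
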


\begin{proof}
     Our proof is based on minor modification of the proof for \citealp[Lemma~A.2]{Lu18}.
     \begin{align*}
          &\| x^* - v^{(t+1)} \|^2_{\bL} - \| x^* - u^{(t)} \|^2_{\bL}  \\
          ~=~& 2 \langle x^* - u^{(t)}, u^{(t)} - v^{(t+1)} \rangle_{\bL} + \| u^{(t)} - v^{(t+1)} \|_{\bL}^2 \\
          ~=~& 2 \left\langle x^* - u^{(t)}, \frac{ar}{a^2 + b} G_{[n]} \nabla h(y^{(t)}) \right\rangle_{\bL} + \left\| \frac{ar}{a^2 + b} G_{[n]} \nabla h(y^{(t)}) \right\|_{\bL}^2 \quad \text{(By definition of $v^{(t+1)}$)} \\
        ~=~& \mE_{I_t^2}\left[ \frac{2}{r} \left\langle x^* - u^{(t)}, \frac{ar}{a^2 + b} G_{I_t^2} \nabla h(y^{(t)}) \right\rangle_{\bL } + \frac{1}{r} \left\| \frac{ar}{a^2 + b} G_{I_t^2} \nabla h(y^{(t)}) \right\|_{\bL}^2 \right] \quad \text{(By \Cref{lemma:matrix_expectation})} \\
        ~=~& \frac{1}{r} \mE_{I_t^2}\left[ 2 \left\langle x^* - u^{(t)}, u^{(t)} - z^{(t+1)} \right\rangle_{\bL } + \left\| u^{(t)} - z^{(t+1)} \right\|_{\bL}^2 \right] \qquad \text{(By definition of $z^{(t+1)}$)} \\
        ~=~& \frac{1}{r}  \mE \left[ \| x^* - z^{(t+1)} \|^2_{\bL} - \| x^* - u^{(t)} \|^2_{\bL} \right].
     \end{align*}
\end{proof}

\begin{lemma}[\protect{\citealp[Lemma~A.3]{Lu18}}] \label{lemma:Lu18A3}
    \[
        a^2 \leq (1 - a)(a^2 + b).
    \]
\end{lemma}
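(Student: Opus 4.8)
The plan is to prove the inequality by direct substitution of the definitions of $a$ and $b$ from the initialization step of \Cref{alg:accelerated_rbcd}, and then to reduce the claim to an elementary comparison. To lighten the notation I would write $\rho := \sqrt{\mu/L_{\max}}$, so that $a = \frac{r\rho}{1 + r\rho}$ and $b = \rho^2 r^2 a$.

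First I would record two algebraic simplifications that do most of the work. Since $a = r\rho/(1+r\rho)$, a one-line computation gives $1 - a = \frac{1}{1+r\rho}$. Factoring $a$ out of the right-hand factor yields $a^2 + b = a^2 + \rho^2 r^2 a = a\,(a + \rho^2 r^2)$.

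Next I would substitute both identities into the target inequality $a^2 \le (1-a)(a^2+b)$, obtaining $a^2 \le \frac{1}{1+r\rho}\, a\,(a + \rho^2 r^2)$. Cancelling the common strictly positive factor $a$ and clearing the denominator reduces the claim to $a(1 + r\rho) \le a + \rho^2 r^2$, i.e.\ $a\, r\rho \le \rho^2 r^2$; dividing by the positive quantity $r\rho$ then collapses everything to the single inequality $a \le r\rho$.

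Finally, the reduced inequality is immediate: $a = r\rho/(1+r\rho) \le r\rho$ because $1 + r\rho \ge 1$. I do not anticipate any genuine obstacle in this lemma—the reasoning is purely algebraic. The only point requiring care is to confirm that each quantity I cancel or divide by, namely $a$, $r\rho$, and $1+r\rho$, is strictly positive; this follows from $r = \tfrac{\tau-1}{n-1} \in (0,1]$ together with $\mu, L_{\max} > 0$, which make $\rho > 0$.
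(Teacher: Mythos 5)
Your proof is correct and complete: the identities $1-a = \frac{1}{1+r\rho}$ and $a^2+b = a\,(a+\rho^2 r^2)$ both check out against the definitions $a = \frac{r\rho}{1+r\rho}$ and $b = \rho^2 r^2 a$ from \Cref{alg:accelerated_rbcd}, and the chain of cancellations reducing the claim to $a \le r\rho$ is legitimate because $a$, $r\rho$, and $1+r\rho$ are all strictly positive under \Cref{ass:hStruncture} and the requirement $1 < \tau \le n$. The paper does not actually carry out this computation—it merely defers to the cited Lemma~A.3 of Lu (2018) with the substitutions $n \mapsto 1/r$ and $\mu \mapsto \mu/L_{\max}$—so your explicit elementary derivation is the same algebra that reference performs, just written out in a self-contained way.
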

\begin{proof}
     See the proof of \citet[Lemma~A.3]{Lu18}, we only need to substitute $n$ to $1/r$ and $\mu$ to $\mu/L_{\max}$.
\end{proof}

Now we show the proof for \cref{thm:accelerated_rbcd}. 
\begin{proof}
We define some auxiliary variables
\begin{align*}
    v^{(t+1)} \coloneqq u^{(t)} - \frac{ar}{a^2 + b} G_{[n]} \nabla h(y^{(t)}).
\end{align*}

We start with expected decrease from $y^{(t)}$ to $x^{(t+1)}$:
     \begin{align*}
         & \mE \left[ h( x^{(t+1)} ) - h(y^{(t)}) ~\Big|~ y^{(t)}  \right] \\
         \leq~& -\frac{r}{2} \| G_{[n]} \nabla h( y^{(t)} ) \|_{\bL}^2  \qquad \text{(By \Cref{lemma:descentLemma})} \\
         \leq~& a \langle G_{[n]} \nabla h( y^{(t)} ), v^{(t+1)} - z^{(t)} \rangle_{\bL} + \frac{a^2}{2r} \| v^{(t+1)} - z^{(t)} \|_{\bL}^2 \quad \text{(By $\frac{1}{2}\|x\|^2_{\bL} + \frac{1}{2} \|y\|^2_{\bL} \geq \ip{x}{y}_{\bL}$ )}  \\
        =~& a \langle G_{[n]} \nabla h( y^{(t)} ), v^{(t+1)} - z^{(t)} \rangle_{\bL} + \frac{a^2+b}{2r} \| v^{(t+1)} - u^{(t)} \|_{\bL}^2 \\&+ \frac{a^2b}{2r(a^2+b)} \| y^{(t)} - z^{(t)} \|_{\bL}^2 -  \frac{b}{2r} \| v^{(t+1)} - y^{(t)} \|_{\bL}^2 \qquad \text{(By \Cref{lemma:Lu18A1})} \\
        \leq~& \underbrace{a \langle G_{[n]} \nabla h( y^{(t)} ), v^{(t+1)} - z^{(t)} \rangle_{\bL} + \frac{a^2+b}{2r} \| v^{(t+1)} - u^{(t)} \|_{\bL}^2}_{\mbox{\small term 1}} + \frac{a^2b}{2r(a^2+b)} \| y^{(t)} - z^{(t)} \|_{\bL}^2.
     \end{align*}
     By the definition of $v^{(t+1)}$, we have
     \begin{align*}
         v^{(t+1)} ~=~ & \underset{z \in \Real^n}{\arg\min} ~a \langle G_{[n]} \nabla h(y^{(t)}), z - z^{(t)} \rangle_{\bL} + \frac{a^2+b}{2r} \| z - u^{(t)} \|_{\bL}^2 
     \end{align*}
     Then by applying \Cref{lemma:threePoints} to term 1, we obtain
     \begin{align}
         & \mE \left[ h( x^{(t+1)} ) - h(y^{(t)}) ~\Big|~ y^{(t)}  \right] \nonumber \\
         \leq~& a \langle G_{[n]} \nabla h( y^{(t)} ), x^* - z^{(t)} \rangle_{\bL} + \frac{a^2+b}{2r} \| x^* - u^{(t)} \|_{\bL}^2 - \frac{a^2+b}{2r} \| x^* - v^{(t+1)} \|_{\bL}^2 + \frac{a^2b}{2r(a^2+b)} \| y^{(t)} - z^{(t)} \|_{\bL}^2 \nonumber \\
         =~& a \langle \nabla h( y^{(t)} ), x^* - z^{(t)} \rangle + \frac{a^2+b}{2r} \| x^* - u^{(t)} \|_{\bL}^2 - \frac{a^2+b}{2r} \| x^* - v^{(t+1)} \|_{\bL}^2 + \frac{a^2b}{2r(a^2+b)} \| y^{(t)} - z^{(t)} \|_{\bL}^2 \quad \text{(By \cref{eq:GIpExp})} \nonumber \\
         {=}~& a \langle \nabla h( y^{(t)} ), x^* - z^{(t)} \rangle + \frac{a^2+b}{2r^2} \mE \left[ \| x^* - u^{(t)} \|_{\bL}^2 - \| x^* - z^{(t+1)} \|_{\bL}^2 \right] + \frac{a^2b}{2r(a^2+b)} \| y^{(t)} - z^{(t)} \|_{\bL}^2 \text{(By \Cref{lemma:Lu18A2})} \nonumber \\
         \leq~& a \langle \nabla h( y^{(t)} ), x^* - z^{(t)} \rangle + \frac{a^2+b}{2r^2} \mE \left[ \| x^* - u^{(t)} \|_{\bL}^2 - \| x^* - z^{(t+1)} \|_{\bL}^2 \right] + \frac{a^2b}{2r^2(a^2+b)} \| y^{(t)} - z^{(t)} \|_{\bL}^2 \nonumber \\
         =~& a \langle \nabla h( y^{(t)} ), x^* - z^{(t)} \rangle + \frac{1}{2r^2} \left( (a^2 +b ) \| x^* - u^{(t)} \|_{\bL}^2 + \frac{a^2b}{a^2+b}  \| y^{(t)} - z^{(t)} \|_{\bL}^2 \right) - \frac{a^2 + b}{2r^2} \mE \left[  \| x^* - z^{(t+1)} \|_{\bL}^2 \right] \nonumber \\
         =~& a \langle \nabla h( y^{(t)} ), x^* - z^{(t)} \rangle + \frac{1}{2r^2} \left( a^2 \| x^* - z^{(t)} \|_{\bL}^2 + b \| x^* - y^{(t)} \|_{\bL}^2 \right) - \frac{a^2 + b}{2r^2} \mE \left[  \| x^* - z^{(t+1)} \|_{\bL}^2 \right] \text{(By \Cref{lemma:Lu18A1})}. \label{eq:accRBCDStronglyCVX1}
     \end{align}
     Similar to the proof of \citet[Theorem~3.1]{Lu18}, by using the strong convexity of $h$, we have
     \begin{align*}
         h( y^{(t)} ) - h(x^*) ~\leq~& \langle \nabla h( y^{(t)} ), y^{(t)} - x^*  \rangle - \frac{\mu}{2} \| y^{(t)} - x^* \|^2 \\
         ~=~& \langle \nabla h( y^{(t)} ), y^{(t)} - z^{(t)} \rangle + \langle \nabla h( y^{(t)} ), z^{(t)} - x^* \rangle - \frac{\mu}{2} \| y^{(t)} - x^* \|^2 \\
         ~=~&  \frac{1-a}{a}\langle \nabla h( y^{(t)} ), x^{(t)} - y^{(t)} \rangle + \langle \nabla h( y^{(t)} ), z^{(t)} - x^* \rangle - \frac{\mu}{2} \| y^{(t)} - x^* \|^2 \\
         ~\leq~ & \frac{1-a}{a}( h( x^{(t)} ) - h(y^{(t)}) ) + \langle \nabla h( y^{(t)} ), z^{(t)} - x^* \rangle - \frac{\mu}{2} \| y^{(t)} - x^* \|^2 \quad \text{(By convexity of $h$)} \\
         ~\leq~ & \frac{1-a}{a}( h( x^{(t)} ) - h(y^{(t)}) ) + \langle \nabla h( y^{(t)} ), z^{(t)} - x^* \rangle - \frac{\mu}{2L_{\max}} \| y^{(t)} - x^* \|^2_{\bL}.
     \end{align*}
     By rearranging the above inequality, we get
     \begin{align*}
         h( y^{(t)} ) - h(x^*) \leq (1-a) ( h( x^{(t)} ) - h(x^*) ) + a \langle \nabla h( y^{(t)} ), z^{(t)} - x^* \rangle - \frac{\mu a}{2L_{\max}} \| y^{(t)} - x^* \|^2_{\bL}.
     \end{align*}
     Sum the above inequality with \cref{eq:accRBCDStronglyCVX1}, we get
     \begin{align*}
         &\mE \left[ h( x^{(t+1)} ) - h(x^*) ~\Big|~ y^{(t)}  \right] \\
         \leq~& (1-a) ( h( x^{(t)} ) - h(x^{*}) ) + \frac{1}{2r^2} \left( a^2 \| x^* - z^{(t)} \|_{\bL}^2 + b \| x^* - y^{(t)} \|_{\bL}^2 \right)  \\
         &~~ - \frac{a^2 + b}{2r^2} \mE \left[  \| x^* - z^{(t+1)} \|_{\bL}^2 \right] - \frac{\mu a}{2L_{\max}} \| y^{(t)} - x^* \|^2_{\bL} \\
         \leq~& (1-a) ( h( x^{(t)} ) - h(x^{*}) ) ) + \frac{1}{2r^2}  a^2 \| x^* - z^{(t)} \|_{\bL}^2 + b - \frac{a^2 + b}{2r^2} \mE \left[  \| x^* - z^{(t+1)} \|_{\bL}^2 \right] \text{(By $b = \frac{\mu a r^2}{L_{\max}}$)} \\
         \leq~& (1-a) ( h( x^{(t)} ) - h(x^{*}) ) + \frac{(1-a)(a^2+b)}{2r^2}  \| x^* - z^{(t)} \|_{\bL}^2 - \frac{a^2 + b}{2r^2} \mE \left[  \| x^* - z^{(t+1)} \|_{\bL}^2 \right] \text{(By \Cref{lemma:Lu18A3})}.
     \end{align*}
     Rearrange the above inequality,
     \begin{align*}
         & \mE \left[ h( x^{(t+1)} ) - h(x^*) + \frac{a^2 + b}{2r^2}  \| x^* - z^{(t+1)} \|_{\bL}^2  ~\bigg|~ y^{(t)}   \right] \\
         \leq~& (1-a) \left( ( h( x^{(t)} ) - h(x^{*}) ) + \frac{a^2+b}{2r^2}  \| x^* - z^{(t)} \|_{\bL}^2 \right).
     \end{align*}
     Taking the expectation on both sides and recursively apply to $t = 0, 1, \ldots, T-1$ yields the desired result.
\end{proof}

\section{Derivation of the theoretical results in the main context} \label{appendix:derivation_main_context}
In this section, we build the bridge connecting the result proved in the appendix to the theorems in the main context. Before stating the major derivations, we first introduce some useful lemmas.
\begin{lemma}[Relationship between primal and dual variables] \label{lemma:relation_primal_dual}
    Assume that \cref{assum:stronglyCvx} and \cref{assum:smooth} hold.  
    \begin{enumerate}
        \item If $w_i^{(t)} = \nabla f_i^*(y_i^{(t)})$ for all $i$ and $t$, and $\hat w^{(t)}$ are picked uniformly randomly from $\{w_i^{(t)}\mid i\in[N]\}$, then
        \begin{align}
            \mE[\|\hat w^{(t)} - w^*\|^2] \leq \frac{2}{N \alpha}\left[G(y^{(t)}) - G(y^*) \right].
        \end{align}
        \item If $w_i^{(t)} = \texttt{oracle}_{f_i^*, \delta/2}(y^{(t)}_i, w_{i}^{(t-1)})$ for all $i$ and $t$, and $\hat w^{(t)}$ are picked uniformly randomly from $\{w_i^{(t)}\mid i\in[N]\}$, then
        \[\mE[\|\hat w^{(t)} - w^*\|^2] \leq \frac{8}{N\alpha} \left\{ \delta^t\left[G(y^{(0)}) - G(y^*)\right] + \sum_{j=1}^t (\delta^{t-j} + \delta^{t-j+1})\left[G(y^{(j)}) - G(y^*)\right]\right\}.\]
    \end{enumerate}
\end{lemma}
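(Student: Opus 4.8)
The plan is to reduce both parts to a single smoothness inequality for the conjugate and then absorb the inexactness using the recursion already recorded in \Cref{lemma:LiuLemma1}. First I would record the primal--dual identity: since each $f_i$ is $\alpha$-strongly convex (\Cref{assum:stronglyCvx}), its conjugate $f_i^*$ is $(1/\alpha)$-smooth, hence differentiable, so the optimality relation $w^*\in\bigcap_i\partial f_i^*(y_i^*)$ derived in \Cref{sec:problem} collapses to $w^*=\nabla f_i^*(y_i^*)$ for every $i$. The only analytic ingredient I need is the standard co-coercivity bound for an $L$-smooth convex $g$,
\[
\|\nabla g(x)-\nabla g(y)\|^2 \le 2L\bigl(g(x)-g(y)-\langle\nabla g(y),x-y\rangle\bigr),
\]
applied with $g=f_i^*$ and $L=1/\alpha$.

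For part (a), since $\hat w^{(t)}$ is uniform over $\{w_i^{(t)}\}$, we have $\mE[\|\hat w^{(t)}-w^*\|^2]=\tfrac1N\sum_i\|\nabla f_i^*(y_i^{(t)})-\nabla f_i^*(y_i^*)\|^2$. I would invoke the co-coercivity bound termwise with $x=y_i^{(t)}$, $y=y_i^*$, then sum over $i$; the cross term $\langle w^*,\sum_i(y_i^{(t)}-y_i^*)\rangle$ vanishes because both $y^{(t)}$ and $y^*$ are dual feasible, i.e. $\sum_i y_i^{(t)}=\sum_i y_i^*=0$. What remains is exactly $\tfrac{2}{N\alpha}(G(y^{(t)})-G(y^*))$.

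For part (b), write $w_i^{(t)}=\wt{\nabla} f_i^*(y_i^{(t)})$ and split $\|w_i^{(t)}-w^*\|^2\le 2\|w_i^{(t)}-\nabla f_i^*(y_i^{(t)})\|^2+2\|\nabla f_i^*(y_i^{(t)})-w^*\|^2$. The second piece is handled exactly as in part (a), contributing $\tfrac{4}{N\alpha}(G(y^{(t)})-G(y^*))$ after averaging. For the first (inexactness) piece I would apply \Cref{lemma:LiuLemma1} to each conjugate $f_i^*$ along the iterates $y_i^{(0)},y_i^{(1)},\dots$ with warm start $w_i^{(t-1)}$, yielding $\mE[\|w_i^{(t)}-\nabla f_i^*(y_i^{(t)})\|^2]\le\sum_{j=0}^{t-1}\delta^{t-j}\,\mE[\|\nabla f_i^*(y_i^{(j)})-\nabla f_i^*(y_i^{(j+1)})\|^2]$; bounding each difference by $2\|\nabla f_i^*(y_i^{(j)})-w^*\|^2+2\|\nabla f_i^*(y_i^{(j+1)})-w^*\|^2$ and re-using the part-(a) estimate turns the right-hand side, after summing over $i$, into $\tfrac{4}{\alpha}\sum_{j=0}^{t-1}\delta^{t-j}\bigl((G(y^{(j)})-G(y^*))+(G(y^{(j+1)})-G(y^*))\bigr)$.

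The remaining work, and the only place needing care, is the bookkeeping: combining the two exact-difference series after the index shift $j\mapsto j+1$ so as to collect the coefficient of each term. Writing $D_j:=G(y^{(j)})-G(y^*)\ge 0$, the $D_0$ term carries weight $\tfrac{8}{N\alpha}\delta^t$, each $D_j$ with $1\le j\le t-1$ carries $\tfrac{8}{N\alpha}(\delta^{t-j}+\delta^{t-j+1})$, and the $D_t$ term collects $\tfrac{8}{N\alpha}\delta$ from the shifted series together with the $\tfrac{4}{N\alpha}$ from part (a). Since $\tfrac{4}{N\alpha}(1+2\delta)\le\tfrac{8}{N\alpha}(1+\delta)$, this last coefficient can be relaxed to $\delta^{t-j}+\delta^{t-j+1}$ evaluated at $j=t$, which exactly matches the stated sum $\tfrac{8}{N\alpha}\{\delta^t D_0+\sum_{j=1}^t(\delta^{t-j}+\delta^{t-j+1})D_j\}$. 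I expect no conceptual obstacle here; the main risk is an off-by-one slip in reindexing the nested sum.
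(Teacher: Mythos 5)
Your proposal is correct and follows essentially the same route as the paper: part (a) is the co-coercivity inequality for the $(1/\alpha)$-smooth conjugate (the paper applies it to $G$ as a whole and drops the cross term by optimality of $y^*$, while you apply it blockwise and observe the cross term is exactly zero by dual feasibility --- a cosmetic difference), and part (b) is the same two-term split combined with \Cref{lemma:LiuLemma1} and the identical reindexing, including the observation that the $D_t$ coefficient $\tfrac{4}{N\alpha}(1+2\delta)$ can be relaxed to $\tfrac{8}{N\alpha}(1+\delta)$. No gaps.
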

\begin{proof}
    First, we consider the case where $w_i^{(t)} = \nabla f_i^*(y_i^{(t)})$ for all $i$. By \cref{assum:stronglyCvx}, we know that $G$ is $\frac{1}{\alpha}$ smooth and convex. It follows from \citealp[Theorem~2.1.5]{cvxopt_lecture} that 
    \begin{align}
        \sum_{i=1}^N \|w_i^{(t)} - w^*\|^2 ~=~ \| \nabla G( y^{(t)} ) - \nabla G(y^*) \|^2
        &~\leq~ \frac{2}{\alpha} \left[G(y^{(t)}) - G(y^*) -  \ip{\nabla G(y^*)}{y^{(t)} - y^*}\right] \nonumber
        \\&~\leq~ \frac{2}{\alpha} \left[G(y^{(t)}) - G(y^*) \right],  \label{eq:primalDual1}
    \end{align}
    where the second inequality follows from the fact that $\ip{\nabla G(y^*)}{y - y^*} \geq 0~\forall y \in \Cscr $ since $y^*$ is optimal for the dual problem and $y^{(t)}$ is dual feasible. It then follows that 
    \[\mE[\|\hat w^{(t)} - w^*\|^2] = \frac{1}{N}\sum_{i=1}^N \|w_i^{(t)} - w^*\|^2 \leq \frac{2}{N\alpha} \left[G(y^{(t)}) - G(y^*) \right].\]
    Next, we consider the case where $w_i^{(t)} = \texttt{oracle}_{f_i^*, \delta/2}(y^{(t)}_i, w_{i}^{(t-1)})$ for all $i$. In this case, let $\tilde w_i^{(t)} = \nabla f_i^*(y_i^{(t)})$ for all $i$ and $t$. Then by \cref{lemma:LiuLemma1}, we have 
    \begin{align*}
        \sum_{i=1}^N \|w_i^{(t)} - w^*\|^2 &\leq 2\sum_{i=1}^N \|\tilde w_i^{(t)} - w^*\|^2 + 2\sum_{i=1}^N \|\tilde w_i^{(t)} - w_i^{(t)}\|^2
        \\&\leq \frac{4}{\alpha}\left[G(y^{(t)}) - G(y^*) \right] + 2\sum_{i=1}^N \|\tilde w_i^{(t)} - w_i^{(t)}\|^2 \qquad \text{(By \cref{eq:primalDual1})}
        \\&\leq \frac{4}{\alpha}\left[G(y^{(t)}) - G(y^*) \right] + 2 \sum_{i=1}^N \sum_{j=0}^{t-1}\delta^{t-j}\|\tilde w_i^{(j)} - \tilde w_i^{(j+1)}\|^2 \qquad \text{(By \Cref{lemma:LiuLemma1})}
        \\&\leq \frac{4}{\alpha}\left[G(y^{(t)}) - G(y^*) \right] + 2  \sum_{j=0}^{t-1}\delta^{t-j} \sum_{i=1}^N \left( 2\|\tilde w_i^{(j)} -  w_i^{*}\|^2 + 2 \|\tilde w_i^{(j+1)} -  w_i^{*}\|^2 \right)
        \\&\leq \frac{4}{\alpha}\left[G(y^{(t)}) - G(y^*) \right] + \frac{8}{\alpha} \sum_{j=0}^{t-1} \delta^{t-j}\left[G(y^{(j)}) - G(y^*) \right] + \frac{8}{\alpha} \sum_{j=0}^{t-1} \delta^{t-j}\left[G(y^{(j+1)}) - G(y^*) \right]
        \\&\leq \frac{8}{\alpha} \left\{ \delta^t\left[G(y^{(0)}) - G(y^*)\right] + \sum_{j=1}^t (\delta^{t-j} + \delta^{t-j+1})\left[G(y^{(j)}) - G(y^*)\right]\right\}.
    \end{align*}
     It then follows that 
    \[\mE[\|\hat w^{(t)} - w^*\|^2] \leq \frac{8}{N\alpha} \left\{ \delta^t\left[G(y^{(0)}) - G(y^*)\right] + \sum_{j=1}^t (\delta^{t-j} + \delta^{t-j+1})\left[G(y^{(j)}) - G(y^*)\right]\right\}.\]
\end{proof}

\begin{lemma}[Bound on dual objective] \label{lemma:bound_dual_obj}
    Assume that \cref{assum:stronglyCvx} and \cref{assum:smooth} hold. Then we have 
    \[G(y^{(t)}) - G(y^*) \leq \frac{1}{2\alpha}\|y^{(t)} - y^*\|^2.\]
\end{lemma}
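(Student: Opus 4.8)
The plan is to combine the smoothness of the dual objective $G$ with the first-order optimality of $y^*$ over the feasible subspace, and then to exploit the fact that the resulting linear term vanishes precisely because both $y^{(t)}$ and $y^*$ are dual feasible. First I would record that \Cref{assum:stronglyCvx} makes each $f_i$ be $\alpha$-strongly convex, so each conjugate $f_i^*$ is $(1/\alpha)$-smooth; since $G(y) = \sum_{i=1}^N f_i^*(y_i)$ is separable across blocks, $G$ is $(1/\alpha)$-smooth overall (this is exactly the block-wise smoothness noted after \Cref{assum:stronglyCvx}). The standard quadratic upper bound then gives
\begin{equation*}
  G(y^{(t)}) \leq G(y^*) + \ip{\nabla G(y^*)}{y^{(t)} - y^*} + \frac{1}{2\alpha}\|y^{(t)} - y^*\|^2 .
\end{equation*}

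The crux of the argument is to show that the linear term $\ip{\nabla G(y^*)}{y^{(t)} - y^*}$ is zero. Both $y^{(t)}$ and $y^*$ are dual feasible, i.e.\ $\sum_i y_i^{(t)} = \sum_i y_i^* = 0$, so their difference lies in the linear subspace $\Cscr_{[N]}$. Since $y^*$ minimizes $G$ over $\Cscr_{[N]}$, the optimality condition $-\nabla G(y^*) \in \Nscr_{\Cscr_{[N]}}(y^*)$ places $\nabla G(y^*)$ in the orthogonal complement of $\Cscr_{[N]}$; concretely, that complement consists of vectors with identical blocks, so $\nabla G(y^*) = (\lambda^*, \dots, \lambda^*)$ for some $\lambda^* \in \Real^d$, whence $\ip{\nabla G(y^*)}{y^{(t)} - y^*} = \ip{\lambda^*}{\sum_i (y_i^{(t)} - y_i^*)} = 0$. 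Equivalently, one may invoke $\ip{\nabla G(y^*)}{d} \geq 0$ for every $d \in \Cscr_{[N]}$ and apply it to both $d = y^{(t)} - y^*$ and $d = -(y^{(t)} - y^*)$, which remain feasible since $\Cscr_{[N]}$ is a subspace, forcing the inner product to vanish.

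Substituting this back yields $G(y^{(t)}) - G(y^*) \leq \frac{1}{2\alpha}\|y^{(t)} - y^*\|^2$, as claimed. The smoothness step is entirely routine; the only part requiring care is the vanishing of the cross term, since the gradient at the \emph{constrained} minimizer $y^*$ is not zero, unlike in the unconstrained case. The point is that $\nabla G(y^*)$ is orthogonal to every feasible direction, and dual feasibility of both iterates places their difference exactly in that feasible subspace. This mirrors the reasoning already used in the proof of \Cref{lemma:relation_primal_dual}, where $\ip{\nabla G(y^*)}{y - y^*} \geq 0$ for $y \in \Cscr$ was invoked; here I strengthen that inequality to an equality using that $\Cscr_{[N]}$ is a subspace rather than a general convex set.
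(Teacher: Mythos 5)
Your proof is correct and follows essentially the same route as the paper: a $(1/\alpha)$-smoothness upper bound at $y^*$ followed by the observation that the cross term vanishes because $\nabla G(y^*)$ is orthogonal to the feasible subspace containing $y^{(t)} - y^*$. The paper packages that last step as \Cref{lemma:gradient_optimal} (i.e.\ $P_{[N]}^T \nabla G(y^*) = 0$, proved via the same normal-cone argument you give inline), so the two arguments are substantively identical.
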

\begin{proof}
    By \cref{assum:stronglyCvx}, we know that $G$ is $\frac{1}{\alpha}$ smooth. It follows that 
    \begin{align*}
        G(y^{(t)}) - G(y^*) &\leq \ip{\nabla G(y^*)}{y^{(t)} - y^*} + \frac{1}{2\alpha}\|y^{(t)} - y^*\|^2 
        \\&= \ip{\nabla G(y^*)}{P_{[N]}y^{(t)} - P_{[N]}y^*} + \frac{1}{2\alpha}\|y^{(t)} - y^*\|^2
        \\&= \ip{ P_{[N]}^T \nabla G(y^*)}{y^{(t)} - y^*} + \frac{1}{2\alpha}\|y^{(t)} - y^*\|^2
        \\&= \frac{1}{2\alpha}\|y^{(t)} - y^*\|^2,
    \end{align*}
    where the last equality follows from \cref{lemma:gradient_optimal} by letting $h \coloneqq G$. 
\end{proof}

\paragraph{Proof for \cref{thm:necoaraConvergenceRate} }
\begin{proof}
    As we mentioned in the main context, the convergence rate for $G(y^{(t)}) - G(y^*)$ follows directly from \citep{necoara2017random}. We reproduce it for completeness.
    Make the identification $h = G$ (extend $h$ from coordinate-wise to block-wise), then $\mu_{\min} = 1/\beta$ and $L_{\max} = 1/\alpha$. \Cref{lemma:descentLemma} gives
    \begin{align*}
        \mE [G( y^{(t)} )] - G(y^*) ~\leq~ \left( 1 - \frac{\tau-1}{N-1} \frac{\alpha}{\beta}  \right)^t ( G( y^{(0)} ) - G(y^*) ) \qquad \forall t \in \mathbb{N}. 
    \end{align*}
    Next we derive the bound for $\|w^{(t)} - w^*\|^2$. Indeed, we have
    \begin{align*}
        \mE[\|\hat w^{(T)} - w^*\|^2] &\leq \frac{2}{N \alpha}\left[G(y^{(T)}) - G(y^*) \right]
        \\&\leq \frac{2}{N \alpha} \left( 1 - \frac{\tau - 1}{N-1} \frac{\alpha}{\beta} \right)^{T} ( G(y^{(0)}) - G(y^*) )
        \\&\leq \frac{1}{N \alpha^2} \left( 1 - \frac{\tau - 1}{N-1} \frac{\alpha}{\beta} \right)^{T} \|y^*\|^2,
    \end{align*}
    where the first and third inequalities respectively follow from \cref{lemma:relation_primal_dual} and \cref{lemma:bound_dual_obj}. 
    Furthermore, if we assume \cref{assum:dissimilar}, then we have 
    \[\mE[\|\hat w^{(T)} - w^*\|^2] \leq \frac{1}{\alpha^2} \left( 1 - \frac{\tau - 1}{N-1} \frac{\alpha}{\beta} \right)^{T} \zeta^2.\]
\end{proof}

\paragraph{Proof for \cref{thm:inexactConvergence}}
\begin{proof}
    Make the identification $h = G$ (extend $h$ from coordinate-wise to block-wise), then $\mu_{\min} = 1/\beta$ and $L_{\max} = 1/\alpha$. \Cref{thm:inexact_rbcd} gives the following convergence rate
    \begin{align}
        \mE[ G( y^{(t)} ) ] - G( y^* ) ~\leq~ \left( 1 - \kappa \right)^t [ G( y^{(0)} ) - G( y^* ) ] \qquad \forall t \in \mathbb{N}. \label{eq:inexactConverge}
    \end{align}
    Note that one minor difference is that we set $\delta = (1-\kappa)/2$ and use $\delta/2$-inexact gradient oracle in the proof of \Cref{thm:inexact_rbcd}, which is equivalent as setting $\delta = (1-\kappa)/4$ with $\delta$-inexact gradient oracle. 
    
    Next we derive the bound for $\|w^{(t)} - w^*\|^2$. Indeed, we have
    \begin{align*}
        \mE[\|\hat w^{(T)} - w^*\|^2] &\leq \frac{8}{N\alpha} \left\{ \delta^T\left[G(y^{(0)}) - G(y^*)\right] + \delta^T(1+\delta)\sum_{t=1}^T \delta^{-t}\left[G(y^{(t)}) - G(y^*)\right]\right\} \quad \text{(By \Cref{lemma:relation_primal_dual})}
        \\&\leq \frac{8\delta^T}{N\alpha} \left\{ \left[G(y^{(0)}) - G(y^*)\right] + (1+\delta)\sum_{t=1}^T \left(\frac{1 - \kappa}{\delta}\right)^t\left[G(y^{(0)}) - G(y^*)\right]\right\} \quad \text{(By \cref{eq:inexactConverge})}
        \\&= \frac{8(1+\delta)\delta^T}{N\alpha} \frac{ \left(\frac{1 - \kappa}{\delta}\right)^{T+1} - 1}{\left(\frac{1 - \kappa}{\delta}\right) - 1} \left[G(y^{(0)}) - G(y^*)\right]
        \\ &\leq \frac{40}{3N\alpha}(1-\kappa)^T  \left[G(y^{(0)}) - G(y^*)\right] \qquad \left(\mbox{by}~ \delta = \frac{1-\kappa}{4}\right) 
        \\&\leq \frac{20}{3N\alpha^2}(1-\kappa)^T  \|y^*\|^2 \qquad \text{(By \Cref{lemma:bound_dual_obj})}.
    \end{align*}
    Furthermore, if we assume \cref{assum:dissimilar}, then we have 
    \[
        \mE[\|\hat w^{(T)} - w^*\|^2] \leq \frac{20}{3\alpha^2}(1-\kappa)^T  \zeta^2.
    \]
\end{proof}

\paragraph{Proof for \cref{thm:accFedDCDStronglyCVX}}
\begin{proof}
    Make the identification $h = G$ (extend $h$ from coordinate-wise to block-wise), then $\mu_{\min} = 1/\beta$ and $L_{\max} = 1/\alpha$. \Cref{thm:inexact_rbcd}, and \cref{thm:accelerated_rbcd} gives us the convergence rate for $G(y^{(t)}) - G(y^*)$.
    We only need to derive the bound for $\|w^{(t)} - w^*\|^2$. Indeed, we have
    \begin{align*}
        \mE[\|\hat w^{(T)} - w^*\|^2] &\leq \frac{2}{N \alpha} \mE \left[G(y^{(T)}) - G(y^*) \right] \qquad \text{(By \cref{lemma:relation_primal_dual})}
        \\&\leq \frac{2}{N \alpha}\left( 1 - \frac{ \sqrt{\frac{\alpha}{\beta}} }{ \frac{N-1}{\tau -1}+ \sqrt{\frac{\alpha}{\beta}} } \right)^{T} \left[ G(y^{(0)}) - G(y^*) \right]
        \\&\leq \frac{1}{N \alpha^2}\left( 1 - \frac{ \sqrt{\frac{\alpha}{\beta}} }{ \frac{N-1}{\tau -1}+ \sqrt{\frac{\alpha}{\beta}} } \right)^{T} \|y^*\|^2 \quad \text{(By \Cref{lemma:bound_dual_obj})} .
    \end{align*}
    Furthermore, if we assume \cref{assum:dissimilar}, then we have 
    \[\mE[\|\hat w^{(T)} - w^*\|^2] \leq \frac{1}{\alpha^2}\left( 1 - \frac{ \sqrt{\frac{\alpha}{\beta}} }{ \frac{N-1}{\tau -1}+ \sqrt{\frac{\alpha}{\beta}} } \right)^{T} \zeta^2.\]
\end{proof}

\section{Complexity lower bound} \label{appendix:lower_bound}
\begin{proof}[Proof of \Cref{thm:lowerBound}]
    We follow the function used by Nesterov to prove complexity lower bound for smooth and strongly convex objectives \citep{nemirovsky1983problem,cvxopt_lecture,Bubeck15}. We divide our analysis into two cases: $\alpha < \beta/N$ and $\alpha \geq \beta/N$. 
    
    First we discuss the case when $\alpha < \beta/N$. We construct $N$ functions as follows:
    \begin{align}
        f_i( w ) = \frac{\beta - N \alpha}{8} \left( w^T M^{(i)} w - \mathbf{1}_{i=1} \cdot 2 \langle e_1, w \rangle \right) + \frac{\alpha}{2} \| w \|^2 \qquad \forall i \in [N],
    \end{align}
    where $M^{(i)} : \ell_2 \to \ell_2 $ are infinite dimensional block diagonal matrix. For $k \geq 2$, we let
    \begin{align}
        \begin{pmatrix}
            M^{(k)}_{i,j} & M^{(k)}_{i,j+1} \\
            M^{(k)}_{i+1,j} & M^{(k)}_{i+1,j+1}
        \end{pmatrix}
        = 
        \begin{pmatrix}
            1 & -1 \\
            -1 & 1
        \end{pmatrix}
        \quad \text{when $i =j = pn+k$ for some $p \in \mathbb{N}$},
    \end{align}
    and $M^{(k)}_{i,j} = 0$ otherwise. For $k=1$, we follow the same construction expect that we modify its first block as
    \[
        \begin{pmatrix}
            M^{(1)}_{1,1} & M^{(1)}_{1,2} \\
            M^{(1)}_{2,1} & M^{(1)}_{2,2}
        \end{pmatrix}
        = 
        \begin{pmatrix}
            2 & -1 \\
            -1 & 1
        \end{pmatrix}.
    \]
    By this construction, it is easy to verify that $ 0 \preceq M^{(k)} \preceq 4 \mathbb{I}$, therefore $f_i$'s are $\alpha$-strongly convex and $\beta$-smooth.
    
    Further, we know that
    \[
        \sum_{k=1}^{\infty} M^{(k)} = \begin{pmatrix}
            2 & -1 & 0 & \dots \\
            -1 & 2 & -1 & \dots \\
            0 & -1 & 2 & \dots \\
            \vdots & \vdots & \vdots & \ddots 
        \end{pmatrix}.
    \]
    This matrix is identical to the matrix $A$ used in \citet[Theorem~3.15]{Bubeck15}. Following the derivation in \citet[Theorem~3.15]{Bubeck15}, we know that the solution to our problem is 
    \[
        w^*_i = \left( \frac{\sqrt{ \beta/(N \alpha) } - 1}{ \sqrt{ \beta/(N \alpha) } + 1 }  \right)^i.
    \]
    With out loss generality, we assume that the initial point is $w^{(0)} = 0$. Let $K_t = \{ i \in \mathbb{N}_+ ~\mid~ w^{(t)}_i \neq 0 \}$. By the definition of $K_t$, we know that
    \begin{align}
        \| w^{(t)} - w^* \|^2 \geq \sum_{i= K_t+1}^{\infty} ( w^*_i )^2.  \label{eq:lowerBoundReminder}
    \end{align}
    By strong convexity, we further know that
    \[
        F(w^{(t)}) - F(w^*) \geq \frac{ N \alpha }{2} \| w^{(t)} - w^* \|^2 \geq \frac{ N \alpha }{2} \sum_{i= K_t+1}^{\infty} ( w^*_i )^2.
    \]
    Therefore we only need to have a lower bound for the term $\sum_{i= K_t+1}^{\infty} ( w^*_i )^2$ under the random participation black-box procedure.
    
    At $t=0$, $K_t = 0$ by the initialization. When $t=1$, $K_t=1$ only when the first client participates and $K_t = 0$ otherwise. Therefore $K_t = 1$ with probability $\frac{\tau}{N}$ and $0$ with probability $1 - \frac{\tau}{N}$. By the construction of the black-box procedure in \Cref{sec:lowerBound}, we can have the same conclusion for $t > 1$: at time $t$, $K_t = K_{t-1}+1$ with probability $\frac{\tau}{N}$ and stay unchanged with probability $1- \frac{\tau}{N}$. Therefore, $K_t$ follows the binomial distribution
    \[
        \Pr[ K_t = i ] = \binom{t}{i} \left( \frac{\tau}{N} \right)^i  \left( 1 - \frac{\tau}{N} \right)^{t-i} \qquad \forall i = 0, 1, \ldots, t.
    \]
    Now we are ready to bound \cref{eq:lowerBoundReminder}, let $\delta = \left( \frac{\sqrt{ \beta/(N \alpha) } - 1}{ \sqrt{ \beta/(N \alpha) } + 1 } \right)^2 $, then
    \begin{align*}
        \mE \left[ \sum_{i= K_t+1}^{\infty} ( w^*_i )^2 \right] &~=~ \mE \left[ \delta^{K_t + 1} \frac{1}{1 - \delta} \right]  \\
        &~=~ \frac{\delta}{1-\delta} \mE \left[ \delta^{K_t} \right]  \\
        &~=~ \frac{\delta}{1-\delta} \mE \left[ \exp ( \ln(\delta) K_t ) \right]  \\
        &~=~ \frac{\delta}{1-\delta} \left( \frac{\tau}{N} \delta + 1 -  \frac{\tau}{N} \right)^t \quad  \text{(By the moment generating function of $K_t$)} \\
        &~=~ \frac{\delta}{1-\delta} \left( 1 -  \frac{\tau}{N} \frac{4}{ \sqrt{ \beta/(N \alpha) } + 2 + \sqrt{ (N \alpha)/\beta }  } \right)^t \\
        &~\geq~ \frac{\delta}{1-\delta} \left( 1 -  \frac{\tau}{N} \frac{4}{ \min\{ \sqrt{ \beta/(N \alpha) } , 4\} } \right)^t \\
        &~=~ \frac{\delta}{1-\delta} \left( 1 -  \frac{\tau}{N} \min\left\{ \frac{4 \sqrt{N} }{ \sqrt{ \beta/\alpha }  }, 1 \right\} \right)^t.
    \end{align*}
    The above finished the proof for the case $\alpha < \beta/N$.
    
    When $\alpha \geq \beta /N$, we use similar construction. Let $m = \lfloor (1-\epsilon)(\beta)/\alpha \rfloor$ ($\epsilon$ is arbitrarily close to 0); $m$ is the largest integer such that $\beta > m \alpha$, we construct $m$ functions $\tilde{f}_i$'s instead of construct $N$ functions such that
    \[
        \tilde{f}_i(w) = \frac{\beta - m \alpha}{8} \left( w^T M^{(i)} w - \mathbf{1}_{i=1} \cdot 2 \langle e_1, w \rangle \right) + \frac{\alpha}{2} \| w \|^2 \qquad \forall i \in [m],
    \]
    where $M^{(i)}$'s are similar to previous construction except that we substitute $n$ with $m$ now. We partition the $n$ functions $f_i$'s into $m$ blocks $\Bscr_{i}, i=1,2,\ldots,m$, where
    \begin{align*}
        \Bscr_1 &~=~ \{1,2,\ldots, \lfloor n/m \rfloor \}, \\
        \Bscr_2 &~=~ \{ \lfloor n/m \rfloor+1,\lfloor n/m \rfloor+2,\ldots, 2\lfloor n/m \rfloor \}, \\
        & \dots \\
        \Bscr_m &~=~ \{ (m-1)\lfloor n/m \rfloor+1, (m-1)\lfloor n/m \rfloor+2,\ldots, n \}.
    \end{align*} 
    For any $i \in [N]$, we let
    \[
        f_i = \frac{1}{|\Bscr_j|} \tilde{f}_j  \quad \text{if $i \in \Bscr_j$}.
    \]
    It is also easy to verify that $f_i$'s are $\beta$-smooth and $\alpha$-strongly convex. Then we follow exactly the same argument of the case when $\alpha < \beta /N$. The only difference is that now $K_t$ has probability at most $\frac{ \tau \lceil N/m \rceil }{N}$ to increment by one in each iteration. Let let $\delta = \left( \frac{\sqrt{ \beta/(m \alpha) } - 1}{ \sqrt{ \beta/(m \alpha) } + 1 } \right)^2 $, the same argument gives us
    \begin{align*}
        \mE \left[ \sum_{i= K_t+1}^{\infty} ( w^*_i )^2 \right] &~\geq~ \frac{\delta}{1-\delta} \left( 1 - \min\left\{ \frac{\tau \lceil N/m \rceil }{N}  \frac{4 \sqrt{m} }{ \sqrt{ \beta/\alpha }  }, 1 \right\} \right)^t \\
        &~\geq~ \frac{\delta}{1-\delta} \left( 1 - \min\left\{   \frac{8 }{ \sqrt{m} \sqrt{ \beta/\alpha }  }, 1 \right\} \right)^t  \\
        &~\geq~ \frac{\delta}{1-\delta} \left( 1 - \min\left\{   \frac{8 \sqrt{2} }{   \beta/\alpha  }, 1 \right\} \right)^t  \quad \text{(By $ \frac{\beta}{2\alpha} \leq  m \leq \frac{\beta}{\alpha}$)}.
    \end{align*}
    The above finished the proof for the case when $\alpha \geq \beta /N$. Combine the results from the two cases, we finish the proof for \Cref{thm:lowerBound}.

\end{proof}

\end{document}